\newif\ifarxiv
\newcounter{rowcount}
\newtheorem{result}[theorem]{Result}
\newcommand{\ie}{i.e.~}
\newcommand{\eg}{e.g.~}
\newcommand{\csev}{c_7}
\newcommand{\csix}{c_6}
\newcommand{\cfiv}{c_5}
\newcommand{\cfou}{c_4}
\newcommand{\cthr}{c_3}
\newcommand{\ctwofull}{c_{2,0}}
\newcommand{\ctwoparA}{c_{2,1}^{a}}
\newcommand{\ctwoparB}{c_{2,1}^{b}}
\newcommand{\ctwoparC}{c_{2,1}^{c}}
\newcommand{\conefull}{c_{1,0}}
\newcommand{\coneparoneA}{c_{1,1}^a}
\newcommand{\coneparoneB}{c_{1,1}^b}
\newcommand{\coneparoneC}{c_{1,1}^c}
\newcommand{\conepartwoA}{c_{1,2}^a}
\newcommand{\conepartwoF}{c_{1,2}^f}
\newcommand{\coneparthrA}{c_{1,3}^a}
\newcommand{\coneparthrB}{c_{1,3}^b}
\newcommand{\coneparthrC}{c_{1,3}^c}
\newcommand{\czerfull}{c_{0,0}}
\newcommand{\czerparA}{c_{0,1}^a}
\newcommand{\czerparB}{c_{0,1}^b}
\newcommand{\czerparC}{c_{0,1}^c}
\newcommand{\cfree}{f}
\definecolor{forest}{RGB}{11,128,35}
\newcommand{\cX}{{\mathcal X}}
\newcommand{\cI}{{\mathcal I}}
\newcommand{\cams}[1]{\mathcal{C}_{#1}}
\newcommand{\obs}{\mathcal{O}}
\newcommand{\PLP}{p, l, \mathcal{I}, \obs}
\newcommand{\PLPprime}{p', l', \mathcal{I}', \obs'}
\newcommand{\cdeg}{\mathrm{cdeg}}
\newcommand{\PLkP}[1]{\text{PL${}_{#1}$P}}
\newcommand{\PLzeroP}{\PLkP{0}}
\newcommand{\PLoneP}{\PLkP{1}}
\newcommand{\PplI}{\cX_{p,l,\cI}}
\newcommand{\YplIO}{\mathcal{Y}_{p,l,\cI,\obs}}
\newcommand{\ppi}{\mathrm{\Pi}}
\newcommand{\fiberPi}{\dim(\mathrm{fiber}(\Pi))}
\newcommand{\fiberpi}{\dim(\mathrm{fiber}(\pi))}
\newcommand{\SO}{\mathrm{SO}}
\newcommand{\ZZ}{\mathbb{Z}}
\newcommand{\CC}{\mathbb{C}}
\newcommand{\PP}{\mathbb{P}}
\newcommand{\RR}{\mathbb{R}}
\newcommand{\GG}{\mathbb{G}}
\newcommand{\FF}{\mathbb{F}}
\newcommand{\yesmark}{\checkmark}
\newcommand{\nomark}{\text{\sffamily X}}
\newlength\bshft
\def\fakebold#1{\ThisStyle{\ooalign{$\SavedStyle#1$\cr%
  \kern-\bshft$\SavedStyle#1$\cr%
  \kern\bshft$\SavedStyle#1$}}}
\newcommand{{
\begingroup
\tikzstyle{every node}=[circle, draw, fill=black,inner sep=0pt, minimum width=6pt]
\tikzstyle{ghost}=[circle, draw, fill=white, opacity=0,inner sep=0pt, minimum width=4pt]
\input{nature-morte//}
\endgroup
}}[2]{{
\begingroup
\tikzstyle{every node}=[circle, draw, fill=black,inner sep=0pt, minimum width=6pt]
\tikzstyle{ghost}=[circle, draw, fill=white, opacity=0,inner sep=0pt, minimum width=4pt]
\input{nature-morte/#1/#2}
\endgroup
}}
\newlength{\singleheight}
\newlength{\singlewidth}
\newcommand{\FFF}[1]{\includegraphics[width=#1\textwidth]
{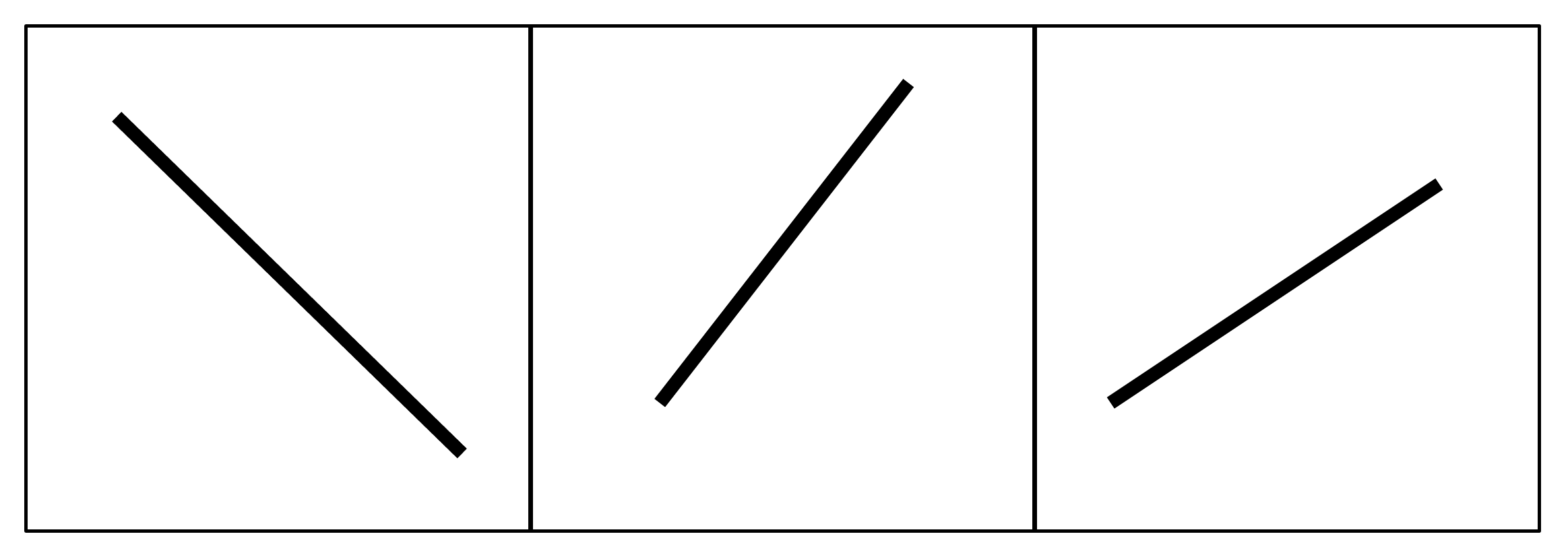}}
\newcommand{\PDoDt}[1]{\includegraphics[width=#1\textwidth]
{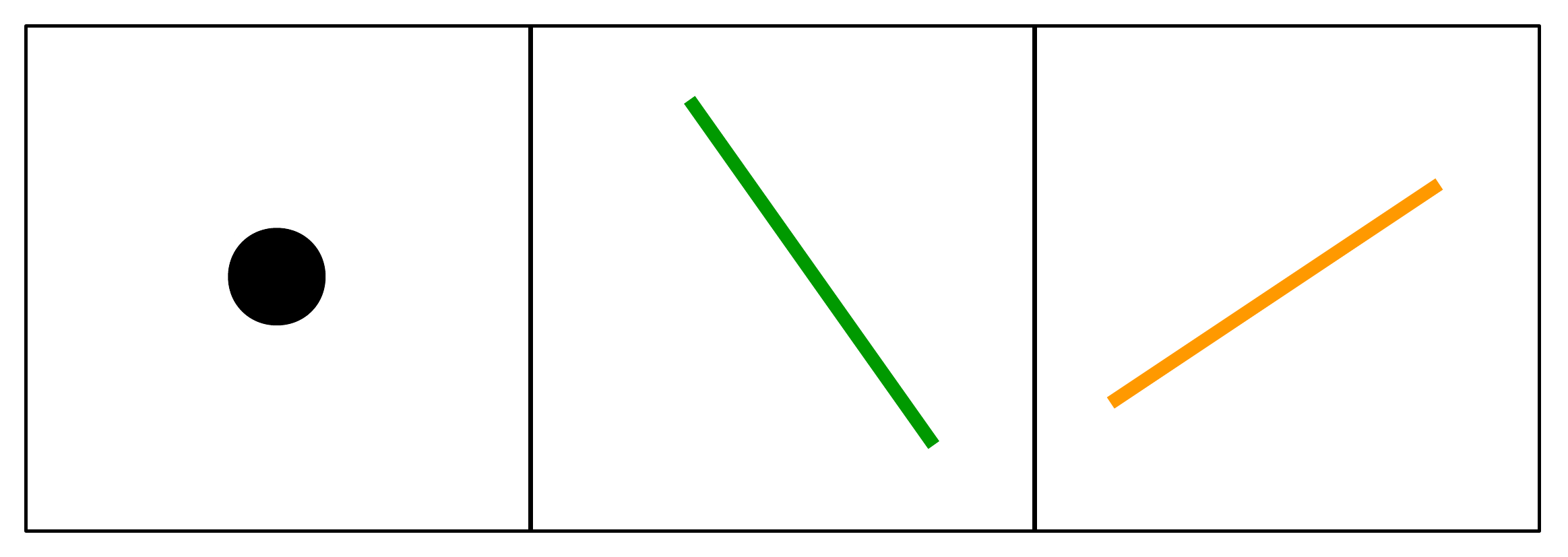}}
\newcommand{\PotDoDt}[1]{\includegraphics[width=#1\textwidth]
{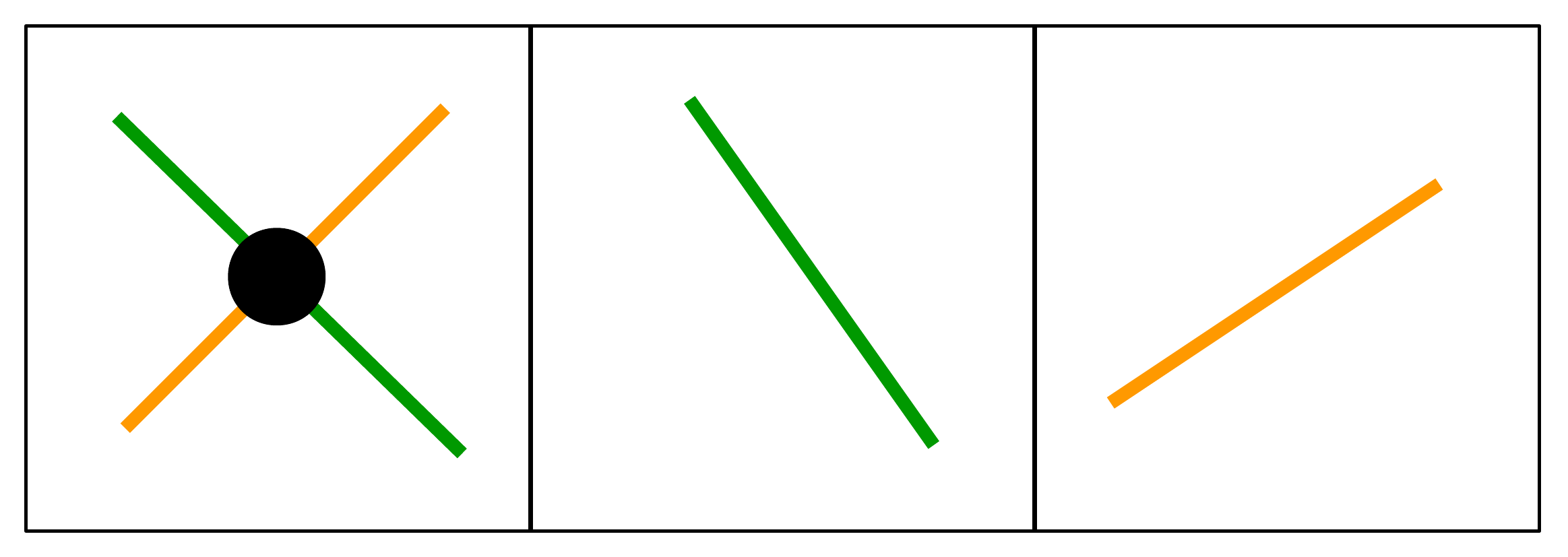}}
\newcommand{\PoDoDt}[1]{\includegraphics[width=#1\textwidth]
{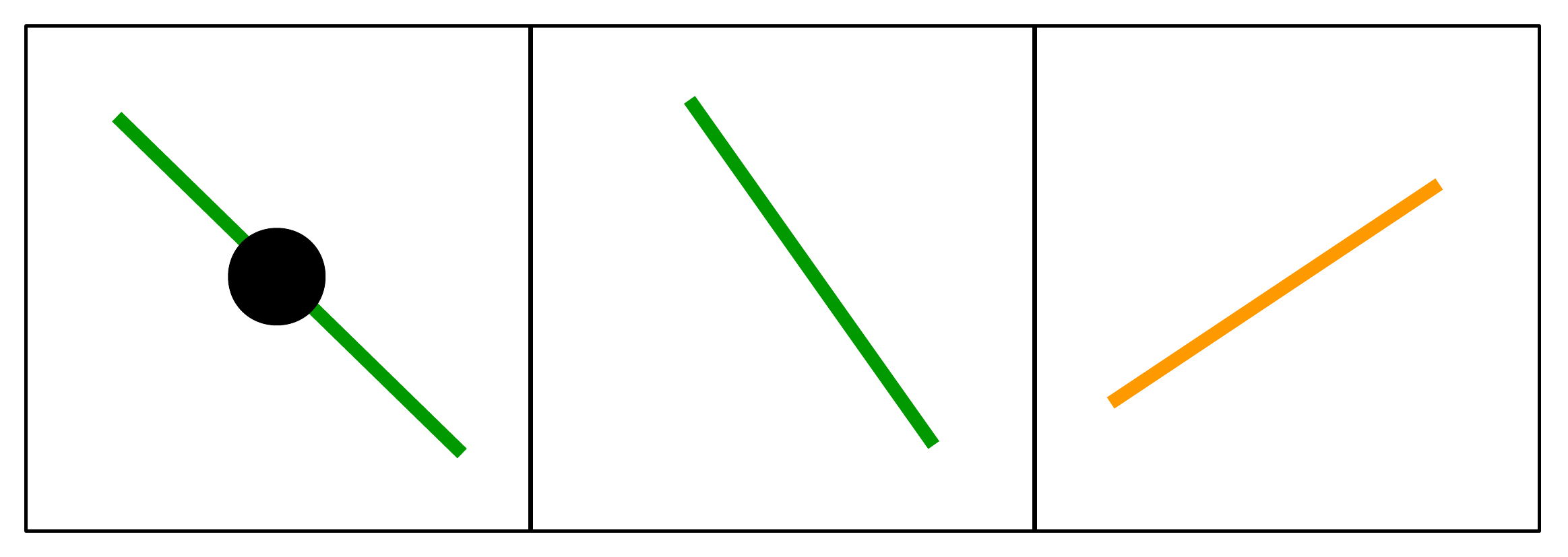}}
\newcommand{\PPDo}[1]{\includegraphics[width=#1\textwidth]
{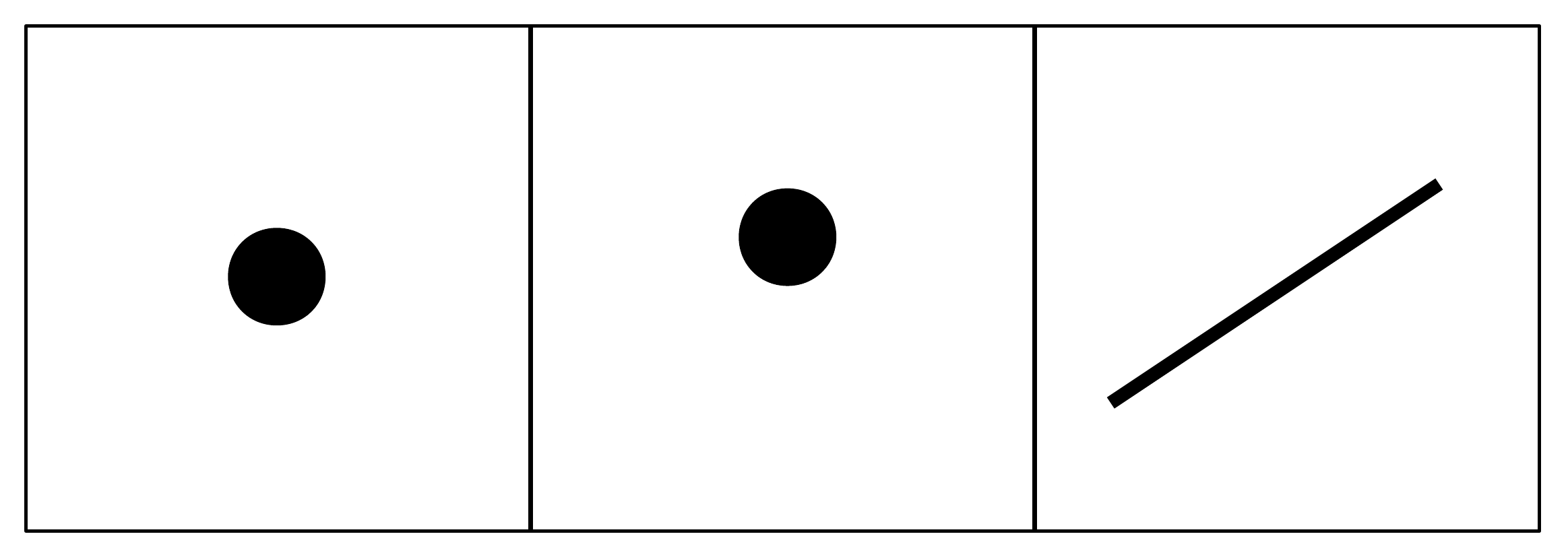}}
\newcommand{\PoPDo}[1]{\includegraphics[width=#1\textwidth]
{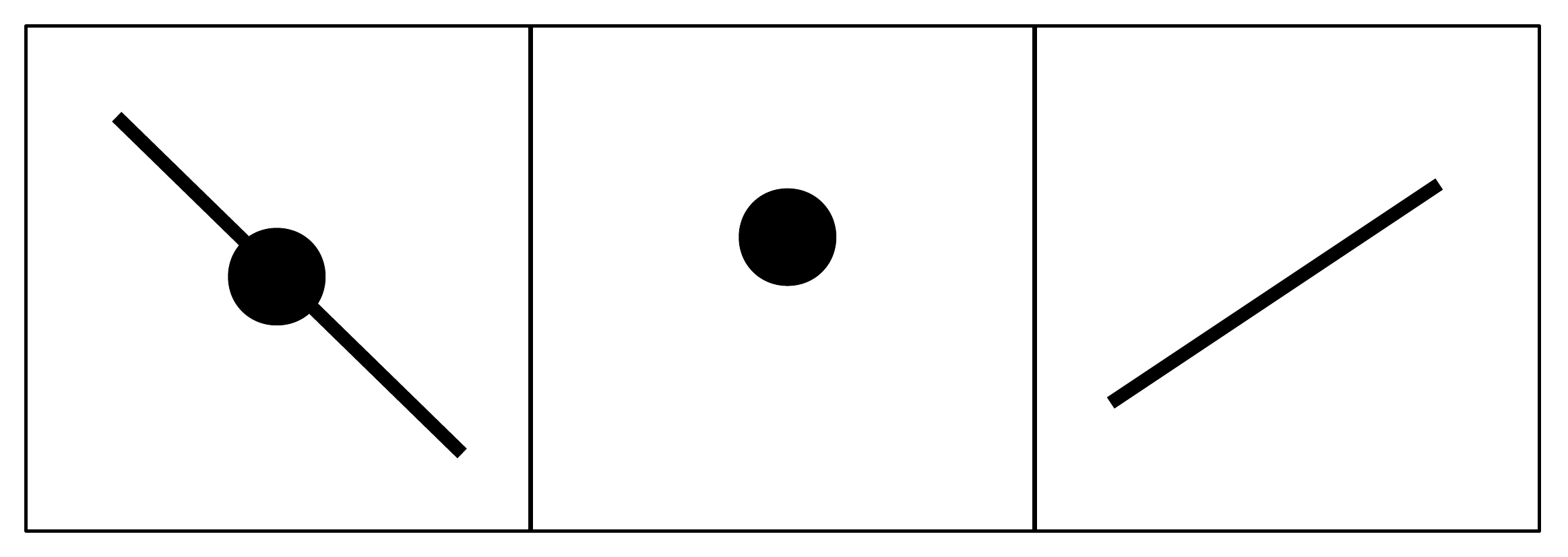}}
\newcommand{\PPoDo}[1]{\includegraphics[width=#1\textwidth]
{}}
\newcommand{\FFN}[1]{\includegraphics[width=#1\textwidth]
{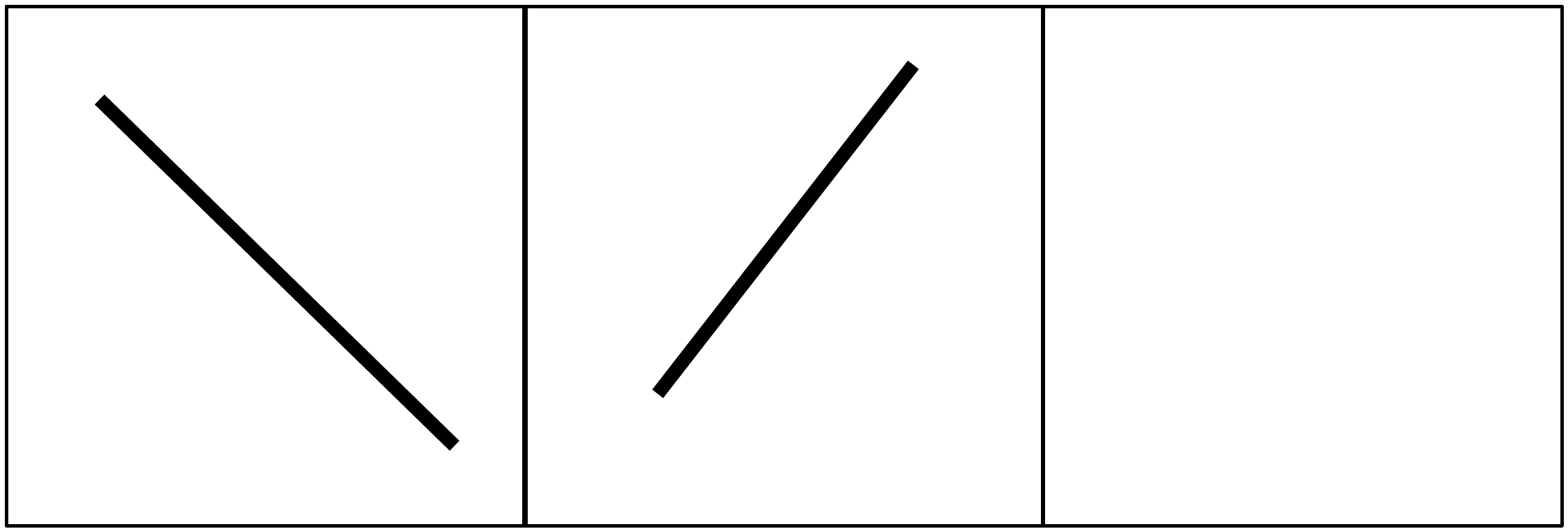}}
\newcommand{\FNN}[1]{\includegraphics[width=#1\textwidth]
{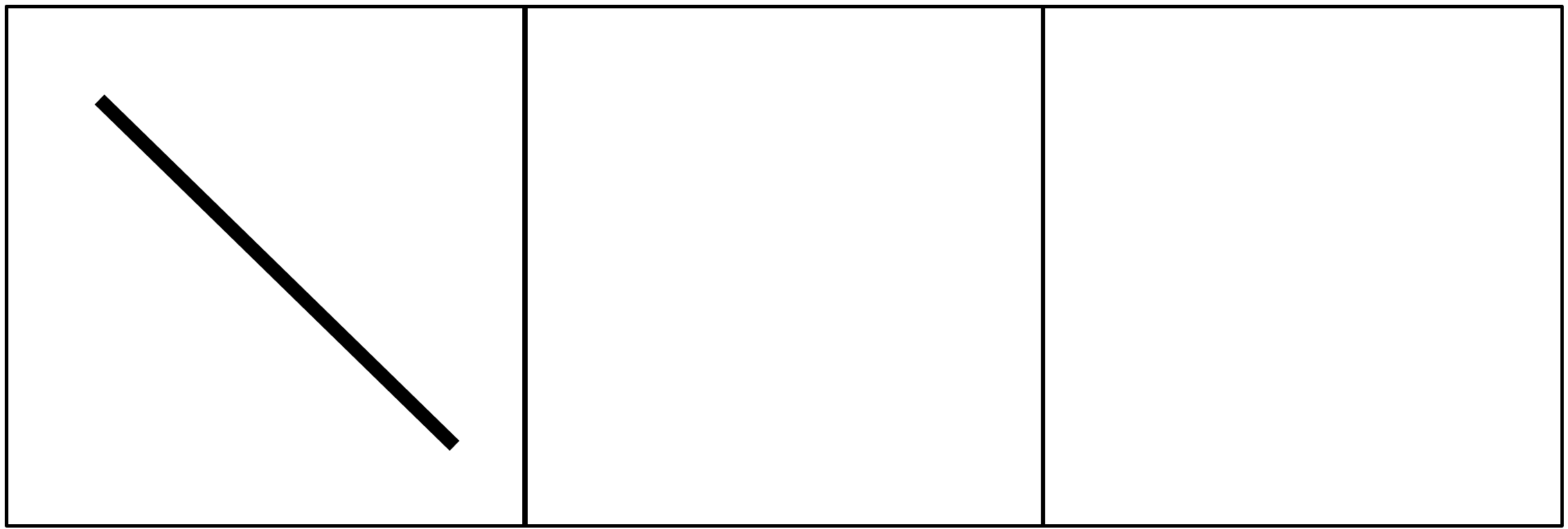}}
\newcommand{\PNN}[1]{\includegraphics[width=#1\textwidth]
{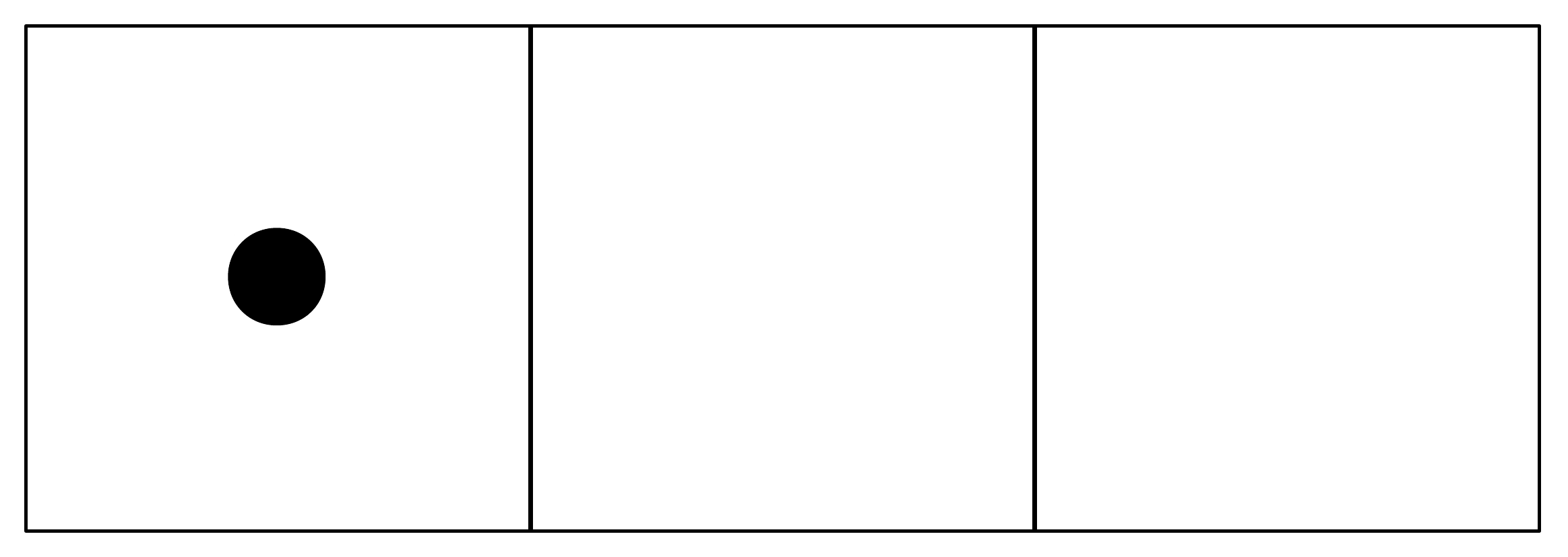}}
\newcommand{\PPP}[1]{\includegraphics[width=#1\textwidth]
{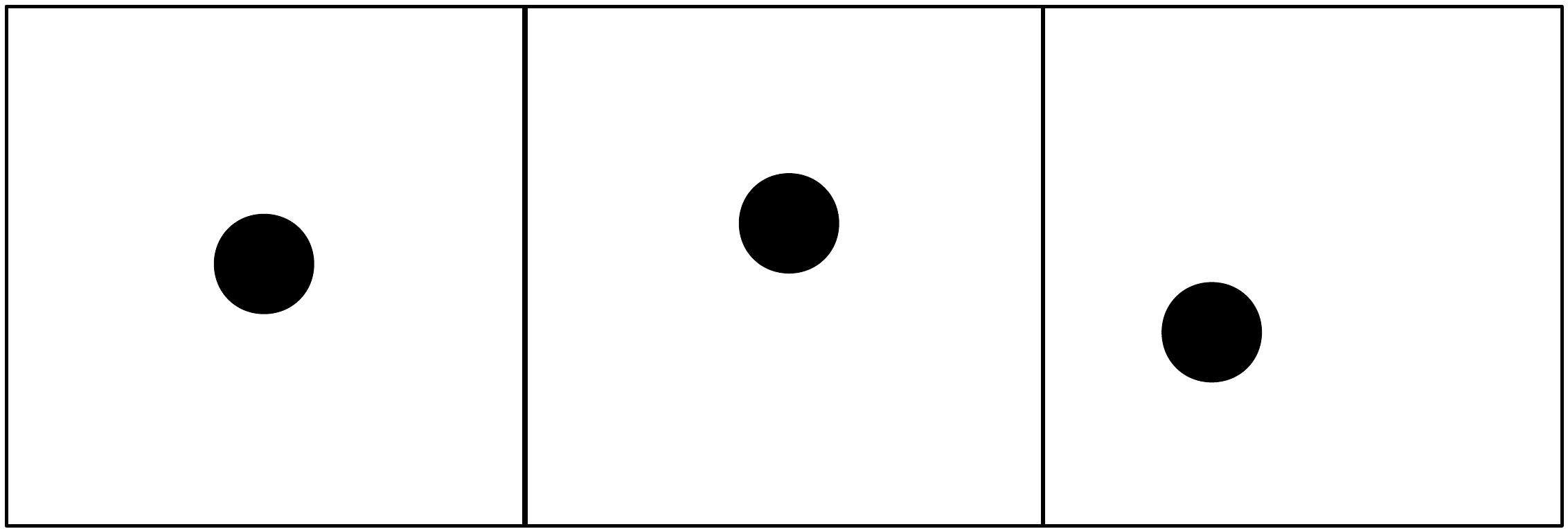}}
\newcommand{\PPN}[1]{\includegraphics[width=#1\textwidth]
{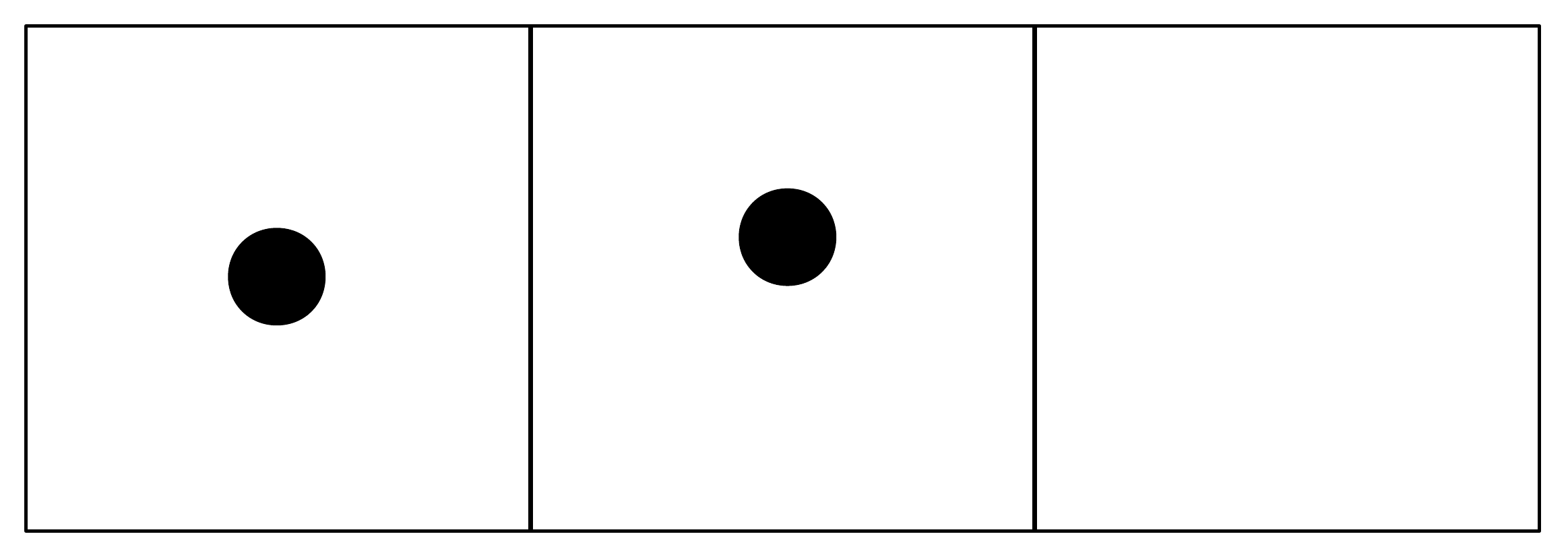}}
\newcommand{\FPN}[1]{\includegraphics[width=#1\textwidth]
{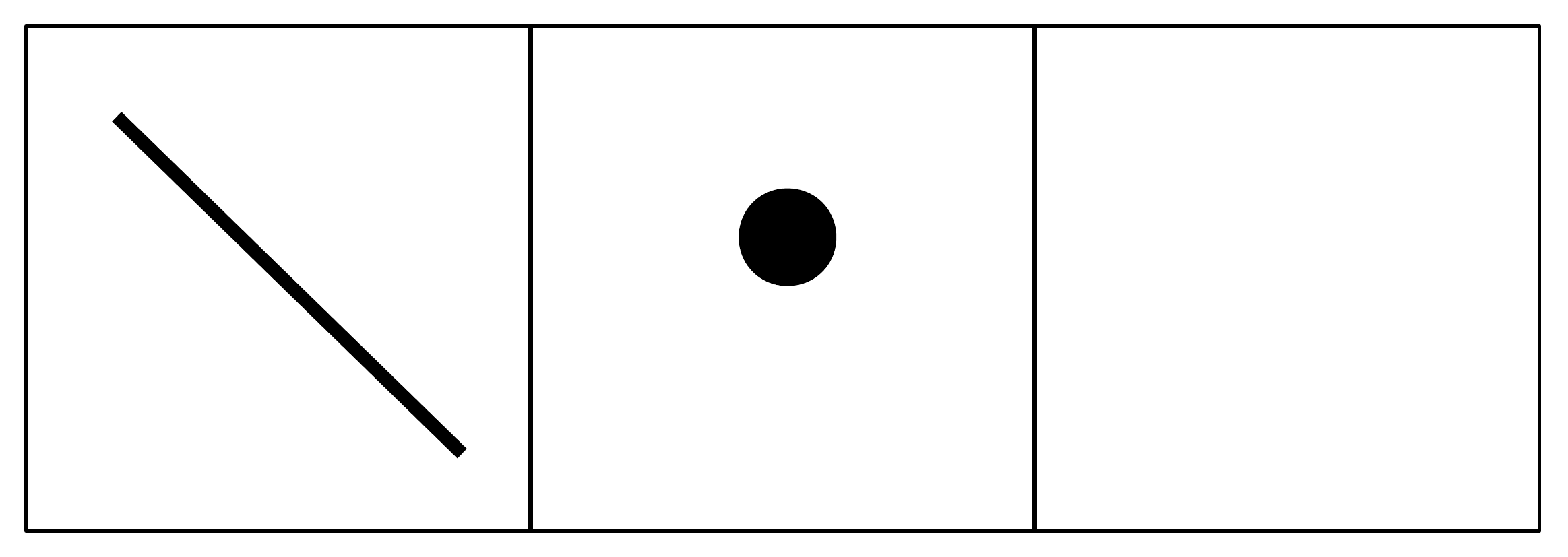}}
\newcommand{\PoFN}[1]{\includegraphics[width=#1\textwidth]
{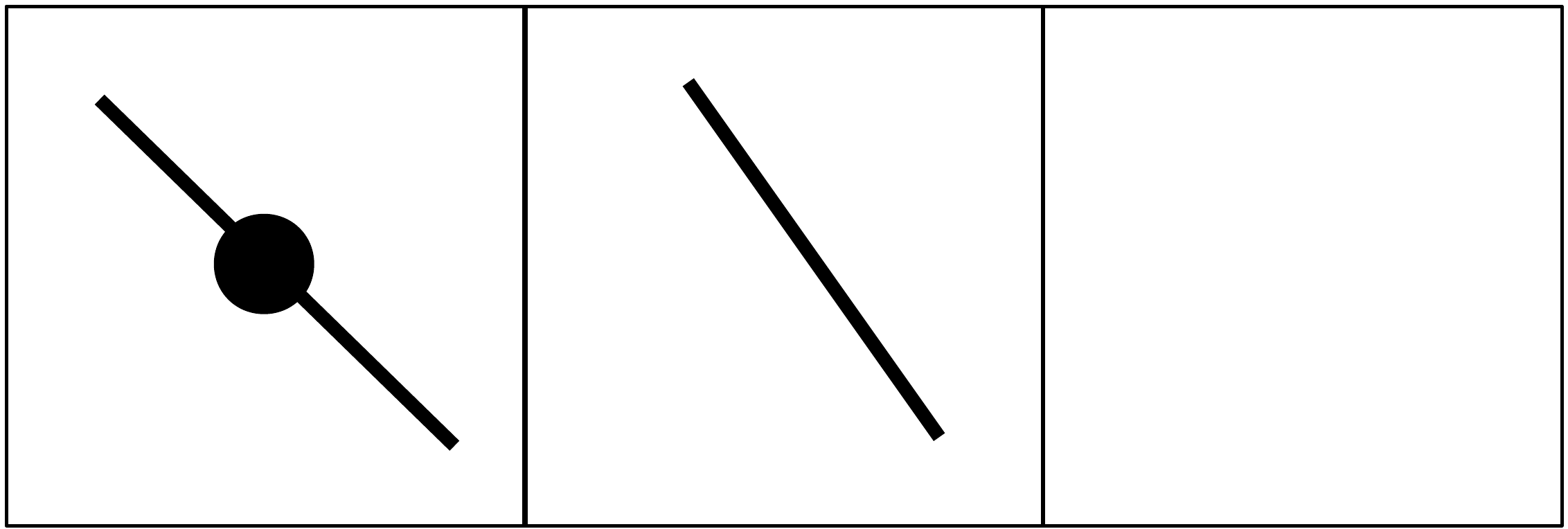}}
\newcommand{\PPF}[1]{\includegraphics[width=#1\textwidth]
{pix/dangling11.pdf}}
\newcommand{\FPF}[1]{\includegraphics[width=#1\textwidth]
{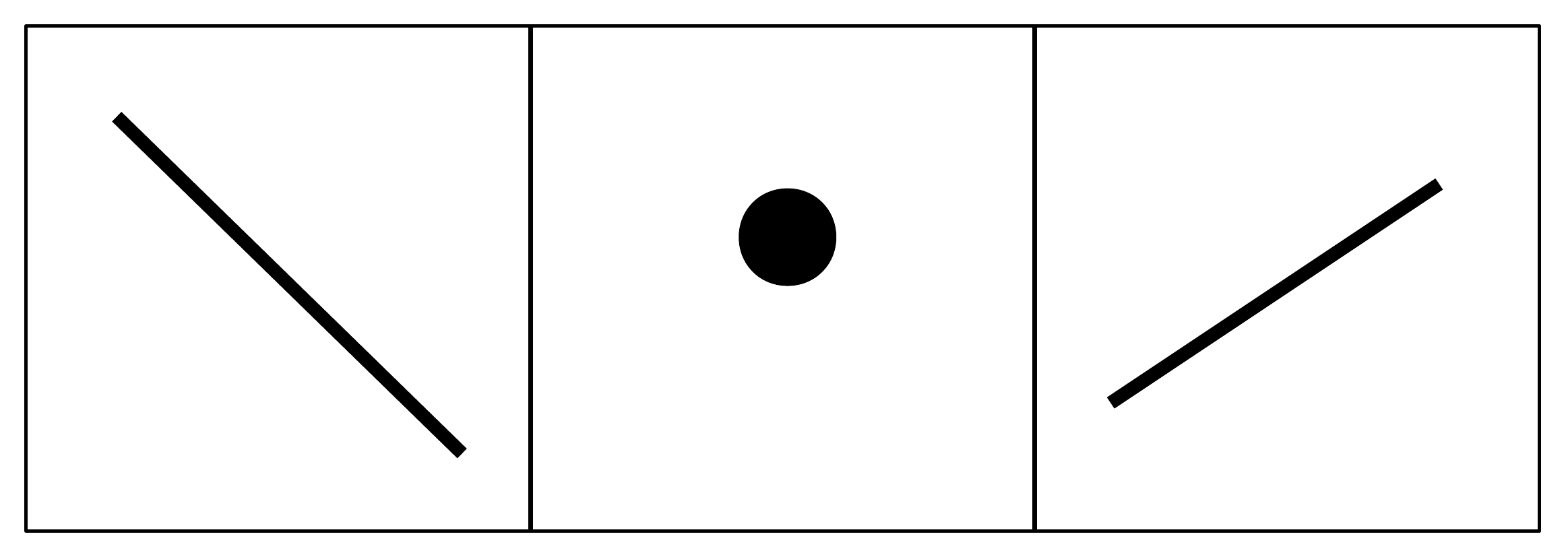}}
\newcommand{\PPoF}[1]{\includegraphics[width=#1\textwidth]
{pix/pin1param2v2.pdf}}
\newcommand{\PoFF}[1]{\includegraphics[width=#1\textwidth]
{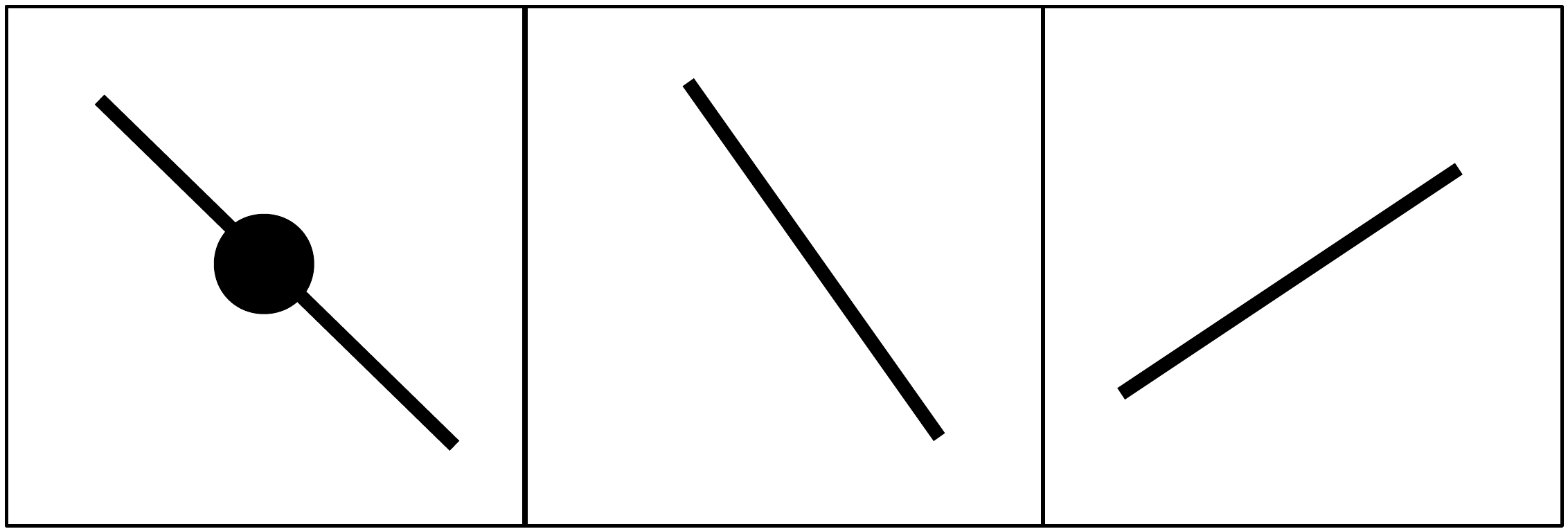}}
\newcommand{\PoPoF}[1]{\includegraphics[width=#1\textwidth]
{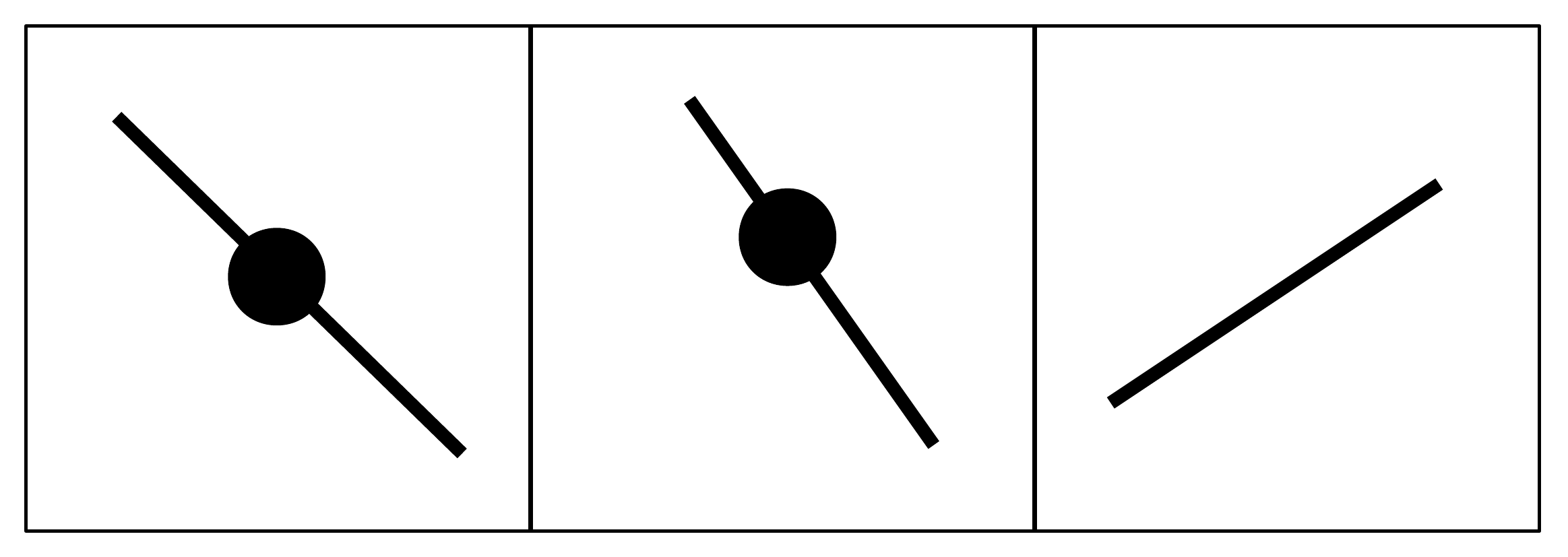}}
\newcommand{\PoPoPo}[1]{\includegraphics[width=#1\textwidth]
{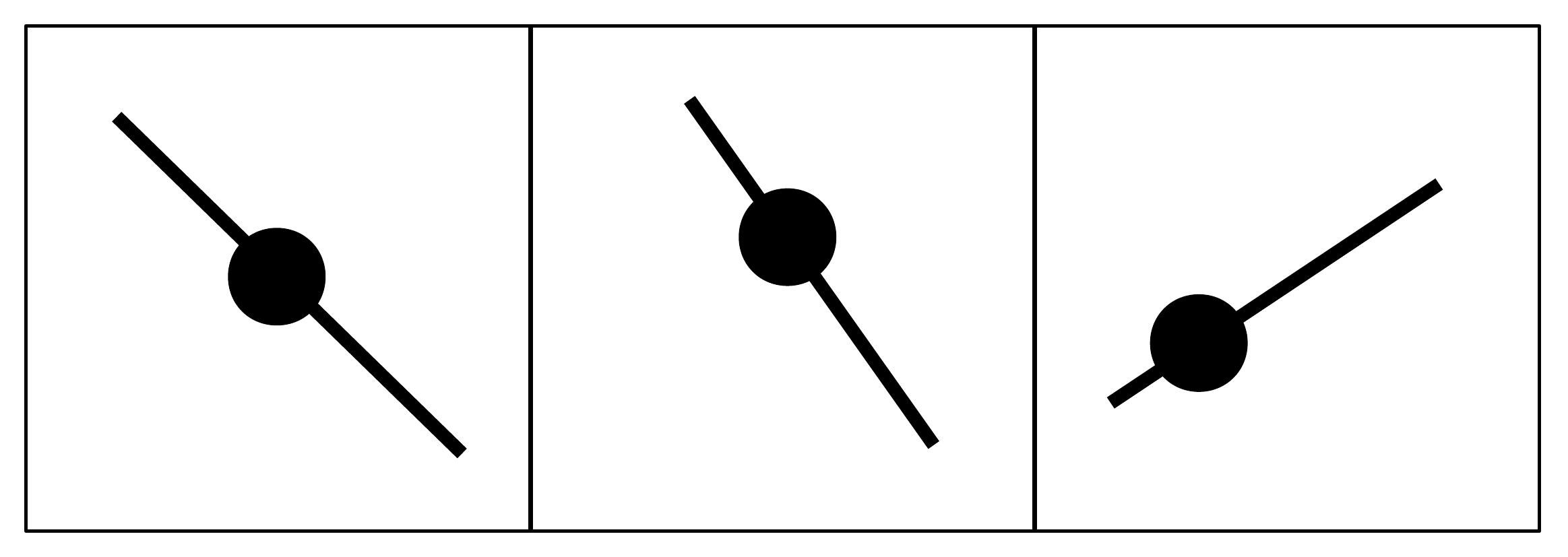}}
\newcommand{\DoDtP}[1]{\includegraphics[width=#1\textwidth]
{pix/dangling22.pdf}}
\newcommand{\DoDtPt}[1]{\includegraphics[width=#1\textwidth]
{pix/dangling21.pdf}}
\newcommand{\DoPotDt}[1]{\includegraphics[width=#1\textwidth]
{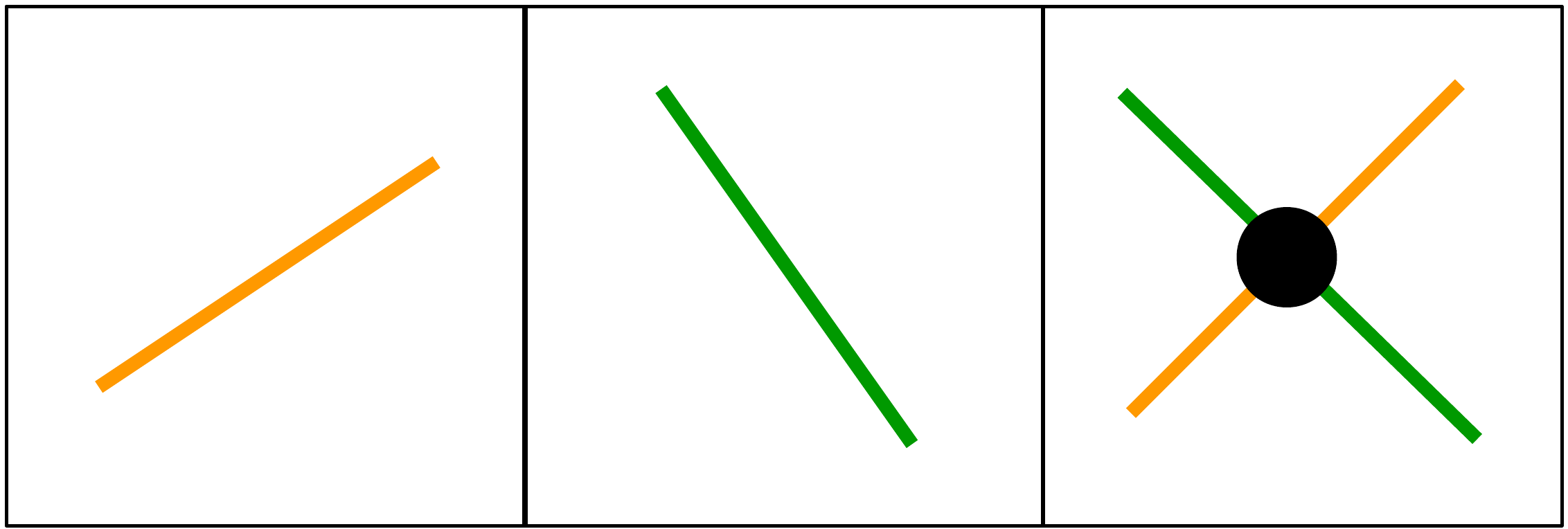}}
\newcommand{\PotPotPot}[1]{\includegraphics[width=#1\textwidth]
{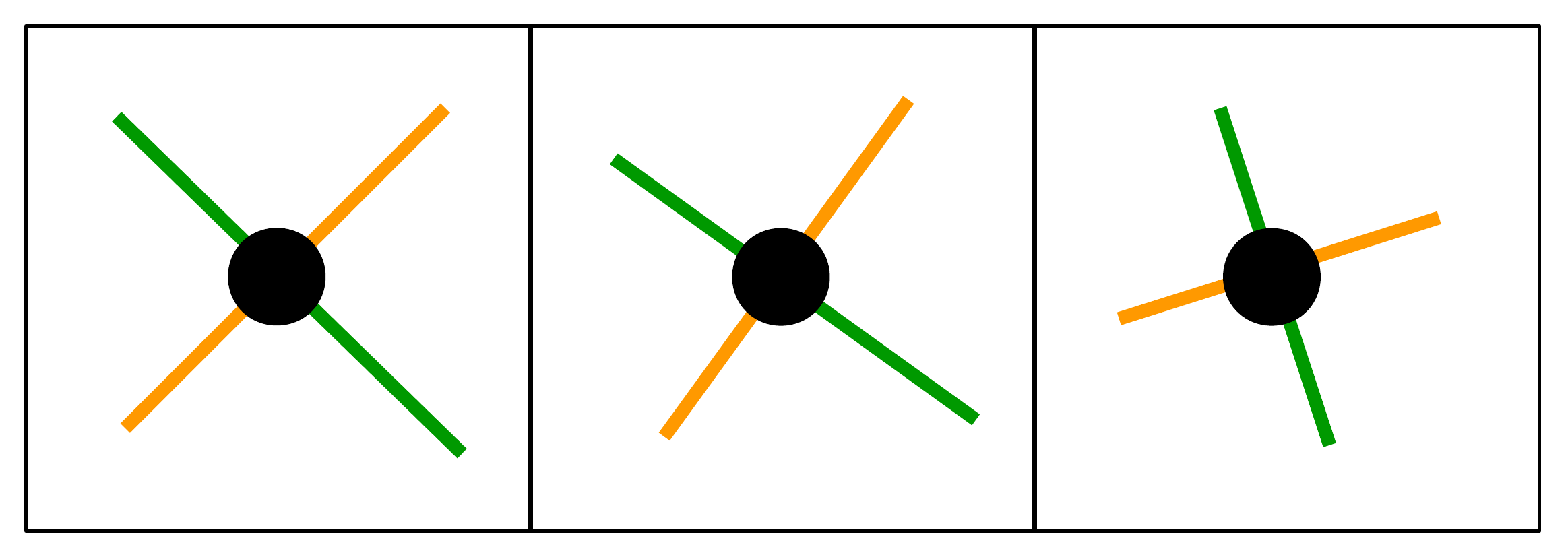}}
\newcommand{\orange}[1]{\textcolor{orange}{#1}}
\definecolor{mygreen}{HTML}{088527}
\newcommand{\green}[1]{\textcolor{mygreen}{#1}}
\newcommand{\tw}{0.075}
\begin{document}
\mainmatter
\def\ECCVSubNumber{5323}
\title{\PLoneP{} - Point-line Minimal Problems under Partial Visibility in Three Views}

\ifarxiv
\author{Timothy Duff${}^1$, Kathlén Kohn${}^2$, Anton Leykin${}^3$, Tomas Pajdla${}^4$}
\institute{${}^{1,3}$Georgia Tech Atlanta; ${}^2$KTH Stockholm; ${}^4$CIIRC, CTU in Prague}
\titlerunning{ } 
\authorrunning{ } 
\else
\author{Anonymous ECCV submission}
\institute{Paper ID \ECCVSubNumber}
\titlerunning{ECCV-20 submission ID \ECCVSubNumber} 
\authorrunning{ECCV-20 submission ID \ECCVSubNumber} 
\fi

\maketitle
\begin{abstract}
We present a complete classification of minimal problems for generic arrangements of points and lines in space observed partially by three calibrated perspective cameras when each line is incident to at most one point. This is a large class of interesting minimal problems that allows  missing observations in images due to occlusions and missed detections. There is an infinite number of such minimal problems; however, we show that they can be reduced to 140616 equivalence classes by removing superfluous features and relabeling the cameras. We also introduce camera-minimal problems, which are practical for designing minimal solvers, and show how to pick a simplest camera-minimal problem for each minimal problem. This simplification results in 74575 equivalence classes. Only 76 of these were known; the rest are new. In order to identify problems that have potential for practical solving of image matching and 3D reconstruction, we present several smaller natural subfamilies of camera-minimal problems as well as compute solution counts for all camera-minimal problems which have less than 300 solutions for generic data. 
\keywords{minimal problems, calibrated cameras, 3D reconstruction}
\end{abstract}
\section{Introduction}
\noindent Minimal problems~\cite{Nister-5pt-PAMI-2004,Stewenius-ISPRS-2006,kukelova2008automatic,Byrod-ECCV-2008,DBLP:conf/cvpr/RamalingamS08,Elqursh-CVPR-2011,mirzaei2011optimal,DBLP:conf/eccv/KneipSP12,Hartley-PAMI-2012,kuang-astrom-2espc2-13,Kuang-ICCV-2013,saurer2015minimal,ventura2015efficient,DBLP:conf/eccv/CamposecoSP16,SalaunMM-ECCV-2016,larsson2017efficient,Larsson-Saturated-ICCV-2017,larsson2017making,AgarwalLST17,Barath-CVPR-2017,Barath-CVPR-2018,Barath-TIP-2018,Miraldo-ECCV-2018},\linebreak which we study, are 3D reconstruction problems recovering camera poses and world coordinates from given images such that random input instances have a finite positive number of solutions. They are important basic computational tasks in 3D reconstruction from images
\cite{Snavely-SIGGRAPH-2006,snavely2008modeling,schoenberger2016sfm}, image matching~\cite{rocco2018neighbourhood}, visual odometry and localization \cite{Nister04visualodometry,Alismail-odometry,Sattler-PAMI-2017,taira2018inloc}. Recently, a complete characterization of minimal problems for points, lines and their incidences in calibrated multi-view geometry appeared for the case of complete multi-view visibility~\cite{PLMP}. In this paper, we extend the characterization to an important class of problems under {\em partial} multi-view visibility. 

\noindent {\bf Contribution.} We provide a complete classification of minimal problems for generic arrangements of points and lines in space observed partially by three calibrated perspective cameras when each line is incident to at most one point. 
%
There is an infinite number of such minimal problems; however, we show that they can be {\em reduced} to 140616 equivalence classes of {\em reduced minimal} problems by removing superfluous features and relabeling the cameras. We compute a full description of each class in terms of the incidence structure in 3D and visibility of each 3D feature in images. All problems in every equivalence class have the same {\em algebraic degree}, \ie the number of solutions over the complex numbers. 

When using minimal solvers to find correct image matches by RANSAC \cite{Fischler-Bolles-ACM-1981,Raguram-USAC-PAMI-2013}, we often aim to recover camera parameters only. We name such reconstruction problems {\em camera-minimal} and reserve ``minimal'' for when we aim to recover 3D structure as well. Note that minimal problems are also camera-minimal but not vice versa. For instance, 50 out of the 66 problems given in~\cite{Kileel-MPCTV-2016} are non-minimal yet they all are camera-minimal. As an example, consider the problem from~\cite{Kileel-MPCTV-2016} with 3 PPP and 1 PPL correspondences. It is camera-minimal, \ie there are 272 (in general complex) camera solutions, but it is not minimal since the line of the PPL correspondence cannot be recovered uniquely in 3D: there is a one-dimensional pencil of lines in 3D that project to the observed line in one of the images. 

For each minimal problem, we delete additional superfluous features in images that can be removed without loosing camera-minimality in order to obtain a simplest camera-minimal problem. Thus, we introduce {\em terminal camera-minimal} problems.
We show that,  up to relabeling cameras, there are 74575 of these.
They form the comprehensive list worth studying, since a solver for any camera-minimal problem can be derived from a solver for some problem on this list.  
Only 76 of the 74575 terminal camera-minimal problems were known --- the 66 problems listed in \cite{Kileel-MPCTV-2016} plus 10 additional cases from~\cite{PLMP}  --- the remaining 74499, to the best of our knowledge, are new! We find all terminal camera-minimal problems with less than 300 solutions for generic data and present other interesting cases that might be important for practical solving of image matching and 3D reconstruction.

Characterizing minimal problems under partial visibility, which allows for missing observations in images due to occlusions and missed detections, is very hard. Previous results in~\cite{PLMP} treat the case of full visibility with no restrictions on the number of cameras and types of incidences, resulting in 30 minimal problems. By contrast, we construct a long list of interesting problems under partial visibility, even with our restrictions, \ie having exactly three cameras and having each line incident to at most one point\footnote{Under this restriction, in two cameras, the only reduced (camera-)minimal problem is the five-point problem; see Supplementary Material (SM) for an explanation.}.
These restrictions make the task of enumerating problems tractable while making it still possible to account for very practical incidence cases where several existing feature detectors are applicable. For instance, SIFT~\cite{DBLP:journals/ijcv/Lowe04} and LAF~\cite{DBLP:conf/icpr/MatasOC02} provide quivers (points with one direction attached), which can be interpreted as lines through the points and used to compute relative camera poses~\cite{Fabri-ArXiv-2019}.
\section{Previous work}\label{sec:previous-work}
\noindent A large number of minimal problems appeared in the literature. See references above and~\cite{Larsson-CVPR-2018,kukelova2017clever,Kileel-MPCTV-2016,PLMP} and references therein for work on general minimal problems. Here we review the most relevant work for minimal problems in three views related to point-line incidences and their classification.

Correspondences of non-incident points and lines in three uncalibrated views are considered in early works on the trifocal tensor~\cite{Hartley-IJCV-1997}. Point-line incidences in the uncalibrated setup are introduced in~\cite{Johansson-ICVGIP-2002} as n-quivers (points incident with n lines) and minimal problems for three 1-quivers in three affine views and three 3-quivers in three perspective views are derived. General uncalibrated multi-view constraints for points, lines and their incidences are presented in~\cite{MaHVKS-IJCV-2004}. Non-incident points and lines in three uncalibrated images also appear in~\cite{Oskarsson-IVC-2004}. The cases of four points and three lines, two points and six lines, and nine lines are studied; \cite{Larsson-Syzygy-CVPR-2017} constructs a solver for nine lines. Work \cite{Fabbri-IJCV-2016} looks at lines incident to points which arise from tangent lines to curves and~\cite{Fabri-ArXiv-2019} presents a solver for that case. Results~\cite{JoswigKSW16,AgarwalLST17,AholtO14,Aholt-1107-2875,Trager-PhD-2018,Ponce-IJCV-2016,PLMP} introduced some of the techniques that are useful for classifying  classes of minimal problems.

{\bf The most relevant previous work} is~\cite{Kileel-MPCTV-2016} and~\cite{PLMP}. Work~\cite{Kileel-MPCTV-2016} classifies camera-minimal problems in three calibrated views that can be formulated using linear constraints on the trifocal tensor~\cite{HZ-2003}. It presents 66 camera-minimal problems. These are all covered in our classification and are all terminal camera-minimal. Among them there are 16 reduced minimal problems out of which 2 are with full visibility and 14 are with partial visibility. The remaining 50 problems are not minimal.  

A complete characterization of minimal problems for points, lines and their incidences in calibrated multi-view geometry for the case of complete multi-view visibility is presented in~\cite{PLMP}. It gives 30 minimal problems. Among them, 17 problems include exactly three cameras but only 12 of them ($3002_1$, $3002_2$, $3010_0$, $2005_3$, $2005_4$, $2005_5$, $2013_2$, $2013_3$, $2021_1$, $1024_4$, $1032_2$, $1040_0$ in Tab.~1 of~\cite{PLMP}) meet our restrictions on incidences. These 12 cases are all terminal camera-minimal as well as reduced minimal. Notice that the remaining 5 problems  ($3100_0$, $2103_1$, $2103_2$, $2103_3$, $2111_1$ in Tab.~1 of~\cite{PLMP}) are not considered in this paper because they include three collinear points in 3D and collinearity of more than two points cannot be modeled without including a line going through more than one point.

This paper can be seen as an extension of~\cite{Kileel-MPCTV-2016} and~\cite{PLMP} to a much larger class of problems in three calibrated views under partial multi-view visibility. 

\section{Problem Specification}
Our main result applies to problems in which points, lines, and point-line incidences are partially observed.
We will model intersecting lines by requiring that each intersection point of two lines has to be one of the points in the point-line problem.
\begin{definition} \label{def:PLP}
\rm
A \emph{point-line problem} is a tuple $(\PLP)$ specifying that $p$ points and $l$ lines in space satisfy a given incidence relation $$\mathcal{I}\subset \{ 1, \ldots , p \} \times \{ 1, \ldots , l \},$$ 
where $(i,j)\in \mathcal{I}$ means that the $i$-th point is on the $j$-th line, and are projected to $m=|\obs|$ views with
$$\obs = ((\mathcal{P}_1, \mathcal{L}_1), \ldots, (\mathcal{P}_m, \mathcal{L}_m))$$ describing which points and lines are observed by each camera ---
view $v$ contains exactly the points in $\mathcal{P}_v \subset \lbrace 1, \ldots, p \rbrace$ 
and the lines in $\mathcal{L}_v \subset \lbrace 1, \ldots, l \rbrace$.

For  $\mathcal{I}$ we assume \emph{realizability} (the incidence relations are realizable by some point-line arrangement in $\RR^3$) and \emph{completeness} (every incidence which is automatically implied by the incidences in $\cI$ must also be contained in $\cI$).

For  $\mathcal{O}$ we assume that if a camera observes two lines that meet according to $\mathcal{I}$ then it observes their point of intersection.
\end{definition}
Note that, for instance, the realizability assumption implies that two distinct lines cannot have more than one point in common.
Our assumption on $\mathcal{O}$ is natural ---
the set $\mathcal{I}$ of incidences describes all the knowledge about which lines intersect in space, as well as in the images.
An \emph{instance} of a point-line problem is specified by the following data:

(1) A point-line arrangement in space consisting of $p$ points $X_1,\ldots,X_p$ and $l$ lines $L_1,\ldots,L_l$ in $\PP^3$ which are incident exactly as specified by $\mathcal{I}$.
Hence, the point $X_i$ is on the line $L_j$ if and only if $(i,j)\in \mathcal{I}$. We write 
\[
\PplI = \left\{ (X, L) \in \left( \PP^3 \right)^p  \times  \left( \GG_{1,3} \right)^{l} \mid \forall (i,j) \in \mathcal{I}\, \colon X_i \in L_j   \right\}
\]
for the associated \emph{variety of point-line arrangements}.
Note that this variety also contains degenerate arrangements,
where not all points and lines have to be pairwise distinct
or where there are more incidences between points and lines than those specified by $\mathcal{I}$.

(2) A list of $m$ calibrated cameras which are represented by matrices \[
P_1 = [R_1 \mid t_1], \ldots, P_m = [R_m \mid t_m]
\]
with $R_1, \ldots, R_m \in \SO (3)$ and $t_1, \ldots, t_m \in \RR^{3}$. 

(3) The \emph{joint image} consisting of the projections $ \lbrace x_{v,i} \mid i \in \mathcal{P}_v \rbrace  \subset \PP^2$ of the points $X_1, \ldots, X_p$ and the projections $ \lbrace \ell_{v,j} \mid j \in \mathcal{L}_v \rbrace \subset \GG_{1,2}$ of the lines $L_1, \ldots,L_l$ by the cameras $P_1,\ldots,P_m$
to the  views $v = 1,\ldots,m$. 
We denote by $\rho = \sum_{v=1}^m |\mathcal{P}_v|$
and $\lambda = \sum_{v=1}^m |\mathcal{L}_v|$
the total numbers of observed points and lines, and write
$$
\YplIO = \left\{ (x, \ell) \in \left(\PP^2 \right)^{\rho} \times \left( \GG_{1,2} \right)^{\lambda} \;\middle\vert\;
\begin{array}l
      \forall v=1,\ldots,m \;   
\forall i \in \mathcal{P}_v \;
\forall j \in \mathcal{L}_v: \\
 (i,j) \in \mathcal{I}
\Rightarrow
x_{v, i} \in \ell_{v, j}
\end{array}
\right\} 
$$
for the \emph{image variety}
which consists of all $m$-tuples of 2D-arrangements of the points and lines specified by $\obs$ which satisfy the incidences specified by~$\cI$.
We note that an $m$-tuple in $\YplIO$ is not necessarily a joint image of a common point-line arrangement in $\PP^3$.

Given a joint image, we want to recover an arrangement in space and cameras yielding the given joint image. We refer to a pair of such an arrangement and such a list of $m$ cameras as a \emph{solution} of the point-line problem for the given joint image.

To fix the arbitrary space coordinate system~\cite{HZ-2003}, we set $P_1 = [I\,|\,0]$ and the first coordinate of $t_2$ to $1$. So our \emph{camera configurations} are parameterized by
$$
\cams{m} = \left\{ (P_1, \ldots, P_m) \in \left(\RR^{3\times 4}\right)^{m} \;\middle\vert\; \begin{array}{l}
     P_i = [R_i \mid t_i], \,
R_i \in \SO (3), \,  t_i \in \RR^3,\\ R_1 = I, \, t_1 = 0, \, t_{2,1}=1
\end{array}
\right\}.
$$
We will always assume that the camera positions in an instance of a point-line problem are sufficiently generic such that the 
points and lines in the views are in generic positions with respect to the specified incidences $\mathcal{I}$. 

We say that a point-line problem is \emph{minimal} if a generic image tuple in $\YplIO$ has a nonzero finite number of solutions.
We may phrase this formally:
\begin{definition}
\label{def:minimal}
\rm
Let $\Phi_{\PLP}: \PplI \times \cams{m} \dashrightarrow \YplIO$
denote the \emph{joint camera map}, which sends a point-line arrangement in space and $m$ cameras to the resulting joint image. 
We say that the point-line problem $(\PLP)$ is \emph{minimal}~if

\noindent $\bullet\  \Phi_{\PLP}$ is a \emph{dominant map}\footnote{In birational geometry, dominant maps are analogs of surjective maps.}, \ie a generic element $(x, \ell)$ in $\YplIO$ has a solution, so $\Phi_{\PLP}^{-1} (x, \ell) \neq \emptyset$, and

\noindent $\bullet\ $the preimage $\Phi_{\PLP}^{-1} (x, \ell)$ of a generic element $(x, \ell)$ in $\YplIO$ is finite.
\end{definition}

\begin{remark}
\label{remark:perturbation}
We require the joint camera map in Definition~\ref{def:minimal} to be dominant because we want solutions to minimal problems to be stable under perturbation of the image data that preserves the incidences $\mathcal{I}$. A classical example of a problem which is not stable under perturbation in images is the problem of four points in three calibrated views~\cite{Nister-IJCV-2006}.
\end{remark}

\noindent
Over the complex numbers, the cardinality of the preimage $\Phi_{\PLP}^{-1} (x, \ell)$ is the same for every \emph{generic} joint image $(x, \ell)$ of a minimal point-line problem $(\PLP)$.
We refer to this cardinality as the \emph{degree} of the minimal problem.

In many applications, one is only interested in recovering the camera poses, and not the points and lines in 3D.
Hence, we say that a point-line problem is \emph{camera-minimal} if, given a generic image tuple in $\YplIO$, it has a nonzero finite number of possible camera poses. Formally, this means:

\begin{definition} \rm
\label{def:camera-minimal}
Let $\gamma: \PplI  \times \cams{m} \to \cams{m}$ denote the projection onto the second factor.
We say that the point-line problem $(\PLP)$ is \emph{camera-minimal} if

\noindent $\bullet\ $ its joint camera map $\Phi_{\PLP}$ is dominant, and

\noindent $\bullet\ \gamma(\Phi_{\PLP}^{-1}(x,\ell))$ is finite for a generic element $(x, \ell)$ in $\YplIO$.

\noindent
The cardinality over $\CC$ of a generic $\gamma(\Phi_{\PLP}^{-1}(x,\ell))$ is the {\em camera-degree} of $(\PLP)$.
\end{definition}

\begin{remark}
Every minimal point-line problem is camera-minimal, but not necessarily the other way around. 
In the setting of complete visibility (i.e. where every camera observes all points and all lines), both notions coincide~\cite[Cor.~2]{PLMP}.
\end{remark}  
In~\cite{PLMP}, all minimal point-line problems with complete visibility are described, including their degrees. 
It is a natural question if one can extend the classification in~\cite{PLMP} to all point-line problems with \emph{partial visibility}.
A first obstruction is that there are minimal point-line problems for arbitrarily many cameras\footnote{See SM discussion of camera registration.},
whereas the result in~\cite{PLMP} shows that minimal point-line problems with complete visibility exist only for at most six views.
Moreover, as we see in the following sections, deriving a classification for partial visibility seems more difficult and involves more elaborate tools.
Hence, in this article, we only aim for classifying point-line problems \emph{in three views}\footnote{See SM for a discussion on two views.}.
We also restrict our attention to point-line problems satisfying the following assumption:
\begin{definition} \label{def:PL1P} \rm 
We say that a point-line problem is a \emph{\PLkP{1}} if 
each line in 3D is incident to at most one point.
\end{definition}

\noindent This assumption makes our analysis easier, since the point-line arrangement in space of a {\PLoneP} is a collection of the following independent \emph{local features}: 

\noindent $\bullet\ $ \emph{free line} (\ie a line which is not incident to any point), and

\noindent $\bullet\ $ \emph{point with $k$ pins} where $k = 0, 1, 2, \ldots$ (\ie a point with $k$ incident lines).

\noindent $\quad\ $In the following, we shortly write \emph{pin} for a line passing through a point.
We stress that 
a pin refers only to the line itself, 
rather than the incident point.
A first consequence of restricting our attention to \PLoneP{}s is the following fact, which fails for general point-line problems\footnote{See SM for an example.}.

\begin{lemma}
\label{lem:degs-equal}
The degree and camera-degree of a minimal \PLoneP{} coincide.
\end{lemma}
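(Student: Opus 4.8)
The plan is to show that for a minimal \PLoneP{}, recovering the cameras uniquely determines the 3D point-line arrangement, so that the map $\gamma$ restricted to $\Phi_{\PLP}^{-1}(x,\ell)$ is injective on generic fibers; combined with camera-minimality (which follows from minimality), this forces the degree and camera-degree to agree. Concretely, fix a generic joint image $(x,\ell) \in \YplIO$ and a solution $(X,L,P_1,\ldots,P_m) \in \Phi_{\PLP}^{-1}(x,\ell)$. I want to argue that $(X,L)$ is the unique arrangement compatible with the given cameras $P_1,\ldots,P_m$ and the given image data. It suffices to check this feature-by-feature, using the decomposition of a \PLoneP{} arrangement into independent local features (free lines and points-with-$k$-pins) provided in the excerpt.

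First I would handle points: a world point $X_i$ observed in at least two views is the intersection of the back-projected rays from its images $x_{v,i}$; since the image data and cameras are fixed and generic, these rays meet in a single point, so $X_i$ is determined (if $X_i$ is observed in only one view it is a superfluous feature and, by minimality/genericity considerations, such features do not occur in a minimal problem — or, more carefully, one reduces to the case where every point is seen at least twice). Next, free lines and pins: a world line $L_j$ observed in at least two views is recovered as the intersection of the two back-projected planes through $\ell_{v,j}$, hence is determined; a line observed in only one view is recovered only up to a pencil, EXCEPT when it is a pin, i.e. it passes through a point $X_i$ that is already determined — then $L_j$ is the unique line through $X_i$ whose image in that view is $\ell_{v,j}$. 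The key point is that in a \PLoneP{} every line is either free or a single pin, so there is no interaction between incidences that could obstruct this reconstruction; this is exactly where the \PLoneP{} hypothesis is used, and it is what fails for general point-line problems (a line through two points seen in one view each would behave differently). For a minimal problem the joint camera map is dominant, so a generic $(x,\ell)$ genuinely admits solutions, and the above shows that for each such solution the cameras determine the arrangement — I would also note that a line seen in $\le 1$ view and not carrying a pin cannot appear in a minimal problem, since it would contribute a positive-dimensional family of solutions, contradicting finiteness of $\Phi_{\PLP}^{-1}(x,\ell)$.

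Putting this together: the map $\gamma|_{\Phi_{\PLP}^{-1}(x,\ell)} \colon \Phi_{\PLP}^{-1}(x,\ell) \to \gamma(\Phi_{\PLP}^{-1}(x,\ell))$ is a bijection for generic $(x,\ell)$, because the cameras plus the fixed image data reconstruct $(X,L)$ uniquely. Since minimality gives that $\Phi_{\PLP}^{-1}(x,\ell)$ is a nonempty finite set and that $\Phi_{\PLP}$ is dominant, the problem is camera-minimal, and the two cardinalities — degree $= |\Phi_{\PLP}^{-1}(x,\ell)|$ and camera-degree $= |\gamma(\Phi_{\PLP}^{-1}(x,\ell))|$ — are equal.

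I expect the main obstacle to be the bookkeeping around features observed in too few views: one must argue cleanly that a minimal \PLoneP{} cannot contain a line seen in at most one view that is not a pin (else the fiber is positive-dimensional), and that a point seen in at most one view is likewise excluded or harmless; making this precise may require invoking the dimension count implicit in Definition~\ref{def:minimal} and the genericity of the camera configuration, rather than just waving at "superfluous features." Everything else — intersecting two rays, intersecting two planes, intersecting a plane with the line of sight to a known point — is elementary projective geometry over $\CC$ and goes through for generic data.
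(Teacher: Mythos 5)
Your overall strategy coincides with the paper's: for a generic image $(x,\ell)$, show that each camera solution in $\gamma(\Phi_{\PLP}^{-1}(x,\ell))$ extends to a \emph{unique} world arrangement, so that $\gamma$ is injective on the fiber and degree equals camera-degree. The part about lines seen in at least two views (intersection of two back-projected planes) also matches. However, two steps in your case analysis are genuinely wrong. First, the claim that a pin observed in a single view is uniquely recovered as ``the unique line through $X_i$ whose image in that view is $\ell_{v,j}$'' is false: the 3D lines through $X_i$ projecting to $\ell_{v,j}$ form the pencil of lines through $X_i$ inside the back-projected plane of $\ell_{v,j}$, a one-dimensional family. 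The paper's argument is the opposite of yours: \emph{any} line seen at most once --- pin or not --- produces an at least one-dimensional family of solutions, contradicting finiteness of $\Phi_{\PLP}^{-1}(x,\ell)$; hence in a minimal \PLoneP{} \emph{every} line is observed at least twice, and only then is it uniquely recoverable from the cameras. Taken at face value, your claim would also force degree $=$ camera-degree for problems with dangling pins, which the paper shows is exactly where the two notions diverge.

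Second, your treatment of points observed in at most one view does not work. Such points are neither absent from minimal \PLoneP{}s nor removable by a reduction: for instance the local features counted by $\ctwoparA,\ctwoparB,\ctwoparC$ in Table~\ref{tab:localFeatures} (a point with two pins seen in all views but the point itself visible in only one view) occur in reduced minimal \PLoneP{}s, and the asserted reduction ``to the case where every point is seen at least twice'' is unjustified (in the paper's development it would moreover be circular, since the reduction theorems rely on this lemma). Your two-ray intersection argument therefore does not cover all points, and you would need extra cases (point on one reconstructed pin plus one ray, point as intersection of two reconstructed pins, etc.). The paper avoids this bookkeeping entirely: once the cameras and all lines are fixed, the constraints cutting out the point coordinates are \emph{linear}, so the set of compatible point arrangements is a linear space; it is either a single point or infinite, and minimality rules out the latter. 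Adopting that linearity argument, together with the corrected ``every line is seen twice'' step, would repair your proof.
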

\begin{proof}
Proofs of all lemmas, theorems and justification of results are in SM.
\end{proof}

\noindent
We will see that there are \emph{infinitely many} (camera-)minimal \PLoneP s in three views. However, we can partition them into finitely many classes such that all \PLoneP s in the same class are closely related; in particular, they have the same (camera-)degree.
For this classification, we pursue the following strategy:

\noindent \textbf{Step 1:} We introduce \emph{reduced} \PLoneP s as the canonical representatives of the finitely many classes of minimal \PLoneP{}s we aim to find (see Section~\ref{sec:reducedPL1Ps}).

\noindent \textbf{Step 2:} Basic principles from algebraic geometry brought up in~\cite{PLMP} imply
\begin{lemma}
\label{lem:balancedPlusDominant}
A point-line problem $(\PLP)$ is minimal if and only if

\noindent $\bullet\ $ it is \emph{balanced}, \ie $\dim (\PplI \times \cams{m}) =  \dim ( \YplIO)$, and 

\noindent $\bullet\ $ its joint camera map $\Phi_{\PLP}$ is dominant.
\end{lemma}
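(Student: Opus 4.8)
The plan is to extract this as a general statement about dominant rational maps between irreducible quasi-projective varieties of finite type over $\CC$, and then apply it to $\Phi_{\PLP} \colon \PplI \times \cams{m} \dashrightarrow \YplIO$. First I would note that both the source $\PplI \times \cams{m}$ and the target $\YplIO$ are irreducible: $\PplI$ is irreducible because it is an iterated bundle of projective spaces and Grassmannians (each point on a line, resp. each pair point-on-line, carves out a fiber bundle over the previous coordinates), $\cams{m}$ is irreducible as a product of copies of $\SO(3) \times \RR^3$ (or rather their complexifications), and $\YplIO$ is irreducible for the same reason as $\PplI$ — it is built by the same incidence construction in the $2$-dimensional views. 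Irreducibility is what makes ``generic fiber'' well-defined and makes the dimension count meaningful.

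Next I would invoke the fiber-dimension theorem: if $f \colon X \dashrightarrow Y$ is a dominant rational map of irreducible varieties, then there is a dense open $U \subseteq Y$ such that for all $y \in U$ the fiber $f^{-1}(y)$ is nonempty of pure dimension $\dim X - \dim Y$. Using this, the two bullet points in Definition~\ref{def:minimal} translate directly. If $(\PLP)$ is minimal, then $\Phi_{\PLP}$ is dominant by the first bullet, and the generic fiber is finite by the second bullet, so $0 = \dim(\PplI \times \cams{m}) - \dim(\YplIO)$, which is exactly the balanced condition. Conversely, if $\Phi_{\PLP}$ is dominant and balanced, then the generic fiber has dimension $\dim(\PplI \times \cams{m}) - \dim(\YplIO) = 0$, hence is finite (a variety of dimension $0$ has finitely many points), and it is nonempty by dominance; so both bullets of Definition~\ref{def:minimal} hold and the problem is minimal. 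One subtlety: one must make sure the ``generic joint image'' in Definition~\ref{def:minimal} — a generic point of $\YplIO$ — lies in the open set $U$ on which the fiber theorem applies, which is immediate since $U$ is dense.

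The main technical obstacle, and the point where I would spend the most care, is making precise the passage from the real algebraic setup (with $\SO(3)$, $\RR^3$, and the normalization $t_{2,1}=1$) to a setting where the complex algebraic fiber-dimension theorem applies, and checking that irreducibility genuinely holds for $\PplI$ and $\YplIO$ including in the presence of the completeness assumption on $\cI$. For irreducibility of $\PplI$: since each line is incident to at most one point in a {\PLoneP}, the incidence variety decomposes as a product over the local features (free lines, points with $k$ pins), and each such factor is a tower of projective or Grassmann bundles, hence irreducible; the product of irreducibles is irreducible. The same decomposition applies to $\YplIO$ view-by-view. For the real-vs-complex issue, the standard move — also used in~\cite{PLMP} — is to complexify: replace $\SO(3)$ by $\SO(3,\CC)$ and work with the complexified maps; the real points are Zariski-dense in the complexification, so dominance over $\RR$ is equivalent to dominance over $\CC$, and the generic (complex) fiber dimension controls the generic real fiber. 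Since the excerpt explicitly says this lemma follows from ``basic principles from algebraic geometry brought up in~\cite{PLMP}'', I would cite the relevant statement there for the complexification bookkeeping and present the dimension-count equivalence as above.
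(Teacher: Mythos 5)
Your proposal is correct and takes essentially the same route as the paper, which itself gives no separate argument but delegates the lemma to the ``basic principles'' of \cite{PLMP} --- precisely the fiber-dimension theorem for dominant maps of irreducible varieties that you spell out (minimal $\Leftrightarrow$ dominant with zero-dimensional generic fiber $\Leftrightarrow$ dominant and balanced), together with the real-to-complex bookkeeping. One small caution: your explicit irreducibility argument for $\PplI$ and $\YplIO$ uses the \PLkP{1} hypothesis (decomposition into independent local features), while the lemma is stated for arbitrary point-line problems, so in full generality you should invoke the irreducibility of these incidence varieties under the realizability and completeness assumptions on $\cI$ as established in \cite{PLMP}, rather than the product-of-local-features decomposition.
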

 We identify a \emph{finite} list
of reduced balanced \PLoneP s in three views
that contains all reduced minimal \PLoneP s (see Section~\ref{sec:balancedPL1Ps}).

\noindent \textbf{Step 3:} 
We explicitly describe the relation of reduced camera-minimal problems to reduced minimal ones, which implies that there are only finitely many reduced camera-minimal \PLoneP{}s in three views (see Section~\ref{sec:cameraminimal}).

\noindent \textbf{Step 4:} For each
 of the finitely many balanced \PLoneP s identified in Step~2, we check if its joint camera map is dominant.
This provides us with a complete catalog of all reduced (camera-)minimal \PLoneP s in three views
(see Section~\ref{sec:minimality}).

\smallskip
\noindent
In addition to the classification, we compute the camera-degrees of all reduced camera-minimal \PLoneP s in three views whose camera-degree is less than $300$ (see Section~\ref{sec:degrees} for this and related results on natural subfamilies of \PLoneP{}s.)

\section{Reduced \PLoneP{}s}
\label{sec:reducedPL1Ps}
From a given {\PLoneP} $(\PLP)$ we can obtain a new {\PLoneP} by forgetting some points and lines, both in space and in the views. Formally, if $\mathcal{P}' \subset \lbrace 1, \ldots, p \rbrace$ and $\mathcal{L}' \subset \lbrace 1, \ldots , l \rbrace$ are the sets of points and lines which are \emph{not} forgotten, the new {\PLoneP} is $(p', l', \cI', \obs')$ with
$p' = |\mathcal{P}'|$,
$l' = |\mathcal{L}'|$,
$\cI' = \lbrace (i,j)  \in \cI \mid i \in \mathcal{P}', j \in \mathcal{L}' \rbrace$,
and $\obs' = ((\mathcal{P}'_1, \mathcal{L}'_1), \ldots, (\mathcal{P}'_m, \mathcal{L}'_m))$,
where $\mathcal{P}'_v = \mathcal{P}_v \cap \mathcal{P}'$
and $\mathcal{L}'_v = \mathcal{L}_v \cap \mathcal{L}'$.
This induces natural projections $\ppi$
 and $\pi$ between the domains and codomains of the joint camera maps which forget the points and lines \emph{not} in $\mathcal{P}'$ and $\mathcal{L}'$.
 
\begin{center}
        \begin{tikzcd}[framed]
        \PplI \times \cams{m}
        \arrow[r, dashed, "\Phi~=~\Phi_{\PLP}"]
        \arrow[d, "\ppi" left]
         &[20mm] \YplIO  
        \arrow[d, "{\pi}"]
         \\
        \cX_{p',l',\cI'} \times \cams{m}
        \arrow[r, dashed, "\Phi'~=~\Phi_{p',l',\cI',\obs'}" below]
        &[20mm]
        \mathcal{Y}_{p',l',\cI',\obs'}
        \end{tikzcd}
\end{center}

\noindent
In the following, we shortly write $\Phi = \Phi_{\PLP}$
and $\Phi' = \Phi_{p',l',\cI',\obs'}$.

\begin{definition}\label{def:reduced-PL1P}
\rm

\noindent We say that $(\PLP)$ is \emph{reducible} to $(p',l',\cI',\obs')$ if 

\noindent $\bullet\ $
for each forgotten point, at most one of its pins is kept,
and

\noindent $\bullet\ $
a generic solution $S' = ((X',L'),P) \in \cX_{p',l',\cI'} \times \cams{m}$ of $(p',l',\cI',\obs')$ 
can be \linebreak[4]
\indent \hspace{-2mm}
lifted to a solution 
of $(\PLP)$
for generic input images in 
$\pi^{-1}(\Phi'(S'))$.

\hspace{-2mm}
In other words, for a generic
$S' = ((X',L'),P) \in \cX_{p',l',\cI'} \times \cams{m}$
and a generic 

\hspace{-2mm}
$(x, \ell) \in \pi^{-1}(\Phi'(S'))$,
there is a point-line arrangement $(X,L) \in \PplI$ 
such 
\linebreak[4] \indent \hspace{-2mm}
that $\Phi((X,L),P) = (x, \ell)$
and $\ppi((X,L),P) = S'$.
\end{definition}

\begin{theorem}
\label{thm:minimal-reduction}
If a \PLoneP{} is minimal and reducible to another \PLoneP{}, then both are minimal and have the same degree.
\end{theorem}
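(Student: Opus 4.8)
The plan is to push minimality and degree across the commutative square relating $\Phi$ and $\Phi'$, exploiting that in a \PLoneP{} the 3D arrangement is a product of independent local features (free lines, and points with their pins) whose only coupling is the single surviving pin allowed per forgotten point. \emph{Dominance of $\Phi'$:} the square gives $\Phi'\circ\ppi=\pi\circ\Phi$, and the vertical maps $\ppi,\pi$ --- which merely forget the coordinates of the discarded 3D features, respectively their images, and drop the incidences that become vacuous --- are dominant, since a generic element of the target extends by choosing the forgotten data generically (here ``at most one pin of a forgotten point survives'' guarantees no leftover incidence constrains the choice). As $\Phi$ is dominant (the larger \PLoneP{} is minimal), so is $\pi\circ\Phi=\Phi'\circ\ppi$, hence so is $\Phi'$.

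\emph{The generic fiber of $\Phi'$ is finite.} Suppose not. For generic $(x',\ell')\in\mathcal{Y}_{p',l',\cI',\obs'}$, pick a curve $C\subseteq(\Phi')^{-1}(x',\ell')$ and a generic $(x,\ell)\in\pi^{-1}(x',\ell')$ --- which is then generic in $\YplIO$, as $\pi$ is dominant. The lifting clause of Definition~\ref{def:reduced-PL1P} holds for $S'$ in a dense open subset of $\cX_{p',l',\cI'}\times\cams{m}$ and generic $(x,\ell)\in\pi^{-1}(\Phi'(S'))$; since $\Phi'(S')=(x',\ell')$ is constant along $C$, fixing a generic $(x,\ell)$ one lifts a dense subset of $C$ through that same $(x,\ell)$, producing a curve inside $\Phi^{-1}(x,\ell)$ whose $\ppi$-image is dense in $C$. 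This contradicts minimality of the larger problem. Hence $(\Phi')^{-1}(x',\ell')$ is finite; with the dominance above and Definition~\ref{def:minimal}, the smaller \PLoneP{} is minimal, and the larger one is minimal by hypothesis.

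\emph{Equal degrees.} Both problems being minimal, $\Phi$ and $\Phi'$ are dominant and balanced (Lemma~\ref{lem:balancedPlusDominant}), so the complement of the dense open from Definition~\ref{def:reduced-PL1P} has $\Phi'$-image of dimension $<\dim\mathcal{Y}_{p',l',\cI',\obs'}$; thus for generic $(x',\ell')$ the whole finite fiber $(\Phi')^{-1}(x',\ell')$ lies in that dense open and, for generic $(x,\ell)\in\pi^{-1}(x',\ell')$, each of its points is $\ppi$ of a point of $\Phi^{-1}(x,\ell)$. So $\ppi\colon\Phi^{-1}(x,\ell)\to(\Phi')^{-1}(x',\ell')$ is surjective; I would show it is injective as well. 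Two points of $\Phi^{-1}(x,\ell)$ with equal $\ppi$-image share all cameras and all kept features, hence differ only in the discarded local features, and minimality of the larger problem forces each of these to be observed richly enough to be uniquely recoverable from the cameras, the images and the surviving pin: a discarded free line is seen in exactly two views and is the intersection of the two back-projected planes; a discarded point is triangulated from two views, or is the intersection of one back-projected ray with its surviving pin; a discarded pin, once its point is located, is the unique line through that point with the prescribed images, because the $\PP^2$ of lines through a fixed point maps birationally onto its images in two views. With $\ppi$ a bijection on generic fibers, the degrees coincide.

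The step I expect to be the real work is this injectivity --- equivalently, ruling out that a discarded sub-configuration could be recovered in finitely-many-but-more-than-one ways. This is precisely where the \PLoneP{} hypothesis (which makes a forgotten feature touch the kept data only through one surviving pin) has to be combined with minimality of the larger problem, through a case analysis of how a free line, a point, or a pin may be observed in three views. The curve argument in the finiteness step also needs the standard fact that a dense constructible subset of $C\times\pi^{-1}(x',\ell')$ has dense fibers over a generic point of $\pi^{-1}(x',\ell')$; granting that, the dominance and finiteness steps are routine.
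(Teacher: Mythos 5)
Your argument is correct in outline, but it follows a genuinely different route from the paper's. The paper never analyzes the fiber map $\ppi\colon\Phi^{-1}(x,\ell)\to(\Phi')^{-1}(x',\ell')$ directly. Instead it proves (i) minimality of the reduced problem by a dimension count: feasibility gives that $\ppi$ and $\pi$ are surjective with irreducible fibers (Lemma~\ref{lem:forget-irreducible}), minimality of the big problem forces $\fiberPi\leq\fiberpi$ (Lemma~\ref{lem:diagram-one-way}) while the lifting property forces $\fiberPi\geq\fiberpi$ (Lemma~\ref{lem:diagram-two-way}), so the reduced problem is balanced, and balanced plus dominant is minimal by Lemma~\ref{lem:balancedPlusDominant}; and (ii) equality of degrees by first noting that the lifting property preserves the \emph{camera}-degree (solutions upstairs and downstairs share the same cameras, Lemma~\ref{lem:diagram-two-way}), and then invoking Lemma~\ref{lem:degs-equal}, which says that for minimal \PLoneP{}s degree equals camera-degree. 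You replace (i) by a curve-lifting contradiction (lift a curve inside a positive-dimensional fiber of $\Phi'$ through a fixed generic $(x,\ell)$) and (ii) by proving $\ppi$ is a bijection on generic fibers. Both substitutions work, but note two things. First, your transfers of genericity --- ``generic in $\pi^{-1}(x',\ell')$ over generic $(x',\ell')$ is generic in $\YplIO$'', and the quantifier interchange letting one $(x,\ell)$ serve a dense subset of the curve --- rest on the surjectivity and \emph{irreducibility} of the fibers of $\pi$, which is exactly what feasibility buys and what the paper isolates in Lemma~\ref{lem:forget-irreducible}; this should be stated rather than assumed. Second, the injectivity step you flag as ``the real work'' is precisely the content of the paper's Lemma~\ref{lem:degs-equal}: minimality forces every forgotten line to be seen at least twice (hence uniquely recoverable from the cameras, possibly through its surviving point) and every forgotten point to be cut out by linear conditions with a unique solution; your case analysis (two back-projected planes, triangulation or ray-meets-pin, pencil of lines through a known point mapped birationally by two views) is the same argument specialized to the forgotten features. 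So your proof is self-contained and more direct about fibers, at the cost of redoing Lemma~\ref{lem:degs-equal} in situ and handling more genericity bookkeeping; the paper's modular route reuses that lemma and, as a by-product of working with camera solutions only, simultaneously yields the camera-minimal statement (Theorem~\ref{thm:cameraMinimalReduction}'s counterpart, Theorem~4) from the same two lemmas.
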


\noindent
We can partition \emph{all} (infinitely many) minimal \PLoneP s in three views into finitely many classes using this reduction process. 
Each class is represented by a unique {\PLoneP} that is \emph{reduced}, 
\ie not reducible to another \PLoneP .

\begin{theorem}
\label{thm:uniqueReduced}
A minimal {\PLoneP} $(\PLP)$ in three views is reducible to a unique reduced {\PLoneP} $(p',l',\cI',\obs')$.
The corresponding projection $\ppi$ forgets:

\noindent $\bullet\ $
every pin that is observed in exactly two views such that both views also observe

\hspace{-3mm}
the point of the pin 
(it does not matter if the third view observes the point or 

\hspace{-3mm}
not, but it must not see the line), 
e.g.\ \includegraphics[width=0.09\textwidth]{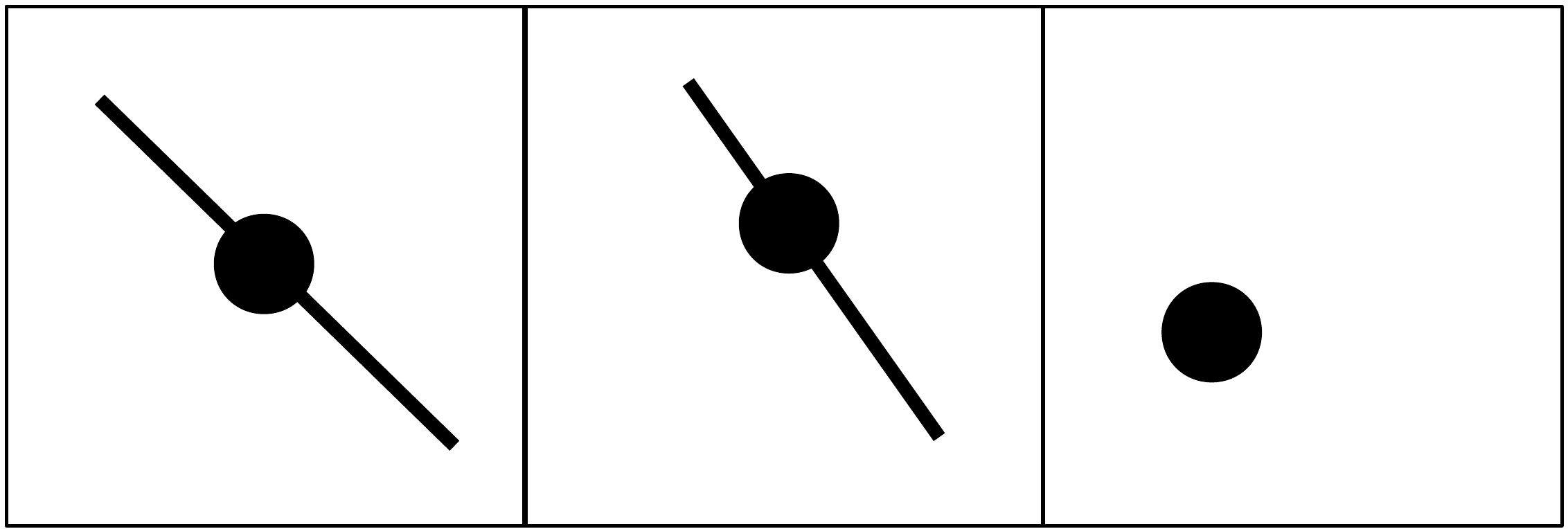} is reduced to \includegraphics[width=0.09\textwidth]{pix/points3Red.pdf}

\noindent $\bullet\ $
every free line that is observed in exactly two views to reduce
\includegraphics[width=0.09\textwidth]{pix/freelineRed.pdf} to \includegraphics[width=0.09\textwidth]{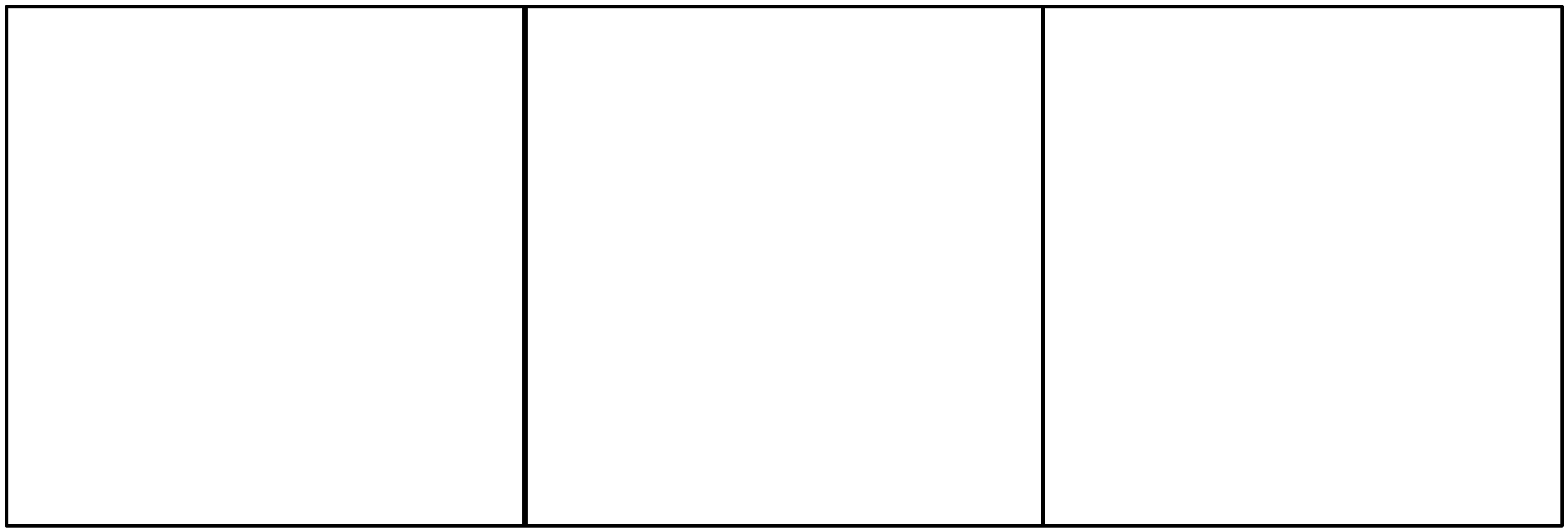} 

\noindent $\bullet\ $
every point that has exactly one pin and is viewed like \includegraphics[width=0.09\textwidth]{pix/pin1RedPoint.pdf}
to get
\includegraphics[width=0.09\textwidth]{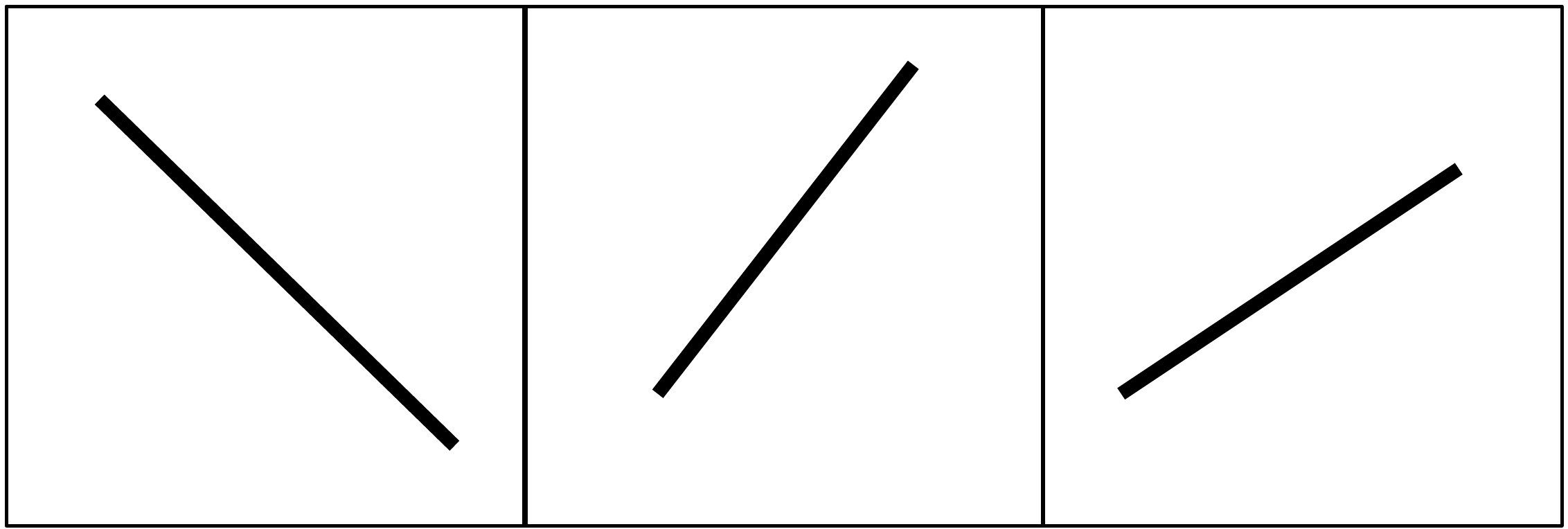}

\noindent $\bullet\ $
every point together with its single pin
if it is viewed like \includegraphics[width=0.09\textwidth]{pix/pin1RedAll.pdf}
to get 
\includegraphics[width=0.09\textwidth]{pix/empty.pdf}

\noindent
In addition, applying inverses of these reductions to a minimal \PLoneP{} in three views results in a  minimal \PLoneP{}.

\end{theorem}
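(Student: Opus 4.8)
The plan is to turn everything into a local question. In the \PLoneP{} setting the space arrangement of $(\PLP)$ is a disjoint union of independent local features --- free lines, and points carrying $k\ge 0$ pins --- so $\Phi_{\PLP}$ is a fibered product over $\cams{m}$ of the individual feature maps, and the lifting condition of Definition~\ref{def:reduced-PL1P} can be checked one feature at a time. Thus the theorem reduces to classifying, for a single local feature seen in a prescribed subset of the three views, whether that feature (or one of its pins, or its point detached from a single pin) can be forgotten, and what the result is.

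First I would pin down which single-feature forgettings are available to a \emph{minimal} \PLoneP{}. If $(\PLP)$ is minimal and reducible to $(\PLPprime)$, then Theorem~\ref{thm:minimal-reduction} makes $(\PLPprime)$ minimal of the same degree; combined with Lemma~\ref{lem:balancedPlusDominant} this forces the forgotten 3D features and the forgotten image features to have equal dimension, and forces the generic lift to be unique. Carrying out the dimension bookkeeping --- a line through a placed point contributes $2$, a point $3$, a free line $4$; an image line through a placed image point contributes $1$, a free image line $2$, etc., all incidences of $\cI$ included --- and using that in a minimal \PLoneP{} every pin and every free line is seen in at least two views and every point is finitely determined by the data (e.g.\ seen in two views, or seen once and incident to a pin), one finds that the only balanced possibilities that survive are exactly the four listed patterns.

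Next I would verify that these four forgettings are genuine reductions by producing the lift explicitly through back-projection: each forgotten line is recovered as the intersection of the two planes that back-project its two prescribed image lines from the respective camera centres --- these planes contain the relevant 3D point precisely because the incidences defining $\YplIO$ force the prescribed image lines through the image of that point --- and in the third and fourth patterns one first recovers the point, respectively the point together with its pin, the same way. For generic data the intersection is a genuine line with exactly the prescribed projections and no accidental incidences, so the lift exists and is unique. Existence and uniqueness of the reduced form then follow by rewriting: each of the four moves strictly decreases $p+l$, so iterating them terminates at a \PLoneP{} to which no move applies (a reduced one); and the moves available at any given \PLoneP{} commute pairwise (they are supported on disjoint features, or on distinct pins of one point), so by Newman's lemma the normal form is unique. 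Since forgetting a set of independent features is the composite of the single-feature forgettings and composing reductions is again a reduction, this normal form is the unique reduced \PLoneP{} that $(\PLP)$ is reducible to.

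Finally, for the last sentence, suppose $(\PLPprime)$ is minimal and $(\PLP)$ arises by one of the four inverse moves, so $(\PLP)$ is reducible to $(\PLPprime)$. Each pattern was chosen dimension-balanced, so $\dim(\PplI\times\cams{m})=\dim(\YplIO)$, and by Lemma~\ref{lem:balancedPlusDominant} it remains only to show $\Phi_{\PLP}$ is dominant. Since $(\PLPprime)$ is minimal, $\Phi'$ is dominant, so the images of generic reduced solutions are dense; the back-projection above shows $\im\Phi_{\PLP}$ contains $\pi^{-1}(\Phi'(S'))$ for a generic reduced solution $S'$, and as $\pi$ is a dominant coordinate projection the union of these fibers is dense in $\YplIO$. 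Hence $\Phi_{\PLP}$ is dominant and $(\PLP)$ is minimal. The main obstacle is the step before --- the exhaustive enumeration of viewing patterns with correct incidence bookkeeping, together with the genericity checks guaranteeing that the back-projected planes meet in a line projecting exactly to the prescribed images and creating no new incidences.
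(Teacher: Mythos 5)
Your plan follows essentially the same route as the paper's proof: it localizes to the independent local features of a \PLoneP{}, uses the equality of forgotten 3D and image dimensions forced by minimality (the paper's $\fiberPi=\fiberpi$ criterion coming from the commutative-diagram lemmas) together with an exhaustive enumeration of viewing patterns to single out the four rules, verifies the lifting property by the same back-projection arguments, and proves the converse statement by combining dominance transfer along the lift with the balanced dimension count. The only packaging difference is that you get uniqueness of the reduced problem via termination plus confluence (Newman's lemma), whereas the paper argues directly that each observed local feature reduces to a unique reduced one; the ``main obstacle'' you defer is exactly the paper's case-by-case table of observed features and the accompanying lemmas on unobserved, singly-seen, and doubly-seen pins.
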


\noindent
Hence, it is enough to classify all reduced minimal \PLoneP s. We will see that there are finitely many reduced minimal \PLoneP s in three views. 
To count them, we need to understand how they look.

\begin{theorem}
\label{thm:reducedMinimalLooks}
A reduced minimal {\PLoneP} in three views has at most one point with three or more pins.
If such a point exists,

\noindent $\bullet\ $
it has at most seven pins,

\noindent $\bullet\ $
and the point and all its pins are observed in all three views. 

\noindent
All other local features are viewed as in Table~\ref{tab:localFeatures}.
\end{theorem}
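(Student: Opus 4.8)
The plan is to combine the two criteria of \cref{lem:balancedPlusDominant} --- balancedness and dominance of the joint camera map --- with a bookkeeping over the independent local features of the \PLoneP{} (free lines and points carrying $k\ge0$ pins, as after \cref{def:PL1P}). For a local feature $F$, together with the record of which of the three views observe its point and each of its pins, set $c(F) := e(F) - d(F)$, where $d(F)$ is the dimension of the 3D moduli of $F$ and $e(F)$ is the total dimension of the image data of $F$ over the views that observe it. Since both $\PplI$ and $\YplIO$ factor as products over the features, balancedness is exactly $\sum_F c(F) = \dim\cams{3} = 11$. Alongside this single global identity we have two families of local constraints: reducedness forbids the feature patterns listed in \cref{thm:uniqueReduced}, and minimality forbids any pattern in which some 3D subfeature fails to be recoverable (which produces a positive-dimensional fiber of $\Phi$).

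First I would settle the small features --- free lines and points with at most two pins. For each such feature and each way it can be observed in three views, exactly one of the following holds: the pattern is one of the reductions of \cref{thm:uniqueReduced}, so it cannot occur in a reduced problem; or a 3D subfeature (a free line, a point, or a pin seen in at most one view, etc.) is undetermined, contradicting minimality by a local fiber-dimension count; or the pattern survives, and one records its count $c(F)$. Performing this enumeration yields \cref{tab:localFeatures} and, in particular, shows that every surviving small feature contributes $c(F)\ge1$.

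The core of the proof is the analysis of a point $X$ carrying $k\ge3$ pins. I would first prove that in a reduced minimal \PLoneP{} both $X$ and each of its pins are observed in all three views. A pin seen in at most one view is a pencil in space, hence not recoverable, so the problem is not minimal; a pin seen in exactly two views is either reducible (when both views also observe $X$, by \cref{thm:uniqueReduced}) or, in the remaining ``straddling'' configurations, imposes an additional incidence-type equation linking $X$ to the cameras whose effect is incompatible with minimality; the possibilities in which $X$ itself is seen in at most two views are eliminated pattern by pattern in the same spirit. Hence $X$ and its $k$ pins are seen in all three views, so $c(X\text{-feature}) = 3+k$. The global identity now closes the argument: two such points would force $\sum_F c(F)\ge (3+3)+(3+3)=12>11$, so at most one point has $\ge3$ pins; and for a single one we get $3+k\le11$, i.e.\ $k\le8$. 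Finally $k=8$ is excluded because balancedness then makes the $8$-pin point the \emph{only} feature, and a direct analysis of this problem shows that its joint camera map is not dominant (the pins become over-determined once the point fixes the cameras). Therefore $k\le7$.

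The main obstacle is the pattern-by-pattern analysis of the ``big'' point in the preceding paragraph: the partially-visible configurations of $X$ and of its pins interlock with the reductions of \cref{thm:uniqueReduced} in a number of cases, and for each surviving-but-non-minimal pattern one must supply a dedicated local argument --- either exhibiting an honest positive-dimensional fiber of $\Phi$, or tracking the codimension of $\overline{\im\Phi}$ caused by the over-determination of a pin. The same over-determination phenomenon is what kills the $k=8$ case and is the delicate point there; verifying dominance for the finitely many surviving balanced candidates with $k\le7$ is deferred to \cref{sec:minimality}, since here only the structural bounds are required.
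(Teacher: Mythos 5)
Your overall architecture tracks the paper's quite closely: enumerate the admissible observation patterns of the small features (which is exactly how Table~\ref{tab:localFeatures} is produced), prove that a point with $\ge 3$ pins must be completely observed, and then finish with a dimension count. Your one genuine variation --- using the global balancedness identity $\sum_F c(F)=11$ instead of the paper's device of forgetting everything except the points with $\ge 3$ pins and invoking dominance of the reduced sub-problem --- is legitimate for the minimal case stated here (the paper needs the forgetting-map version because it proves the analogous bounds for camera-minimal problems, which need not be balanced). However, the two places where the proof has to do real work are not delivered. The first is complete observation of the big point. Your dismissal of the ``straddling'' case (a pin seen in exactly two views, not both of which see $X$) by ``an additional incidence-type equation \ldots incompatible with minimality'' is not an argument, and the local mechanism you gesture at is false as stated: for a point with two pins, precisely such straddling observations (the configurations counted by $\ctwoparA,\ctwoparB,\ctwoparC$ in Table~\ref{tab:localFeatures}) do occur in minimal problems, so nothing purely local about a straddling pin can be ``incompatible with minimality.'' What actually closes this case for $k\ge 3$ is a pigeonhole argument resting on the visibility assumption in \cref{def:PLP}: a view that does not observe $X$ can observe at most one of its pins, so whenever some view misses $X$, at least one of the $\ge 3$ pins is observed only in views that also observe $X$ (in zero, one, or two of them), and that pin makes the problem reducible or non-minimal. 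Your sketch never invokes this assumption, and without it the case analysis does not close.

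The second gap is the exclusion of $k=8$. Balancedness does force the eight-pin point to be the only feature, but the remaining claim --- that the completely observed one-point-with-eight-pins problem in three views has a non-dominant joint camera map --- is a substantive theorem: the problem is balanced, so no formal count rules it out, and the paper imports this fact from~\cite{PLMP}. Your parenthetical justification (``the pins become over-determined once the point fixes the cameras'') is not correct even heuristically: a single point correspondence in three views comes nowhere near fixing the cameras, and the failure of dominance stems from a dependency among the constraints imposed by many lines through a common point, which must be established by an explicit Jacobian/rank computation or a dedicated geometric argument. As written, this step is an assertion rather than a proof, so the bound $k\le 7$ is not actually established by your argument.
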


\begin{table}[t]
    \centering
    \begin{tabular}{cccccccccc}
         &\includegraphics[angle=90,origin=c,width=0.07\textwidth]{pix/pin2param0.pdf}
         &\includegraphics[angle=90,origin=c,width=0.07\textwidth]{pix/pin2param4.pdf}
         &\includegraphics[angle=90,origin=c,width=0.07\textwidth]{pix/pin1param0.pdf}
         &\includegraphics[angle=90,origin=c,width=0.07\textwidth]{pix/pin1param1.pdf}
         &\includegraphics[angle=90,origin=c,width=0.07\textwidth]{pix/pin1param2.pdf}
         &\includegraphics[angle=90,origin=c,width=0.07\textwidth]{pix/pin1param3.pdf}
         &\includegraphics[angle=90,origin=c,width=0.07\textwidth]{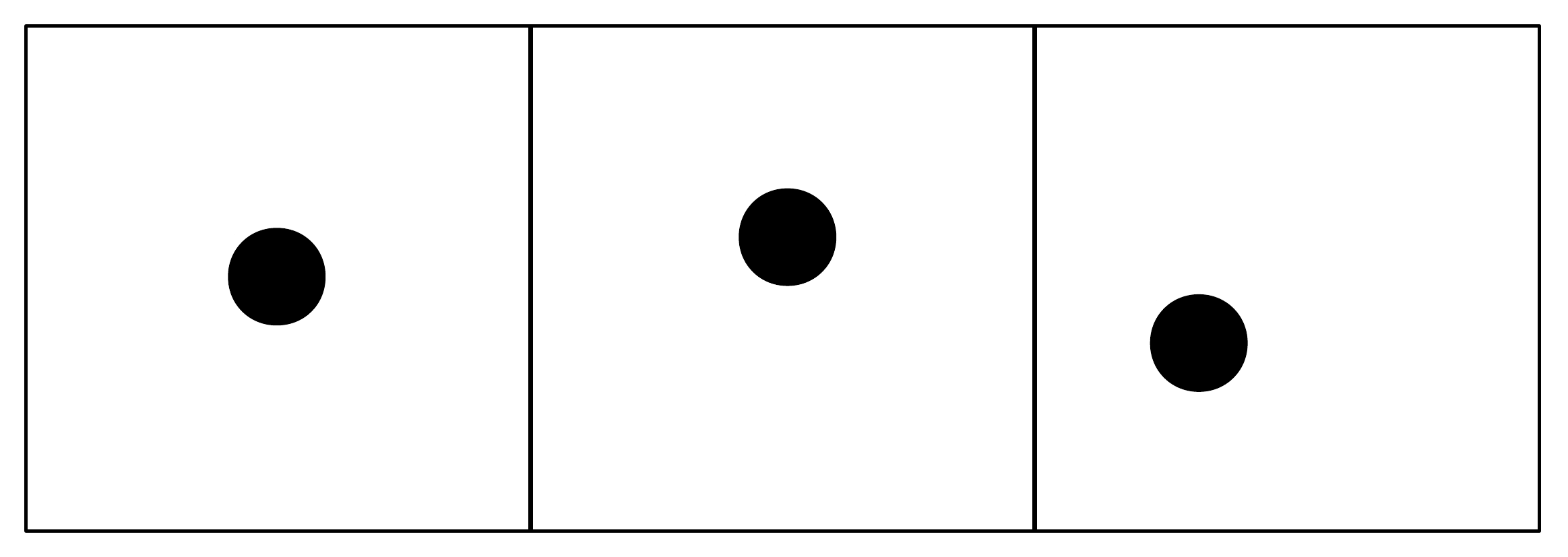}
         &\includegraphics[angle=90,origin=c,width=0.07\textwidth]{pix/pin0param2.pdf}
         &\includegraphics[angle=90,origin=c,width=0.07\textwidth]{pix/freeline.pdf}
         \\
         &
        &
        $\ctwoparA$ &
        &
        $\coneparoneA$ &
        $\conepartwoA\vspace{-2mm}$ &
        $\coneparthrA$ &
        &
        $\czerparA$ &
        \\
        \# &
        $\ctwofull$ &
        $\ctwoparB$ &
        $\conefull$ &
        $\coneparoneB$ &
        $\vdots$ &
        $\coneparthrB$ &
        $\czerfull$ &
        $\czerparB$ &
        $\cfree$
        \\
        &
        &
        $\ctwoparC$ &
        &
        $\coneparoneC$ &
        $\conepartwoF$ &
        $\coneparthrC$ &
        &
        $\czerparC$ \\
        3D &
        7 & 7 & 
        5 & 5 & 5 & 5 &
        3 & 3 &
        4 \\
        2D & 
        12 & 8 &
        9 & 8 & 7 & 6 & 
        6 & 4 &
        6
    \end{tabular}
    \caption{How points with two / one / zero pins and free lines can be observed in the three views of a reduced minimal {\PLoneP} (up to permuting the views).
    The rows ``3D'' and ``2D'' show the degrees of freedom of each local feature in $3$-space and in the three views.
    The row ``\#'' fixes notation for a signature introduced in Section~\ref{sec:balancedPL1Ps}.
    }
    \label{tab:localFeatures}
\end{table}

\section{Balanced PL1Ps}
\label{sec:balancedPL1Ps}
A reduced minimal {\PLoneP} in three views is uniquely determined by a \emph{signature}, a vector consisting of 27 numbers $(\csev, \ldots, \cthr, \ctwofull, \ctwoparA, \ldots, f)$, that specifies how often each local feature occurs in space and how often it is observed in a certain way by the cameras. By Theorem~\ref{thm:reducedMinimalLooks},
the local features in such a {\PLoneP} are free lines or points with at most seven pins. We denote by $f$ the number of free lines
and write $\cthr, \cfou, \ldots, \csev$ for the numbers of points with three, four, $\ldots$, seven pins.
By Theorem~\ref{thm:reducedMinimalLooks},
these local features are completely observed by the cameras.
The row ``\#'' in Table~\ref{tab:localFeatures}
shows our notation 
for the numbers of points with zero, one or two pins that are viewed in a certain way.
For instance, $\ctwofull$ counts how many points with two pins are completely observed by the cameras.
Moreover, $\ctwoparA, \ctwoparB, \ctwoparC$
are the numbers of points with two pins that are partially observed like \includegraphics[width=0.09\textwidth]{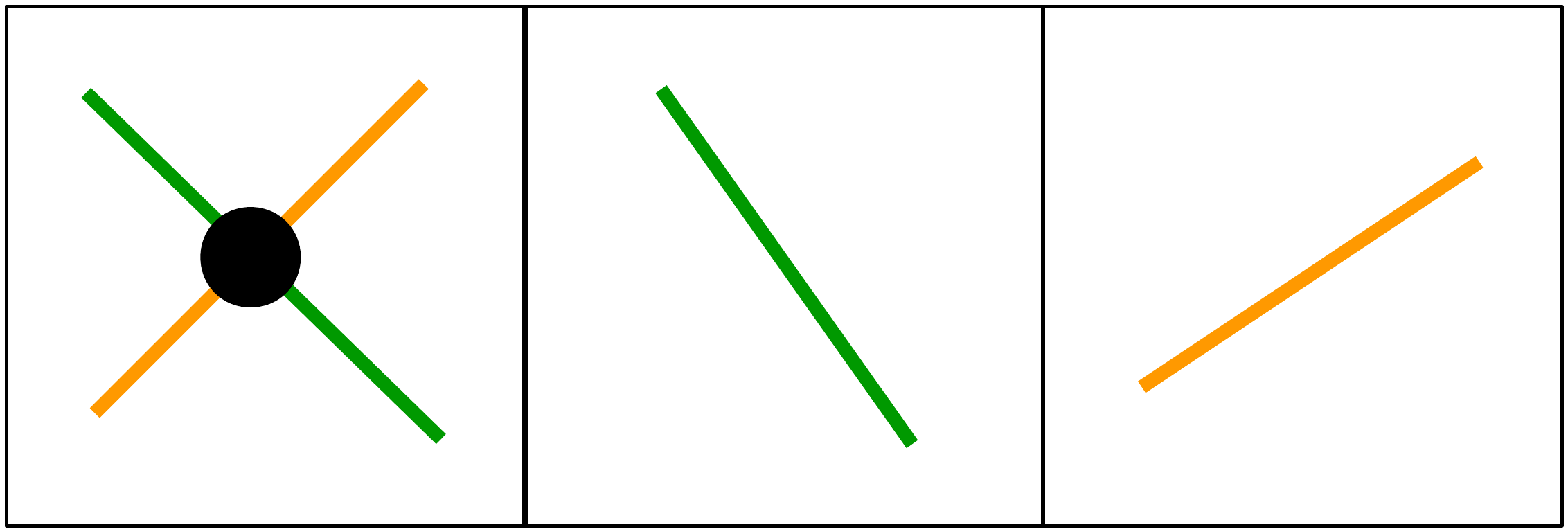} or \includegraphics[width=0.09\textwidth]{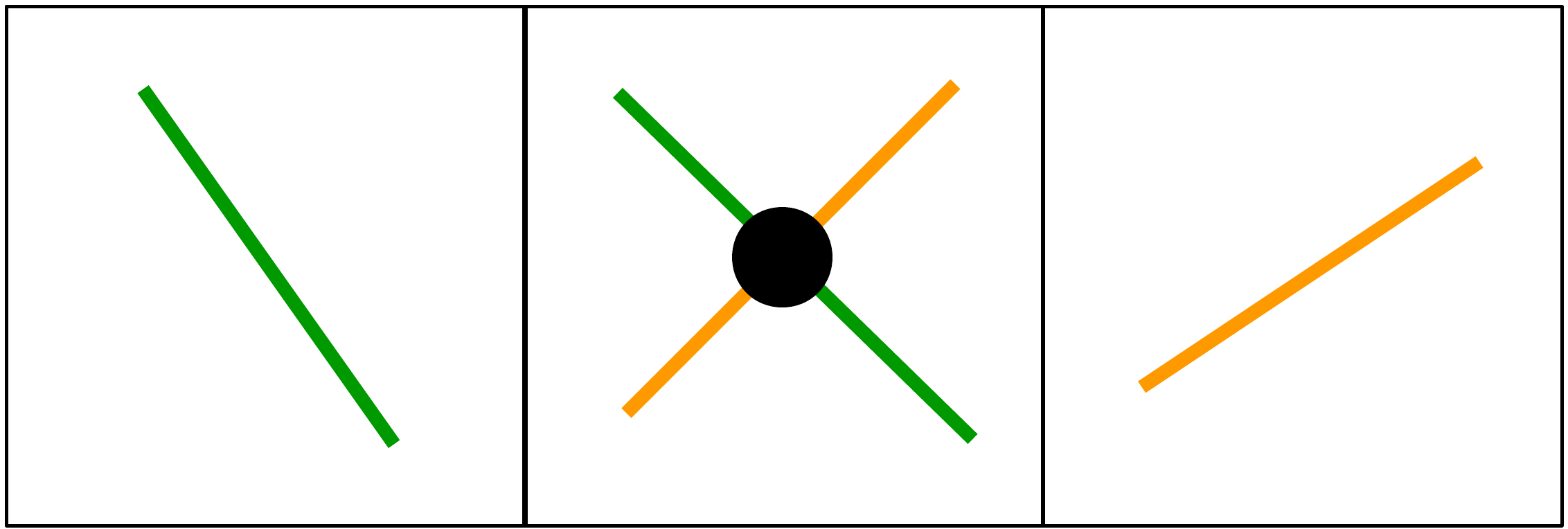} or \includegraphics[width=0.09\textwidth]{pix/pin2permC.pdf}.
Here the upper index $a,b,c$ 
distinguishes the three different permutations of this local feature in the three views (note: as the two pins can be relabeled, there are only three and not six permutations).
Similarly, upper indices distinguish different permutations of partially viewed points with at most one pin; see Table~\ref{tab:localFeatures}.
We also note that assigning arbitrary $27$ non-negative integers to $\csev, \ldots, f$ describes a unique {\PLoneP} in three views, which is reduced by construction (see Thm.~\ref{thm:uniqueReduced} and~\ref{thm:reducedMinimalLooks}) but not necessarily minimal.

Due to Lemma~\ref{lem:balancedPlusDominant}, every minimal {\PLoneP} $(\PLP)$ is balanced, \ie it satisfies $\dim (\PplI \times \cams{m}) =  \dim ( \YplIO)$.
To compute the dimension of $\PplI$, we need to know the degrees of freedom of each local feature in $3$-space. 
For free lines and points with at most two pins, this is given in the row ``3D'' in Table~\ref{tab:localFeatures}.
More generally, a point in space with $k$ pins has $3+2k$ degrees of freedom.
Hence, a reduced minimal {\PLoneP} in three views satisfies
{\footnotesize
\begin{align*}
    \begin{split}
    \dim (\PplI)
    &= 17 \csev + 15 \csix + 13 \cfiv + 11 \cfou + 9 \cthr
     + 7 (\ctwofull + \ctwoparA + \ctwoparB + \ctwoparC)
     \\ &+ 5 (\conefull + \coneparoneA  + \coneparoneB + \coneparoneC + \conepartwoA + \ldots + \conepartwoF + \coneparthrA + \coneparthrB + \coneparthrC)
     \\&+ 3(\czerfull + \czerparA + \czerparB + \czerparC)
     + 4f.
     \end{split}
\end{align*}
}

\noindent
Similarly, the degrees of freedom of each local feature in the three views are shown in row ``2D'' in Table~\ref{tab:localFeatures}.
For instance, if a point with two pins is viewed like
\includegraphics[width=0.09\textwidth]{pix/pin2permA.pdf},
then it has eight degrees of freedom in the three views: 
$2\textcolor{forest}{+1}\textcolor{orange}{+1}$ in the first view, \textcolor{forest}{2} in the second view, and \textcolor{orange}{2} in the third view.
Since a point with $k$ pins for $k = 3, \ldots, 7$ is completely observed by the cameras, 
it has $3(2+k)$ degrees of freedom in the three views.
Therefore, we have
{\footnotesize
\begin{align}
    \label{eq:dim2D}
    \begin{split}
    \dim &(\YplIO)
    = 27 \csev + 24 \csix + 21 \cfiv + 18 \cfou + 15 \cthr
     + 12 \ctwofull + 8(\ctwoparA + \ctwoparB + \ctwoparC)
     \\ &+ 9 \conefull + 8(\coneparoneA  + \coneparoneB + \coneparoneC) + 7(\conepartwoA + \ldots + \conepartwoF) + 6(\coneparthrA + \coneparthrB + \coneparthrC)
     \\&+ 6\czerfull + 4(\czerparA + \czerparB + \czerparC)
     + 6f. 
     \end{split}
\end{align}
}

\noindent
As $\dim(\cams{3}) = 11$, the balanced equality
$\dim (\PplI \times \cams{m}) =  \dim ( \YplIO)$
for a reduced minimal {\PLoneP} in three views is 
$11 = \dim ( \YplIO) - \dim (\PplI )$, \ie
{\footnotesize
\begin{align}
    \label{eq:balanced}
    \begin{split}
    11
    &= 10 \csev + 9 \csix + 8 \cfiv + 7 \cfou + 6 \cthr
     + 5 \ctwofull + (\ctwoparA + \ctwoparB + \ctwoparC)
     \\ &+ 4 \conefull + 3(\coneparoneA  + \coneparoneB + \coneparoneC) + 2(\conepartwoA + \ldots + \conepartwoF) + (\coneparthrA + \coneparthrB + \coneparthrC)
     \\&+ 3\czerfull + (\czerparA + \czerparB + \czerparC)
     + 2f.
     \end{split}
\end{align}
}

\noindent
The linear equation~\eqref{eq:balanced} has 845161 non-negative integer solutions\footnote{See SM for details on how to solve it.}. Each of these is a signature that represents a {\PLoneP} in three views which is reduced and balanced.
Thus, it remains to check which of the 845161 signatures represent \emph{minimal} \PLoneP{}s.

Some of the 845161 solutions of~\eqref{eq:balanced} yield \emph{label-equivalent} \PLoneP{}s, \ie \PLoneP s which are the same up to relabeling the three views.
It turns out that there {\bf 143494} such label-equivalence classes of \PLoneP{}s given by solutions to~\eqref{eq:balanced}\footnote{See SM for details on how to compute this.}.
So all in all, we have to check 143494 \PLoneP{}s for minimality,
namely one representative for each label-equivalence class.
\section{Camera-Minimal PL1Ps}
\label{sec:cameraminimal}
\noindent As in the case of minimal problems, we can understand all camera-minimal \PLoneP{}s from the reduced ones (see also Theorem~\ref{thm:cameraMinimalReduction}).
\begin{theorem}
If a \PLoneP{} is reducible to another \PLoneP{}, 
then either none of them is camera-minimal or both are camera-minimal.
In the latter case, their camera-degrees are equal.
\end{theorem}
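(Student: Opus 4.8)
The plan is to transport both requirements in the definition of camera-minimality --- dominance of the joint camera map, and finiteness of a generic camera-fibre --- across the reduction square, exploiting that reducibility is in essence a \emph{dominance} statement. Abbreviate $\cX := \PplI$, $\cX' := \cX_{p',l',\cI'}$, $\mathcal{Y} := \YplIO$, and let $\mathcal{Y}'$ denote the codomain of $\Phi'$; write $\gamma \colon \cX \times \cams{m} \to \cams{m}$ and $\gamma' \colon \cX' \times \cams{m} \to \cams{m}$ for the two camera projections, so that $\gamma = \gamma' \circ \ppi$ and $\pi \circ \Phi = \Phi' \circ \ppi$ (on suitable dense opens; I suppress this bookkeeping). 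I will use throughout that, for a \PLoneP{}, the four varieties above are irreducible --- each splits as a product of incidence varieties, Grassmannians and projective spaces indexed by the independent local features --- and that $\ppi$ and $\pi$ are surjective morphisms with irreducible equidimensional fibres. The clause of \Cref{def:reduced-PL1P} keeping at most one pin of each forgotten point is exactly what prevents $\ppi$ from collapsing onto the proper locus where two retained lines meet, so $\ppi$ is dominant. Finally I introduce the fibre product $W := \{ (S', y) \in (\cX' \times \cams{m}) \times \mathcal{Y} \mid \Phi'(S') = \pi(y) \}$, which is irreducible by the remarks above, together with the morphism $\psi \colon z \mapsto (\ppi(z), \Phi(z))$ from the domain of $\Phi$ into $W$ (well defined because the square commutes) and the projections $q \colon W \to \mathcal{Y}$ and $p_1 \colon W \to \cX' \times \cams{m}$, so that $q \circ \psi = \Phi$ and $p_1 \circ \psi = \ppi$. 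With these names, \Cref{def:reduced-PL1P} says precisely that $\psi$ is \emph{dominant}.

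The first step is to show $\Phi$ is dominant if and only if $\Phi'$ is. If $\Phi$ is dominant, then $\Phi' \circ \ppi = \pi \circ \Phi$ is dominant (as $\pi$ is surjective), hence so is $\Phi'$. Conversely, if $\Phi'$ is dominant, then for generic $y \in \mathcal{Y}$ the point $\pi(y)$ is generic in $\mathcal{Y}'$, hence lies in the image of $\Phi'$, so the fibre $q^{-1}(y)$ --- which $p_1$ carries isomorphically onto $(\Phi')^{-1}(\pi(y))$ --- is nonempty; thus $q$ is dominant and $\Phi = q \circ \psi$ is a composite of dominant maps. So the two joint camera maps are dominant simultaneously, which already settles the theorem on the locus where dominance fails, since there neither problem is camera-minimal.

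The second and main step assumes both maps dominant and compares the camera-fibres. Fix a generic $(x, \ell) \in \mathcal{Y}$ and set $(x', \ell') := \pi(x, \ell)$. Then $p_1$ restricts to an isomorphism from the slice $q^{-1}(x, \ell)$ onto $(\Phi')^{-1}(x', \ell')$, and $\psi^{-1}(q^{-1}(x, \ell)) = \Phi^{-1}(x, \ell)$ because $q \circ \psi = \Phi$. Since $\psi$ is dominant, $\im \psi$ contains a dense open $V \subseteq W$; since the generic fibre of the dominant morphism $q$ between irreducible varieties is equidimensional of dimension $\dim W - \dim \mathcal{Y}$, while every component of $W \setminus V$ dominating $\mathcal{Y}$ meets such a generic fibre in strictly smaller dimension, and the non-dominating components are missed entirely for generic $(x, \ell)$, the intersection $q^{-1}(x, \ell) \cap V$ is dense in $q^{-1}(x, \ell)$. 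Transporting along $p_1$, the set $\ppi(\Phi^{-1}(x, \ell)) = p_1 \bigl( q^{-1}(x, \ell) \cap \im \psi \bigr)$ is dense in $(\Phi')^{-1}(x', \ell')$. Applying $\gamma'$ and using $\overline{g(\overline{A})} = \overline{g(A)}$ for a morphism $g$, we obtain
\[
\overline{\gamma\bigl(\Phi^{-1}(x, \ell)\bigr)} = \overline{\gamma'\bigl(\ppi(\Phi^{-1}(x, \ell))\bigr)} = \overline{\gamma'\bigl((\Phi')^{-1}(x', \ell')\bigr)}.
\]
Both $\gamma(\Phi^{-1}(x, \ell))$ and $\gamma'((\Phi')^{-1}(x', \ell'))$ are constructible by Chevalley's theorem and have the same Zariski closure, hence the same dimension; in particular one is finite precisely when the other is, which together with the first step proves that the two problems are camera-minimal simultaneously. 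When these sets are finite, each equals its closure, and the closures coincide, so $\gamma(\Phi^{-1}(x, \ell)) = \gamma'((\Phi')^{-1}(x', \ell'))$ as sets; in particular they have the same cardinality over $\CC$, i.e.\ the camera-degrees agree.

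I expect the second step to be the crux. The difficulty is that \Cref{def:reduced-PL1P} only furnishes lifts of solutions over a \emph{generic} arrangement in $\cX'$, whereas the arrangements appearing in the fibre $(\Phi')^{-1}(x', \ell')$ need not be generic in $\cX'$ --- this is already visible when the reduced problem is minimal but the original is camera-minimal without being minimal, so that $\Phi^{-1}(x, \ell)$ is positive-dimensional and $\gamma$ collapses it. Re-encoding reducibility as dominance of $\psi \colon \operatorname{dom}(\Phi) \to W$ and then invoking the theorem on dimensions of fibres is what bridges this gap, and the facts it rests on --- irreducibility of $W$ and equidimensionality of the generic fibres of $q$ --- are exactly where the \PLoneP{} structure (independence of the local features, hence the locally trivial form of the forgetting maps $\ppi$ and $\pi$) and the single-retained-pin clause of \Cref{def:reduced-PL1P} are used. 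A lighter route worth trying first is to imitate the proof of \Cref{thm:minimal-reduction} directly, replacing fibres by their $\gamma$-images; I expect this still needs the same equidimensionality input, so the fibre-product formulation above seems cleanest.
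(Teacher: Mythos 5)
Your proof is correct, and it reaches the conclusion by the same two pillars as the paper — pushing solutions of the upper problem down along $\ppi$ preserves cameras, and the lifting clause of \Cref{def:reduced-PL1P} lets camera solutions of the lower problem rise — but the execution is genuinely different. The paper disposes of the statement in two lines via \Cref{lem:diagram-one-way} and \Cref{lem:diagram-two-way}: it applies the lifting property directly to ``a generic solution $S' \in \Phi'^{-1}(x',\ell')$'' over a generic image, silently transferring genericity from the ambient space $\cX_{p',l',\cI'}\times\cams{m}$ (where \Cref{def:reduced-PL1P} grants lifts) to elements of a fibre over a generic image. You identify exactly this as the delicate point and close it by re-encoding reducibility as dominance of $\psi$ into the fibre product $W$, then using the theorem on fibre dimensions to show that, over a generic image, the pushed-down solution set $\ppi(\Phi^{-1}(x,\ell))$ is dense in $(\Phi')^{-1}(x',\ell')$; taking $\gamma'$-images and closures then gives simultaneous (non-)finiteness and equal camera-degrees. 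What your route buys is rigor on the genericity transfer and a slightly stronger conclusion (the two camera-solution sets have equal closure, hence equal dimension, even without finiteness); what it costs is the irreducibility of $W$ and the equidimensionality of the fibres of $q$ and $p_1$, which you assert rather than prove — these are true, and they are exactly the role played in the paper by \Cref{lem:forget-irreducible} (feasible forgetting maps are surjective with irreducible, equidimensional fibres, which is where the ``at most one retained pin'' clause enters), so a complete write-up should either cite that lemma or note that $p_1\colon W\to \cX_{p',l',\cI'}\times\cams{m}$ is the pullback of the Zariski-locally trivial map $\pi$ along $\Phi'$. Your closing remark is also on target: the paper's shorter argument is reused verbatim for the minimal case (\Cref{thm:minimal-reduction}), whereas your fibre-product formulation is the cleaner way to make the camera-only statement airtight.
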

In order to understand how reduced camera-minimal \PLoneP s look, in comparison to reduced minimal \PLoneP s as described in Theorem~\ref{thm:reducedMinimalLooks},
we define a pin to be \emph{dangling} if it is viewed by exactly one camera. Dangling pins are not determined uniquely by the camera observations, and hence they appear in \PLoneP{}s that are camera-minimal but not minimal.
\begin{theorem}
\label{thm:danglingPins}
The local features of a reduced camera-minimal \PLoneP{} in three views are viewed as described in Theorem~\ref{thm:reducedMinimalLooks} plus as in the following three additional cases:

\vspace*{1ex}

\begin{tabular}{ccc}
      \includegraphics[width=0.15\textwidth]{pix/dangling22.pdf}
      &
       \includegraphics[width=0.15\textwidth]{pix/dangling21.pdf}
       &
       \includegraphics[width=0.15\textwidth]{pix/dangling11.pdf}
       \\
    point with two pins,
    &
    point with two pins,
    &
    point with one pin,
    \\
    both are dangling
    &
    one of which is dangling
    &
    which is dangling
\end{tabular}
\end{theorem}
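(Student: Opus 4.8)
The plan is to analyze, feature by feature, how the local features of a camera-minimal \PLoneP{} may be observed, reusing the analysis that underlies Theorem~\ref{thm:reducedMinimalLooks} and then relaxing exactly where camera-minimality is weaker than minimality. The key conceptual point is that camera-minimality asks only that $\gamma(\Phi^{-1}(x,\ell))$ be finite, whereas minimality asks that $\Phi^{-1}(x,\ell)$ itself be finite. The difference between the two is governed by the dimension of the fibers of $\Phi$ over a fixed camera tuple, i.e.\ by how much freedom the 3D arrangement has once the cameras and the images are fixed. A local feature that is \emph{not} uniquely reconstructible in 3D from its images contributes a positive-dimensional family to $\Phi^{-1}(x,\ell)$ but can still be consistent with camera-minimality provided the extra 3D freedom is exactly ``absorbed'' by the fiber of $\Phi$ and does not leak into the camera factor. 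So the first step is to make this precise: show that a reduced \PLoneP{} in three views is camera-minimal iff it is balanced after replacing the ``3D'' degrees-of-freedom count of each local feature by its \emph{camera-visible} 3D dimension (the dimension of the image of that feature's 3D configuration space under the joint camera map, for fixed generic cameras), and the corresponding joint camera map is dominant. This is the camera-minimal analogue of Lemma~\ref{lem:balancedPlusDominant}, and it follows from the same algebraic-geometry bookkeeping used there together with Theorem~\ref{thm:cameraMinimalReduction} (reduction preserves camera-minimality and camera-degree), so that it suffices to treat reduced problems.

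Next I would run the local-feature case analysis. For each local feature type (free line; point with $k$ pins, $k \geq 0$) and each way of distributing its observations among the three views, I compute whether the feature is uniquely reconstructible in 3D from a generic set of consistent images once the cameras are fixed. A free line is reconstructible from two views; a pin through an \emph{already reconstructed} point is reconstructible from two views of that pin (two planes through known 3D point), but from only \emph{one} view of the pin it is not — that single view constrains the pin to a one-dimensional pencil of lines in 3D through the point. This is precisely the ``dangling pin'' phenomenon. So the only new freedom that camera-minimality tolerates, beyond what Theorem~\ref{thm:reducedMinimalLooks} allows, comes from pins seen in exactly one view, and the extra 3D dimension contributed by each such dangling pin is exactly $2$ (a pin has $2$ extra 3D dof over its base point, and a single view of it pins down $0$ of them — more precisely a single image line through the known image point is $1$ condition on the $2$-dimensional pencil... here one must be careful: a line through a fixed point in $\PP^2$ is $1$-dimensional, matching $\ell_{v,j}\in\GG_{1,2}$ being $2$-dim with the incidence $x_{v,i}\in\ell_{v,j}$ cutting it to $1$; meanwhile the 3D pin is $2$-dim over the point, so one image gives $1$ condition leaving a $1$-dimensional fiber). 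I would then check that the only reduced configurations of a single local feature producing such non-reconstructible-but-camera-harmless freedom are exactly the three depicted: a point with two pins both dangling, a point with two pins exactly one dangling, and a point with one dangling pin. All other configurations either already appear in Theorem~\ref{thm:reducedMinimalLooks} (when every pin is seen at least twice, or when forgetting a dangling pin keeps us reduced and the feature drops to a lower-pin case already listed) or violate reducedness, or violate the balanced-for-camera count.

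The main obstacle I expect is the interaction between the ``reduced'' condition and dangling pins: I must confirm that each of the three new configurations is genuinely \emph{reduced} in the sense of Definition~\ref{def:reduced-PL1P} — i.e.\ that one cannot reduce it further — and simultaneously that no \emph{other} configuration with a dangling pin survives reduction. Theorem~\ref{thm:uniqueReduced} lists exactly which pin/line/point patterns get forgotten under reduction; crucially, a pin viewed in exactly one view is \emph{not} among them (the forgotten pin there is seen in exactly two views), so dangling pins persist, and I must trace through the four forgetting rules of Theorem~\ref{thm:uniqueReduced} to see that e.g.\ ``point with one pin, which is dangling, and the point seen in the other two views'' does not match any rule (the point is seen $\geq 2$ times so it is not a floating single-pin point, and the pin is seen once so the two-view-pin rule does not apply). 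The other subtle point is ruling out that a dangling pin forces the joint camera map to be \emph{non}-dominant or the problem to be unbalanced-for-cameras in some configuration not on the list; this is handled by the dimension count from the first step, restricting attention to reduced balanced-for-camera signatures, exactly as the minimal case restricts to solutions of~\eqref{eq:balanced}. With these checks in place, the three additional cases in the statement exhaust the possibilities.
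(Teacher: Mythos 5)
There is a genuine gap, and it sits exactly where you flagged your ``main obstacle'': the reducedness check. You propose to certify that a dangling-pin configuration is reduced by tracing through the four forgetting rules of Theorem~\ref{thm:uniqueReduced} and you conclude that ``dangling pins persist'' because the pin forgotten there is seen in exactly two views. But Theorem~\ref{thm:uniqueReduced} describes reductions of \emph{minimal} problems, while Theorem~\ref{thm:danglingPins} is about reduced \emph{camera-minimal} problems, whose reductions are the richer list of Theorem~\ref{thm:cameraMinimalReduction} --- equivalently, the definitional test of every feasible forgetting map against the lifting property, which is what the paper actually carries out (Lemmas~\ref{lem:invisible}, \ref{lem:pinSingleViewRed}, \ref{lem:pinDoubleViewRed} together with the exhaustive Table~\ref{tab:proofAllTheorems} in the SM). In particular, a pin observed in exactly one view \emph{that also sees its point} is reducible: for fixed cameras the whole pencil of 3D lines through the point projects to the observed image line, so forgetting that pin satisfies the lifting property (Lemma~\ref{lem:pinSingleViewRed}); the same goes for a free line seen in exactly one view. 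Both patterns are ``camera-harmless dangling'' in your sense and match none of the four rules of Theorem~\ref{thm:uniqueReduced}, so your procedure would wrongly admit them as further allowed viewing patterns. Yet excluding them is precisely what pins down the three pictures in the statement, where the dangling pin's unique view does \emph{not} observe the point; the paper obtains the three extra cases as exactly the non-reducible observation patterns (of features with at most two pins) that fail its minimality tests, not as ``whatever is dangling and not forgotten by Theorem~\ref{thm:uniqueReduced}''.

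A second omission: via the reference to Theorem~\ref{thm:reducedMinimalLooks}, the statement also asserts, now for the strictly larger class of reduced camera-minimal \PLoneP{}s, that there is at most one point with three or more pins, that it has at most seven pins, and that it is completely observed. These cannot be quoted from the minimal case and your proposal does not address them. Your Step-1 ``balanced-for-cameras plus dominant'' criterion (which is itself unproved --- it needs the product structure of the fibers of $\Phi$ over fixed cameras, essentially the argument behind Lemma~\ref{lem:degs-equal}) cannot exclude the critical borderline case of a fully observed point with eight pins, which is exactly camera-balanced; the paper rules it out with the replacement trick of Lemma~\ref{lem:replacements} combined with the known non-minimality of the fully visible eight-pin problem from~\cite{PLMP} (Lemma~\ref{lem:hedgehog}), and then bounds the number of pins and of such points by dominance-based dimension counts (Lemma~\ref{lem:complete-pin-removal}). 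Finally, a small citation slip: the fact that reduction preserves camera-minimality and camera-degree is the first theorem of Section~\ref{sec:cameraminimal}, not Theorem~\ref{thm:cameraMinimalReduction}.
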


\begin{remark}
For a dangling pin $L$ of a reduced camera-minimal \PLoneP{},
the point $X$ incident to the pin $L$ is uniquely reconstructible.
 Since $L$ is viewed by exactly one camera, it belongs to the planar pencil of lines which are incident to $X$ and have the same image as $L$. Thus we see that $L$ is not uniquely reconstructible from its image.
\end{remark}

\noindent
The next theorem relates minimal and camera-minimal \PLoneP{}s. By adding more constraints to images, we make configurations in space uniquely reconstructible.
\begin{theorem}
\label{thm:camMinLiftToMin}
The following replacements in images lift a reduced camera-minimal \PLoneP{} in three views to a reduced minimal \PLoneP{} (cf. Thm.~\ref{thm:danglingPins} and Table~\ref{tab:localFeatures}):

\noindent
\PDoDt{.09} $\mapsto$ \PotDoDt{.09}, \,
\PoDoDt{.09} $\mapsto$ \PotDoDt{.09}, \,
\PPDo{.09} $\mapsto$ \PoPDo{.09} or 
\PPDo{.09} $\mapsto$ \PPoDo{.09} 

\noindent
Moreover, the camera-degrees of both \PLoneP{}s are the same.
\end{theorem}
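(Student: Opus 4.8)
The plan is to treat each of the three replacement moves separately and, in each case, exhibit an explicit lifting of solutions together with a dimension count showing that the lift neither creates nor destroys camera solutions. Observe first that each replacement is \emph{local}: it modifies the observation data of a single local feature (a point with one or two pins) while leaving all other features untouched. Since the joint camera map $\Phi$ factors through the product over local features once the cameras are fixed, it suffices to analyze what happens on the affected feature, with the cameras in sufficiently generic position. Concretely, write $(\PLP)$ for the reduced camera-minimal \PLoneP{} and $(\PLPprime)$ for its image under one of the moves; the moves only enlarge the observation sets $\obs$ (they add an image of an already-present pin, or promote a dangling pin to a non-dangling one), so $\cX$ and $\cams{3}$ are unchanged and there is a natural forgetful map $\pi\colon \YplIO \to \mathcal{Y}_{p,l,\cI,\obs'}$ in the opposite direction of the reductions of Section~\ref{sec:reducedPL1Ps}.

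The first step is to verify that $(\PLPprime)$ is balanced. Using the degrees of freedom in Table~\ref{tab:localFeatures}: for \PDoDt{.09} $\mapsto$ \PotDoDt{.09} a point with two pins, both dangling, is viewed in $2+1+1+1+1 = 6$ image-dof (by the dangling convention one counts $2$ for the point in one view and $1$ for each dangling pin in its own view) and after the move becomes the fully observed two-pin-in-one-view type $\ctwoparA$ with $8$ image-dof, a gain of $2$; but the $3$D-dof stays $7$, so the balanced defect increases by $2$ — wait, we must instead note that the replacement is applied simultaneously with the bookkeeping that $(\PLP)$ was only camera-minimal, not balanced. The correct accounting is: a reduced camera-minimal but non-minimal \PLoneP{} has $\dim(\cX\times\cams{3}) - \dim(\YplIO)$ equal to the number of dangling pins (each dangling pin contributes a $1$-dimensional pencil, i.e. a positive-dimensional camera fiber does \emph{not} occur but a positive-dimensional total fiber does). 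Each replacement move removes exactly one unit of this defect — promoting a dangling pin adds one image constraint — so after performing all the indicated replacements the resulting problem is balanced. One then checks this numerically against \eqref{eq:balanced} extended by the dangling entries of Theorem~\ref{thm:danglingPins}.

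The second step, and the heart of the argument, is to show the lifted problem is minimal, equivalently (by Lemma~\ref{lem:balancedPlusDominant}) that $\Phi' = \Phi_{\PLPprime}$ is dominant, and that the camera-degree is preserved. For dominance: take a generic camera solution $P$ of $(\PLP)$ realizing a generic image tuple; by camera-minimality the incident point $X$ of the pin in question is uniquely reconstructible (Remark after Theorem~\ref{thm:danglingPins}), and the dangling pin $L$ ranges over a $1$-dimensional pencil through $X$. The new image datum added by the move is the image, in a second view, of \emph{some} line through $X$; for a generic choice of that image line there is a unique member $L$ of the pencil projecting to it (the projection of the pencil to the lines-through-$x$ in the new view is a generically finite, degree-one map). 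This produces the required lift $(X,L,\dots)$ of a generic point of $\mathcal{Y}_{p,l,\cI,\obs'}$, so $\Phi'$ is dominant, hence $(\PLPprime)$ minimal. For the camera-degree: the fibers of $\Phi$ and $\Phi'$ over corresponding generic images are related by the pencil-parametrization above, which is a birational map on each fiber component commuting with the projection $\gamma$ to $\cams{3}$; therefore $\gamma(\Phi^{-1})$ and $\gamma((\Phi')^{-1})$ have the same cardinality. In the two-dangling-pins case \PDoDt{.09}$\mapsto$\PotDoDt{.09} one applies this pencil argument to each pin independently (the two pencils through $X$ are independent), and in the one-pin case \PPDo{.09} one checks both target types \PoPDo{.09} and \PPoDo{.09} satisfy the hypotheses of Table~\ref{tab:localFeatures}.

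I expect the main obstacle to be the genericity bookkeeping in the pencil argument: one must ensure that the added image line is \emph{generic enough} relative to the already-fixed cameras and the reconstructed point $X$ so that the pencil-to-image-line map is finite and degree one, and that this genericity is compatible with — not in conflict with — the genericity already imposed to make $(\PLP)$ camera-minimal. This is handled by noting that the constructible set of ``bad'' image lines (those for which the map degenerates, or for which $X$ becomes non-reconstructible) is a proper subvariety, so a generic point of $\mathcal{Y}_{p,l,\cI,\obs'}$ avoids it; the details are deferred to the Supplementary Material, consistent with the proof convention stated after Lemma~\ref{lem:degs-equal}.
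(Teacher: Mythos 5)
Your core lifting mechanism is the right one, and it is essentially the paper's: fix a camera solution, use that the incident point $X$ of a dangling pin is uniquely reconstructible, and match the newly added image line with the (generically unique) member of the planar pencil of lines through $X$ that projects onto it; this gives dominance of the lifted joint camera map and the equality of camera-degrees (the paper packages exactly this pencil argument into a separate lemma showing that each replacement preserves camera-minimality, camera-degree and reducedness). However, there are two concrete defects. First, the theorem claims the lifted problem is \emph{reduced}, and you never verify this. It does require an argument: since reducibility of a \PLoneP{} is detected feature by feature, one must check that each replacement turns a non-reducible observed local feature into another non-reducible one (and leaves the remaining features untouched); the paper reads this off from its case analysis of observed features. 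Without that step you have produced a minimal problem, not a reduced minimal one. Second, your balance bookkeeping is wrong as stated. In the configuration \PDoDt{.09} the two dangling pins are seen in views that do \emph{not} observe the point, so each contributes $2$ (not $1$) image degrees of freedom, and the replacement \PDoDt{.09}$\mapsto$\PotDoDt{.09} adds \emph{two} image degrees of freedom and removes \emph{two} units of the defect $\dim(\PplI\times\cams{3})-\dim(\YplIO)$; your claim that ``each replacement move removes exactly one unit of this defect'' is false for this move, and only the per-dangling-pin accounting (defect $=$ number of dangling pins, each promotion of a dangling pin adds one image constraint) closes the count. The justification of ``defect $=$ number of dangling pins'' itself leans on Theorem~\ref{thm:danglingPins} and on unique recoverability of all non-dangling features, which you use only implicitly.

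It is also worth noting that the dimension count is avoidable, which is how the paper argues and why it is cleaner. After the replacements, every local feature of the (still reduced, still camera-minimal) problem is of a type allowed by Theorem~\ref{thm:reducedMinimalLooks}, i.e.\ completely observed points with many pins or the Table~\ref{tab:localFeatures} types, and each of these is uniquely recoverable in 3D once the cameras are fixed. Combined with finiteness of the camera solutions this gives directly that generic fibers of the lifted joint camera map are finite and nonempty, hence minimality in the sense of Definition~\ref{def:minimal}, with no balance computation and therefore no opportunity for the slip above; the degree statement then follows from Lemma~\ref{lem:degs-equal}. If you keep your balanced-plus-dominant route via Lemma~\ref{lem:balancedPlusDominant}, fix the defect accounting and add the reducedness check; otherwise the proposal's pencil argument already contains the substance of the paper's proof.
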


\noindent
This  has two important implications for the classification of (camera-)minimal \PLoneP{}s. 
First, reversing the replacements in Theorem~\ref{thm:camMinLiftToMin} transforms each reduced camera-minimal \PLoneP{} in three views into a \emph{terminal} \PLoneP{} of the same camera-degree.

\begin{definition}\label{def:terminal} \rm
We say that a camera-minimal \PLoneP{} in three views is \emph{terminal} 
if it is reduced 
and does not view local features like
\PoDoDt{.09} 
or \PotDoDt{.09}
or \PoPDo{.09}.
\end{definition}

\noindent
Hence, to classify \emph{all} camera-minimal \PLoneP{}s in three views, it is enough to find the terminal ones. 
Secondly, Theorem~\ref{thm:camMinLiftToMin} implies for minimal \PLoneP{}s the following.

\noindent
\begin{corollary}\label{cor:swap}
Consider a  minimal \PLoneP{} in three views. 
After replacing a single occurrence of \PoPDo{.09} with \PPoDo{.09} (or the other way around),
the resulting \PLoneP{} is minimal and has the same degree.
\end{corollary}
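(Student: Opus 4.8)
The statement is meant to follow quickly from Theorem~\ref{thm:camMinLiftToMin}, once that theorem is read as an \emph{equivalence} rather than a one-directional lift. Write $(\PLP)$ for the given minimal \PLoneP{}, fix the occurrence of \PoPDo{.07} to be swapped, and let $(p,l,\cI,\obs')$ be the result of replacing it by \PPoDo{.07} and $(p,l,\cI,\obs'')$ the result of replacing it by the dangling feature \PPDo{.07}; only the visibility data changes, so all three problems share the same variety $\PplI$ of $3$-space arrangements. I would first reduce to the case that $(\PLP)$ is \emph{reduced}: a \PoPDo{.07}-feature is one of the configurations listed in Table~\ref{tab:localFeatures}, hence is not removed by the reduction of Theorem~\ref{thm:uniqueReduced}, so it survives to the reduced representative, and --- since that reduction and its inverses preserve minimality and degree --- proving the claim for reduced problems suffices (the same remark shows $(p,l,\cI,\obs'')$ is reduced, as \PPDo{.07} is not removed either).

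The plan is then to traverse the chain $(\PLP)\rightsquigarrow(p,l,\cI,\obs'')\rightsquigarrow(p,l,\cI,\obs')$, each arrow being one of the image replacements of Theorem~\ref{thm:camMinLiftToMin}. The second arrow is immediate: once we know $(p,l,\cI,\obs'')$ is (reduced) camera-minimal, Theorem~\ref{thm:camMinLiftToMin} applied to \PPDo{.07} $\mapsto$ \PPoDo{.07} tells us that $(p,l,\cI,\obs')$ is reduced minimal with $\cdeg(p,l,\cI,\obs') = \cdeg(p,l,\cI,\obs'')$. Feeding this into Lemma~\ref{lem:degs-equal} --- degree equals camera-degree for minimal \PLoneP{}s --- applied to both $(\PLP)$ and $(p,l,\cI,\obs')$, yields $\deg(p,l,\cI,\obs') = \cdeg(p,l,\cI,\obs') = \cdeg(p,l,\cI,\obs'') = \cdeg(\PLP) = \deg(\PLP)$, which is exactly the assertion; the opposite swap \PPoDo{.07} $\mapsto$ \PoPDo{.07} is obtained by exchanging the roles of $(\PLP)$ and $(p,l,\cI,\obs')$.

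Everything therefore comes down to the first arrow: that $(p,l,\cI,\obs'')$ is camera-minimal with $\cdeg(p,l,\cI,\obs'') = \cdeg(\PLP)$, equivalently that \emph{reversing} a single such replacement on a minimal problem again gives a camera-minimal problem of unchanged camera-degree. I expect this to be the main obstacle, since Theorem~\ref{thm:camMinLiftToMin} is stated only in the direction camera-minimal $\Rightarrow$ minimal. To settle it I would argue locally, as in the discussion of dangling pins preceding Definition~\ref{def:terminal}: passing from \PoPDo{.07} to \PPDo{.07} only forgets the image of a single pin in one of the two views that see it. By the remark following Theorem~\ref{thm:danglingPins}, the point carrying that pin stays uniquely reconstructible; and given a reconstructed point together with the prescribed image of the pin in its one retained view, a compatible $3$-space pin exists automatically (this is precisely the point-line incidence already recorded in the image variety), while as that pin ranges over its planar pencil its image in the forgotten view sweeps out a whole pencil. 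Consequently, for generic compatible joint images the camera tuples solving $(\PLP)$ and those solving $(p,l,\cI,\obs'')$ coincide, and $\Phi_{p,l,\cI,\obs''}$ is dominant (its image equals the dense image of $\Phi_{\PLP}$ under the forgetful projection); this gives camera-minimality and the equality of camera-degrees. This last point is the one requiring real care, but it should already be available from the proof of Theorem~\ref{thm:camMinLiftToMin}.
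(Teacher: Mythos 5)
Your proposal is correct and follows essentially the paper's own route: the official proof likewise passes through the dangling configuration \PPDo{.09} and rests on Lemma~\ref{lem:replacements} (in the SM), which states that creating or fixating a dangling pin preserves camera-minimality, camera-degree and reducedness and is proved by exactly your fixed-cameras, pencil-of-pins argument, after which the lift of Theorem~\ref{thm:camMinLiftToMin} and Lemma~\ref{lem:degs-equal} conclude just as you do. The only differences are minor: Lemma~\ref{lem:replacements} needs no reducedness hypothesis, so the paper skips your preliminary reduction to the reduced case, and your parenthetical that a compatible $3$-space pin ``exists automatically \ldots\ incidence already recorded in the image variety'' is slightly misphrased --- the retained view does not see the point, so no image incidence forces this; it holds because you start from a solution of the dangling problem, which is what your argument actually uses, so the conclusion stands.
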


\noindent
At the end of Section~\ref{sec:balancedPL1Ps}, we defined two \PLoneP{}s to be label-equivalent if they are the same up to relabeling the views.
We note that the swap described in Corollary~\ref{cor:swap} does \emph{not} preserve the label-equivalence class of a \PLoneP{}.
Instead, we say that two \PLoneP{}s in three views are \emph{swap\&label-equivalent} if one can be transformed into the other by relabeling the views and applying (any number of times) the swap in Corollary~\ref{cor:swap}.
We conclude that either all \PLoneP{}s in the same swap\&label-equivalence class are minimal and have the same degree, or none of them is minimal.
Moreover, the lift in Theorem~\ref{thm:camMinLiftToMin} yields the following.

\begin{corollary}\label{cor:swap-label-equivalence}
The swap\&label-equivalence classes of reduced minimal \PLoneP{}s in three views are in
a camera-degree preserving
one-to-one correspondence with the label-equivalence classes of terminal camera-minimal \PLoneP{}s in three views. 
\end{corollary}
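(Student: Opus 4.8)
The plan is to construct the claimed bijection explicitly and verify it is well-defined, surjective, injective, and camera-degree preserving, using the structural results already established. First I would set up the map. Given a swap\&label-equivalence class $[\mathcal{Q}]$ of reduced minimal \PLoneP{}s in three views, pick any representative $\mathcal{Q}$, and apply the reverse replacements of Theorem~\ref{thm:camMinLiftToMin} as long as possible: each reversal of \PoPDo{.09} $\mapsto$ \PPDo{.09}, of \PotDoDt{.09} $\mapsto$ \PoDoDt{.09} (and the analogous one for \PoPDo{.09}) strictly decreases $\dim(\YplIO)$ while preserving camera-minimality and camera-degree by Theorem~\ref{thm:camMinLiftToMin}, so this process terminates in a reduced \PLoneP{} that views no local feature of the forbidden types --- i.e.\ a terminal camera-minimal \PLoneP{} by Definition~\ref{def:terminal}. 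Sending $[\mathcal{Q}]$ to the label-equivalence class of this terminal problem is the forward map; the inverse direction sends a terminal camera-minimal \PLoneP{} to its swap\&label class under the forward direction of the lift in Theorem~\ref{thm:camMinLiftToMin}, which produces a \emph{minimal} \PLoneP{} (not merely camera-minimal) precisely because terminal problems contain none of the dangling-pin features of Theorem~\ref{thm:danglingPins} obstructing minimality, while the only remaining \PoPDo{.09}/\PPoDo{.09} ambiguity is absorbed into the swap equivalence.

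Next I would check well-definedness, which is the main technical point. Two issues arise: (a) the terminal problem reached must not depend on the choice of representative $\mathcal{Q}$ in $[\mathcal{Q}]$ or on the order in which reversals are applied, and (b) the two directions must be mutually inverse. For (a), swapping \PoPDo{.09} $\leftrightarrow$ \PPoDo{.09} (Corollary~\ref{cor:swap}) commutes with the reverse-reduction steps up to the same swap, so different representatives land in the same label-equivalence class after one further normalization of \PoPDo{.09}-type features; the order-independence follows from a standard confluence argument since each reversal removes a feature of a fixed type and acts locally on the signature coordinates (this is where I would invoke that the signature of Section~\ref{sec:balancedPL1Ps} determines the problem, so ``same up to swap\&label'' is a finite combinatorial condition on the 27-vector). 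For (b), Theorem~\ref{thm:camMinLiftToMin} states the lift and its reverse are inverse replacements on images, so composing them returns the original local-feature configuration; the only subtlety is the choice \PPDo{.09} $\mapsto$ \PoPDo{.09} \emph{or} \PPDo{.09} $\mapsto$ \PPoDo{.09}, and Corollary~\ref{cor:swap} is exactly what guarantees both choices represent the same swap\&label class, so the composite is the identity on classes.

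Finally, surjectivity is immediate: every terminal camera-minimal \PLoneP{} is the image of the swap\&label class of the minimal problem obtained from it by the lift of Theorem~\ref{thm:camMinLiftToMin} (which is minimal and reduced by that theorem together with Theorem~\ref{thm:danglingPins}, since after the lift no dangling features remain). Injectivity: if two reduced minimal \PLoneP{}s reduce to label-equivalent terminal problems, then applying the lift to a common terminal representative yields, up to relabeling views and swaps of \PoPDo{.09}-type features, both original problems --- hence they lie in the same swap\&label class. Camera-degree preservation is inherited step-by-step: Theorem~\ref{thm:camMinLiftToMin} preserves camera-degree under the lift, Corollary~\ref{cor:swap} preserves degree (hence camera-degree, by Lemma~\ref{lem:degs-equal}) under swaps, and relabeling views trivially preserves it. The hard part will be (a): making the confluence/order-independence argument airtight, i.e.\ verifying that the local reverse-replacements do not interact in a way that produces inequivalent terminal problems. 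I expect this to reduce to checking that the feature types appearing in Theorems~\ref{thm:reducedMinimalLooks} and~\ref{thm:danglingPins} are disjoint enough that reversals at distinct local features commute outright, and reversals at the same feature type are forced to the same outcome up to the swap of Corollary~\ref{cor:swap}.
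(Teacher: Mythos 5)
Your construction is essentially the paper's own proof: the correspondence is given by the lift of Theorem~\ref{thm:camMinLiftToMin} in one direction and its reverse replacements (Lemma~\ref{lem:replacements}, read right to left) in the other, with the two possible lifts of a point with a dangling pin absorbed into the swap of Corollary~\ref{cor:swap}, and camera-degrees preserved by those same results; your worry about confluence/order-independence is moot, since the replacements act independently on disjoint local features, each of which has a unique terminal form. One parenthetical justification is backwards, though harmless: the lift of a terminal problem is minimal simply by Theorem~\ref{thm:camMinLiftToMin}, not ``because terminal problems contain none of the dangling-pin features of Theorem~\ref{thm:danglingPins}'' --- terminal problems do contain such features (that is exactly why they are in general only camera-minimal and not minimal), and it is the lift that removes the dangling observations.
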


\noindent
Hence, we do not have to check minimality for all 143494 label-equivalence classes of \PLoneP{}s given by solutions to~\eqref{eq:balanced}, that we found at the end of Section~\ref{sec:balancedPL1Ps}.
Instead it is enough to consider the swap\&label-equivalence classes of the solutions to~\eqref{eq:balanced}.
It turns out that there are \textbf{76446} such classes\footnote{See SM for details on how to compute this.}.
So to find \emph{all} (camera-)minimal \PLoneP{}s in three views, we only have to check 76446 \PLoneP{}s for minimality, namely one representative for each of the swap\&label-equivalence classes.

\noindent
Finally, we present the analog to Theorem~\ref{thm:uniqueReduced} and describe how all camera-minimal \PLoneP{}s are obtained from the reduced camera-minimal ones.

\begin{theorem}
\label{thm:cameraMinimalReduction}
A camera-minimal \PLoneP{} in three views is reducible to a unique reduced \PLoneP{}. The corresponding projection forgets:

\noindent $\bullet\ $everything that is forgotten in Theorem~\ref{thm:uniqueReduced}

\noindent $\bullet\ $every line (free or pin) that is not observed in any view

\noindent $\bullet\ $every free line that is observed in exactly one view to reduce
\includegraphics[width=0.09\textwidth]{pix/freelineSIngleRed.pdf} to
\includegraphics[width=0.09\textwidth]{pix/empty.pdf}

\noindent $\bullet\ $every pin that is observed in exactly one view such that the view also observes 
\linebreak[4] \indent \hspace{-3.4mm}
the point of the pin (it does not matter if the other two views observe the point 
\linebreak[4] \indent \hspace{-3.4mm}
or not, but they must not see the line), e.g. 
\includegraphics[width=0.09\textwidth]{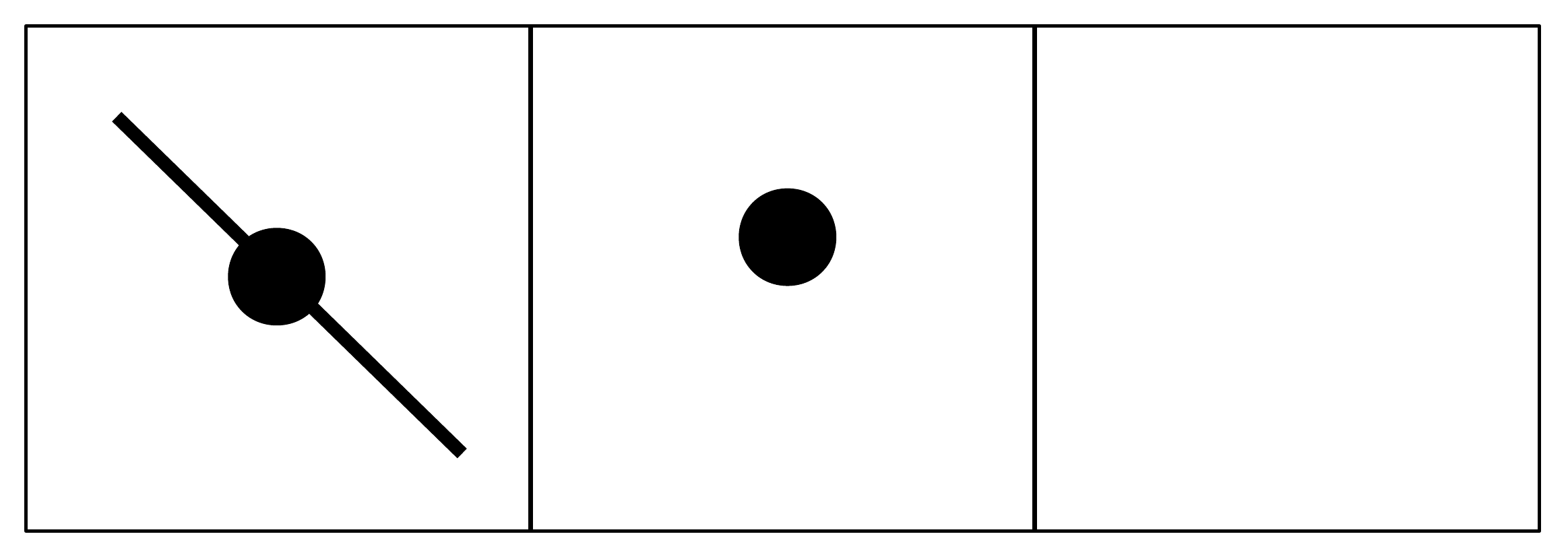}
is reduced to
\includegraphics[width=0.09\textwidth]{pix/pin0param2.pdf}

\noindent $\bullet\ $every point without pins that is observed in at most one view, e.g.
\includegraphics[width=0.09\textwidth]{pix/point.pdf} is 
\linebreak[4] \indent \hspace{-3.4mm}
reduced to \includegraphics[width=0.09\textwidth]{pix/empty.pdf}

\noindent $\bullet\ $every point that has exactly one pin if the point is not observed in any view,
\linebreak[4] \indent \hspace{-3.4mm}
e.g. a pin viewed like \includegraphics[width=0.09\textwidth]{pix/freeline.pdf} becomes a free line viewed like \includegraphics[width=0.09\textwidth]{pix/freeline.pdf}

\noindent $\bullet\ $every point together with its single pin if it is viewed like \includegraphics[width=0.09\textwidth]{pix/pin1Red.pdf}  to get \includegraphics[width=0.09\textwidth]{pix/empty.pdf}

\noindent $\bullet\ $every point together with all its pins if the point has at least two pins and the
\linebreak[4] \indent \hspace{-3.4mm}
point is not observed in any view, e.g. \includegraphics[width=0.09\textwidth]{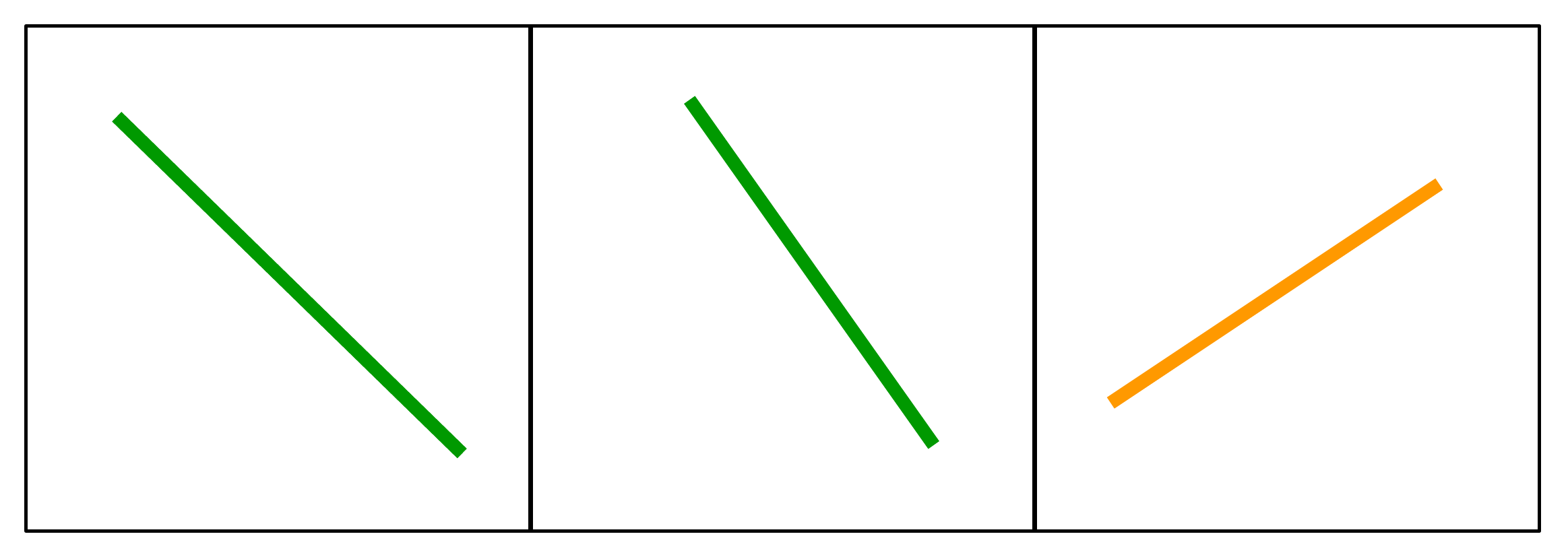} is reduced to \includegraphics[width=0.09\textwidth]{pix/empty.pdf}
\end{theorem}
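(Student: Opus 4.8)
\emph{Proof idea.}
The plan is to argue as in the proof of the analogue for minimal problems, \cref{thm:uniqueReduced}, carrying along the extra reduction moves that become available when only cameras must be recovered. The structural fact exploited throughout is that the point-line arrangement of a \PLoneP{} splits as an independent union of local features (free lines and points with pins), so that $\PplI$, $\YplIO$ and the maps $\ppi,\pi,\Phi$ in \cref{def:reduced-PL1P} all factor feature by feature, the cameras in $\cams{3}$ being the only shared factor, over which everything is generic since the given \PLoneP{} is camera-minimal. Any reduction forgetting parts of several features at once factors as a composition of single-feature reductions, and composites of reductions are again reductions because the lifting condition of \cref{def:reduced-PL1P} composes; hence a \PLoneP{} is reduced if and only if each of its local features is, on its own, irreducible. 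The statement thereby reduces to a classification of local features by how they are viewed in three views.

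First I would check that every move in the list is a genuine reduction that preserves camera-minimality. The ``at most one kept pin per forgotten point'' condition of \cref{def:reduced-PL1P} holds by inspection of each item, since we only ever forget lines, or a point together with all or all-but-one of its incident pins. For the lifting condition, in each case the forgotten data are observed too scantily to over-determine a back-projection: an image line back-projects to a plane and an image point to a ray inside that plane, so through any \emph{retained} incident point lying in such a plane there remains at least a pencil of space lines with the prescribed image, a point seen by at most one camera may be slid freely along its back-projected ray, and a point seen nowhere may be placed anywhere; in every case the $(X,L)\in\PplI$ required by \cref{def:reduced-PL1P} exists. This gives reducibility, and the camera-minimal counterpart of \cref{thm:minimal-reduction} (the theorem preceding \cref{thm:danglingPins}, which asserts that reducibility preserves camera-minimality and camera-degree) then shows that camera-minimality and the camera-degree are preserved. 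Since each listed move strictly decreases the number of local features plus the total number of pins, applying them exhaustively terminates in a \PLoneP{} $R$ --- still camera-minimal, of the same camera-degree --- to which no listed move applies.

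The main work is to show that $R$ is actually reduced, \ie that \emph{no} reduction at all applies to it. By the feature-wise factorization it suffices to establish two claims about a single local feature. (i) If none of the listed moves applies to it, then its viewing pattern appears either in \cref{tab:localFeatures} and \cref{thm:reducedMinimalLooks} or among the three dangling cases of \cref{thm:danglingPins}; this is a finite case analysis over the ways a free line, or a point with $k$ pins, can be observed in three views, in which camera-minimality and \cref{thm:danglingPins} (together with the dimension bookkeeping behind it) bound $k$ and rule out patterns such as a point with three or more pins not seen by all cameras, and in which every remaining pattern outside the allowed list is matched to one of the listed moves (a free line seen in at most one view, a pin seen by exactly one camera that also sees its point, a point seen by at most one camera, and so on). (ii) Conversely, each allowed pattern is irreducible: forgetting any retained sub-feature over-determines the corresponding back-projection --- the pencil of space lines through a fixed point maps non-dominantly onto a product of two or more image pencils, and a free line seen by all three cameras cannot be moved --- so the lifting condition fails and no reduction exists. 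Claims (i) and (ii) together show that $R$ is reduced and camera-minimal.

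Finally, uniqueness: if $(\PLP)$ is reducible to reduced \PLoneP{}s $R_1$ and $R_2$, then, reading reductions feature by feature, it is enough to know that every local feature reduces to a \emph{unique} irreducible form, which follows from confluence of the single-feature moves --- on the short list of feature types one checks directly that any two applicable moves lead to results reducible to a common one --- together with termination. Hence $R_1=R_2$, and $\ppi$ forgets precisely the features removed by the listed moves, as claimed. I expect the case analysis of claims (i) and (ii) to be the main obstacle: systematically listing all viewing patterns of points with several pins in three views and attaching to each either a concrete listed reduction or a back-projection over-determination certifying irreducibility; this is exactly where the hypothesis of camera-minimality, via \cref{thm:danglingPins}, is indispensable to keep the list of feature types finite.
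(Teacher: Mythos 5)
Your proposal follows essentially the same route as the paper: both reduce the statement to a per-local-feature analysis (a \PLoneP{} is reducible iff one of its local features is), verify feasibility and the lifting property for each listed forgetting move, and get uniqueness because each observed local feature reduces to a unique reduced form — exactly what the paper's lemmas on unobserved features and on pins, together with its exhaustive table of viewing patterns of free lines and points with at most two pins, carry out. The only differences are organizational: you phrase uniqueness via termination-plus-confluence of single-feature moves and invoke Theorem~\ref{thm:danglingPins} to constrain the terminal problem (strictly one should cite the underlying dimension lemmas, since that problem is not yet known to be reduced), while the finite case analysis you identify as the main work but defer is precisely what the paper executes in its table of observed local features.
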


\section{Checking minimality}
\label{sec:minimality}

To show that a balanced point-line problem $(\PLP)$ is minimal, it is equivalent to show that the Jacobian of the joint camera map $\Phi_{\PLP }$ at some point $(X,P)\in \PplI \times \cams{m}$ has full rank, i.e.\ rank given by the formula in equation~\eqref{eq:dim2D}.
This follows from Lemma~\ref{lem:balancedPlusDominant}, as explained in~\cite{PLMP}. 
On the implementation level, this minimality criterion requires writing down local coordinates for the various projective spaces and Grassmannians. To take advantage of fast exact arithmetic and linear algebra, we ran each test with random inputs $(X,P)$ over a finite field $\FF_q$ for some large prime $q$.
We observe that \emph{false positives}\footnote{Since we are testing minimality, being minimal is the positive outcome. See SM for detailed explanation why false positives cannot occur.} for these tests are impossible. To guard against \emph{false negatives}, we re-run the test on remaining non-minimal candidates for different choices of $q.$ Moreover, as a byproduct of our degree computations, we obtain yet another test of minimality, following the same procedure as \cite[Algorithm 1]{PLMP}.

\noindent
The computation described above detects non-minimality for 2878 of the 143494 label-equivalence classes of \PLoneP{}s given by solutions to~\eqref{eq:balanced}.
Among the 76446 swap\&label-equivalence classes, 1871 are not minimal.

\begin{result}
\label{thm:minimalProblemsFound}
There are 
    
\noindent $\bullet\ $ $\boldsymbol{140616}=143494-2878$ reduced minimal \PLoneP{}s and 

\noindent $\bullet\ $  $\boldsymbol{74575}=76446-1871$ terminal camera-minimal \PLoneP{}s 

\noindent in three calibrated views, up to relabeling cameras.
\end{result}
\section{Computing degrees}\label{sec:degrees}
From the perspective of solving minimal problems, it is highly desirable to compute all degrees of our minimal \PLoneP{}s. In particular, we wish to identify problems with small degrees that may be of practical interest. Since some problems in this list are known from prior work~\cite{Kileel-MPCTV-2016} to have large degrees ($>1000$), our main technique is a monodromy approach based on numerical homotopy continuation. Our implementation in Macaulay2~\cite{M2,Duff-Monodromy} is similar to that used in prior work~\cite{PLMP}.
The next result shows that there are many interesting problems with small degrees that are solvable with existing solving technology~\cite{Larsson-AGMP-2018,Fabri-ArXiv-2019}.
\begin{result}
\label{result:smalldegs}
There are $759$ (up to relabeling  cameras) terminal camera-minimal \PLoneP{}s in three calibrated views with camera-degree less than $300$. Their camera-degree distribution is shown in \Cref{tab:smalldegs}(a).
\end{result}
\begin{table}[t]
\centering
(a) \begin{tabular}{|c|ccccccccccc|}
\hline 
camera-degree & 64& 80&  144& 160& 216& 224& 240& 256& 264& 272& 288 \\ \hline
\# problems & 13& 9&  3& 547& 7& 2& 159& 2& 2& 11& 4 \\
\hline
\end{tabular}
\\[1.2em]
(b) \begin{tabular}{|c|ccccccccccccccc|}
\hline
camera-degree & 80& 160& 216& 240& 256& 264& 272& 288& 304& 312& 320& 352& 360& 368& 376 \\
\hline
\# problems & 9& 173& 4& 80& 2& 2& 2& 1& 5& 2& 213& 3& 9& 3& 1\\
\hline \hline
camera-degree & 384& 392& 400& 408& 416& 424& 432& 448& 456& 464& 472& 480& 488& 496 & \\
\hline
\# problems & 2& 9& 14& 2& 6& 10& 2& 7& 11& 4& 1& 96& 12& 9 & \\
\hline 
\end{tabular}

\vspace*{1em}

\caption{Distribution of camera-degrees of terminal camera-minimal \PLoneP{}s in three calibrated views with: (a) camera-degree less than $300$, (b) at most one pin per point and camera-degree less than 500.}
\label{tab:smalldegs}
\end{table}
It is also interesting to look at problems with simple incidence structure, since they are easier to detect in images. This motivates the following.
\begin{definition}\label{def:PLkP}\rm
We say that a point-line problem is a \emph{\PLkP{\kappa}} if each line in 3D is incident to at most $\kappa$ points.
\end{definition}
Definition~\ref{def:PLkP} generalizes Definition~\ref{def:PL1P} of \PLoneP{}s to get a hierarchy of subfamilies of point-line problems: $$\PLzeroP  \subset \PLoneP \subset  \PLkP{2} \subset \dots$$

\noindent
For instance, \PLzeroP{}s  consist of free points and free lines only\footnote{We note that reduced minimal \PLzeroP{}s are terminal.}. The family of \PLoneP{}s contains many problems involving at most one pin per point---see Result~\ref{res:p1l1p} and \Cref{tab:smalldegs}(b). Such features are readily provided by the  SIFT~\cite{DBLP:journals/ijcv/Lowe04} detector. \PLoneP{}s can also include features with two pins per point, which are readily provided by the LAF~\cite{Matas-ICPR-2002} detector, which can also be used to get $\PLkP{2}$s if all 3 LAF points are used. More complex incidences (\eg $\PLkP{3}$s) can be obtained from line t-junction detectors~\cite{DBLP:journals/ijcv/XiaDG14}.

\begin{result}
\label{res:p1l1p}
There are $9533$ (up to relabeling cameras) terminal camera-minimal \PLoneP{}s in three calibrated views which have at most one pin per point.
\textbf{694} of them have camera-degree less than \textbf{500}. 
Their camera-degree distribution is shown in~\Cref{tab:smalldegs}(b).
\end{result}

\begin{result}
\label{res:pl0p}
There are $51$ (up to relabeling  cameras) reduced minimal \PLzeroP{}s in three calibrated views. They are depicted together with their degrees in \Cref{tab:PL0P}.
\end{result}

\noindent Note that there are four problems in \Cref{tab:PL0P} that are \emph{extensions} of the classical minimal problem of five points in two views. This implies that the relative pose of the two cameras can be determined from the five point correspondences (highlighted in red). As to the remaining camera, each of these four problems can be interpreted as a \emph{camera registration} problem (the first one is known as P3P~\cite{DBLP:conf/cvpr/KneipSS11}): given a set of points and lines in the world and their images for a camera, find that camera pose.
Note that the solution counts indicate that there are 8, 4, 8, and 8 solutions to the corresponding four camera registration problems. Similar degrees were previously reported for camera registration from 3D points and lines for perspective and generalized cameras~\cite{Dhome-PAMI-1989,Chen-ICCV-1990,Ramalingam-ICRA-2011,Miraldo-TC-2015,Miraldo-ECCV-2018}.

\begin{result}\label{res:registration}
We determined all \PLoneP{}s in three calibrated views that are extensions of the five-points minimal problem. Of them, up to relabeling cameras,
    
    \noindent $\bullet\ $ $6300$ are reduced minimal,
    
    \noindent $\bullet\ $ $61$ of the $6300$ correspond to camera registration problems,\footnote{See SM for the table of all extensions leading to registration problems.} and
    
    \noindent $\bullet\ $ $3648$ are terminal camera-minimal.
\end{result}

\begin{table}[t]
    \centering
    \resizebox{1.0\textwidth}{!}{
    \begingroup
    \tikzstyle{every node}=[circle, draw, fill=black,inner sep=0pt, minimum width=6pt]
    \tikzstyle{redpoint}=[circle, draw, fill=red,inner sep=0pt, minimum width=6pt]
    \tikzstyle{ghost}=[circle, draw, fill=white, opacity=0,inner sep=0pt, minimum width=4pt]
    \input{tables/pl0p-table-redpoint.tex}
    \endgroup
    }
    
    \vspace*{1ex}
    
    \caption{Reduced minimal \PLzeroP{}s and their degrees. Points not visible in a given view are indicated in grey. Five-point subproblems are indicated in red. \vspace{-4mm}} 
    \label{tab:PL0P}
\end{table}

\vspace{-1mm}

\section{Conclusion}
In the context of configurations of points and lines, we have constructed an explicit classification of reduced minimal and camera-minimal problems in three calibrated views with the restriction that lines contain at most one point. Our results are rigorous: they rely on theorems stated in this article with proofs provided in the SM and assisted by computational enumeration programs (the code will be publicly available).

The number of (camera-)minimal problems in our classification is large. Apart from constructing a database of all these problems, we provide tables for interesting subfamilies where the number of the problems is relatively small. \Cref{tab:PL0P} in this article lists all such problems that are \PLzeroP{s} and there is a table in the SM showing all extensions of the bifocal five-point problem that can be interpreted as problems of camera registration.

Another part of our computational effort focused on determining algebraic degrees of the (camera-)minimal problems. The degree of a  problem provides a measure of complexity of a solver one may want to construct. The smaller the degree, the more plausible it is that a problem could be used in practice: \Cref{tab:smalldegs} shows the degree distributions for problems of degree less than 300. Computing exact degrees for all problems is an ongoing work.
\clearpage

{\small
\bibliographystyle{splncs}
\bibliography{bib/local}

\begin{thebibliography}{10}

\bibitem{Nister-5pt-PAMI-2004}
Nist\'er, D.:
\newblock An efficient solution to the five-point relative pose problem.
\newblock IEEE Transactions on Pattern Analysis and Machine Intelligence
  \textbf{26}(6) (June 2004)  756--770

\bibitem{Stewenius-ISPRS-2006}
Stewenius, H., Engels, C., Nist\'er, D.:
\newblock Recent developments on direct relative orientation.
\newblock ISPRS J. of Photogrammetry and Remote Sensing \textbf{60} (2006)
  284--294

\bibitem{kukelova2008automatic}
Kukelova, Z., Bujnak, M., Pajdla, T.:
\newblock Automatic generator of minimal problem solvers.
\newblock In: European Conference on Computer Vision (ECCV). (2008)

\bibitem{Byrod-ECCV-2008}
Byr{\"o}d, M., Josephson, K., {\AA}str{\"o}m, K.:
\newblock A column-pivoting based strategy for monomial ordering in numerical
  gr\"{o}bner basis calculations.
\newblock In: eccv. Volume 5305.
\newblock Springer (2008)  130--143

\bibitem{DBLP:conf/cvpr/RamalingamS08}
Ramalingam, S., Sturm, P.F.:
\newblock Minimal solutions for generic imaging models.
\newblock In: {CVPR} -- {IEEE} Conference on Computer Vision and Pattern
  Recognition. (2008)

\bibitem{Elqursh-CVPR-2011}
Elqursh, A., Elgammal, A.M.:
\newblock Line-based relative pose estimation.
\newblock In: {CVPR} 2011

\bibitem{mirzaei2011optimal}
Mirzaei, F.M., Roumeliotis, S.I.:
\newblock Optimal estimation of vanishing points in a manhattan world.
\newblock In: International Conference on Computer Vision (ICCV). (2011)

\bibitem{DBLP:conf/eccv/KneipSP12}
Kneip, L., Siegwart, R., Pollefeys, M.:
\newblock Finding the exact rotation between two images independently of the
  translation.
\newblock In: {ECCV} -- European Conference on Computer Vision. (2012)
  696--709

\bibitem{Hartley-PAMI-2012}
Hartley, R., Li, H.:
\newblock An efficient hidden variable approach to minimal-case camera motion
  estimation.
\newblock IEEE PAMI \textbf{34}(12) (2012)  2303--2314

\bibitem{kuang-astrom-2espc2-13}
Kuang, Y., {\AA}str{\"o}m, K.:
\newblock Stratified sensor network self-calibration from tdoa measurements.
\newblock In: 21st European Signal Processing Conference. (2013)

\bibitem{Kuang-ICCV-2013}
Kuang, Y., {\AA}str{\"{o}}m, K.:
\newblock Pose estimation with unknown focal length using points, directions
  and lines.
\newblock In: {IEEE} International Conference on Computer Vision, {ICCV} 2013,
  Sydney, Australia, December 1-8, 2013. (2013)  529--536

\bibitem{saurer2015minimal}
Saurer, O., Pollefeys, M., Lee, G.H.:
\newblock A minimal solution to the rolling shutter pose estimation problem.
\newblock In: Intelligent Robots and Systems (IROS), 2015 IEEE/RSJ
  International Conference on, IEEE (2015)  1328--1334

\bibitem{ventura2015efficient}
Ventura, J., Arth, C., Lepetit, V.:
\newblock An efficient minimal solution for multi-camera motion.
\newblock In: International Conference on Computer Vision (ICCV). (2015)
  747--755

\bibitem{DBLP:conf/eccv/CamposecoSP16}
Camposeco, F., Sattler, T., Pollefeys, M.:
\newblock Minimal solvers for generalized pose and scale estimation from two
  rays and one point.
\newblock In: {ECCV} -- European Conference on Computer Vision. (2016)
  202--218

\bibitem{SalaunMM-ECCV-2016}
Sala{\"{u}}n, Y., Marlet, R., Monasse, P.:
\newblock Robust and accurate line- and/or point-based pose estimation without
  manhattan assumptions.
\newblock In: {ECCV} 2016

\bibitem{larsson2017efficient}
Larsson, V., {\AA}str{\"o}m, K., Oskarsson, M.:
\newblock Efficient solvers for minimal problems by syzygy-based reduction.
\newblock In: Computer Vision and Pattern Recognition (CVPR). (2017)

\bibitem{Larsson-Saturated-ICCV-2017}
Larsson, V., {\AA}str{\"{o}}m, K., Oskarsson, M.:
\newblock Polynomial solvers for saturated ideals.
\newblock In: {IEEE} International Conference on Computer Vision, {ICCV} 2017,
  Venice, Italy, October 22-29, 2017. (2017)  2307--2316

\bibitem{larsson2017making}
Larsson, V., Kukelova, Z., Zheng, Y.:
\newblock Making minimal solvers for absolute pose estimation compact and
  robust.
\newblock In: International Conference on Computer Vision (ICCV). (2017)

\bibitem{AgarwalLST17}
Agarwal, S., Lee, H., Sturmfels, B., Thomas, R.R.:
\newblock On the existence of epipolar matrices.
\newblock International Journal of Computer Vision \textbf{121}(3) (2017)
  403--415

\bibitem{Barath-CVPR-2017}
Barath, D., Toth, T., Hajder, L.:
\newblock A minimal solution for two-view focal-length estimation using two
  affine correspondences.
\newblock In: 2017 {IEEE} Conference on Computer Vision and Pattern
  Recognition, {CVPR} 2017, Honolulu, HI, USA, July 21-26, 2017. (2017)
  2557--2565

\bibitem{Barath-CVPR-2018}
Barath, D.:
\newblock Five-point fundamental matrix estimation for uncalibrated cameras.
\newblock In: 2018 {IEEE} Conference on Computer Vision and Pattern
  Recognition, {CVPR} 2018, Salt Lake City, UT, USA, June 18-22, 2018. (2018)
  235--243

\bibitem{Barath-TIP-2018}
Barath, D., Hajder, L.:
\newblock Efficient recovery of essential matrix from two affine
  correspondences.
\newblock {IEEE} Trans. Image Processing \textbf{27}(11) (2018)  5328--5337

\bibitem{Miraldo-ECCV-2018}
Miraldo, P., Dias, T., Ramalingam, S.:
\newblock A minimal closed-form solution for multi-perspective pose estimation
  using points and lines.
\newblock In: Computer Vision - {ECCV} 2018 - 15th European Conference, Munich,
  Germany, September 8-14, 2018, Proceedings, Part {XVI}. (2018)  490--507

\bibitem{Snavely-SIGGRAPH-2006}
Snavely, N., Seitz, S.M., Szeliski, R.:
\newblock {Photo tourism: exploring photo collections in 3D}.
\newblock In: ACM SIGGRAPH. (2006)

\bibitem{snavely2008modeling}
Snavely, N., Seitz, S.M., Szeliski, R.:
\newblock Modeling the world from internet photo collections.
\newblock International Journal of Computer Vision (IJCV) \textbf{80}(2) (2008)
   189--210

\bibitem{schoenberger2016sfm}
Sch\"{o}nberger, J.L., Frahm, J.M.:
\newblock Structure-from-motion revisited.
\newblock In: Conference on Computer Vision and Pattern Recognition (CVPR).
  (2016)

\bibitem{rocco2018neighbourhood}
Rocco, I., Cimpoi, M., Arandjelović, R., Torii, A., Pajdla, T., Sivic, J.:
\newblock Neighbourhood consensus networks (2018)

\bibitem{Nister04visualodometry}
Nist{\'{e}}r, D., Naroditsky, O., Bergen, J.:
\newblock Visual odometry.
\newblock In: Computer Vision and Pattern Recognition (CVPR). (2004)  652--659

\bibitem{Alismail-odometry}
Alismail, H.S., Browning, B., Dias, M.B.:
\newblock Evaluating pose estimation methods for stereo visual odometry on
  robots.
\newblock In: the 11th International Conference on Intelligent Autonomous
  Systems (IAS-11). (January 2011)

\bibitem{Sattler-PAMI-2017}
Sattler, T., Leibe, B., Kobbelt, L.:
\newblock Efficient {\&} effective prioritized matching for large-scale
  image-based localization.
\newblock {IEEE} Trans. Pattern Anal. Mach. Intell. \textbf{39}(9) (2017)
  1744--1756

\bibitem{taira2018inloc}
Taira, H., Okutomi, M., Sattler, T., Cimpoi, M., Pollefeys, M., Sivic, J.,
  Pajdla, T., Torii, A.:
\newblock {InLoc}: Indoor visual localization with dense matching and view
  synthesis.
\newblock In: {CVPR}. (2018)

\bibitem{PLMP}
Duff, T., Kohn, K., Leykin, A., Pajdla, T.:
\newblock {PLMP} - point-line minimal problems in complete multi-view
  visibility.
\newblock In: {ICCV} 2019

\bibitem{Fischler-Bolles-ACM-1981}
Fischler, M.A., Bolles, R.C.:
\newblock Random sample consensus: a paradigm for model fitting with
  applications to image analysis and automated cartography.
\newblock Commun. ACM \textbf{24}(6) (1981)  381--395

\bibitem{Raguram-USAC-PAMI-2013}
Raguram, R., Chum, O., Pollefeys, M., Matas, J., Frahm, J.:
\newblock {USAC:} {A} universal framework for random sample consensus.
\newblock {IEEE} Transactions on Pattern Analysis Machine Intelligence
  \textbf{35}(8) (2013)  2022--2038

\bibitem{Kileel-MPCTV-2016}
Kileel, J.:
\newblock Minimal problems for the calibrated trifocal variety.
\newblock CoRR \textbf{abs/1611.05947} (2016)

\bibitem{DBLP:journals/ijcv/Lowe04}
Lowe, D.G.:
\newblock Distinctive image features from scale-invariant keypoints.
\newblock International Journal of Computer Vision \textbf{60}(2) (2004)
  91--110

\bibitem{DBLP:conf/icpr/MatasOC02}
Matas, J., Obdrz{\'{a}}lek, S., Chum, O.:
\newblock Local affine frames for wide-baseline stereo.
\newblock In: 16th International Conference on Pattern Recognition, {ICPR}
  2002, Quebec, Canada, August 11-15, 2002, {IEEE} Computer Society (2002)
  363--366

\bibitem{Fabri-ArXiv-2019}
Fabri~et al, R.:
\newblock Trifocal relative pose from lines at points and its efficient
  solution.
\newblock Preprint arXiv:1903.09755 (2019)

\bibitem{Larsson-CVPR-2018}
Larsson, V., Oskarsson, M., {\AA}str{\"{o}}m, K., Wallis, A., Kukelova, Z.,
  Pajdla, T.:
\newblock Beyond grobner bases: Basis selection for minimal solvers.
\newblock In: 2018 {IEEE} Conference on Computer Vision and Pattern
  Recognition, {CVPR} 2018, Salt Lake City, UT, USA, June 18-22, 2018. (2018)
  3945--3954

\bibitem{kukelova2017clever}
Kukelova, Z., Kileel, J., Sturmfels, B., Pajdla, T.:
\newblock A clever elimination strategy for efficient minimal solvers.
\newblock In: Computer Vision and Pattern Recognition (CVPR), IEEE (2017)

\bibitem{Hartley-IJCV-1997}
Hartley, R.I.:
\newblock Lines and points in three views and the trifocal tensor.
\newblock International Journal of Computer Vision \textbf{22}(2) (1997)
  125--140

\bibitem{Johansson-ICVGIP-2002}
Johansson, B., Oskarsson, M., {\AA}str{\"{o}}m, K.:
\newblock Structure and motion estimation from complex features in three views.
\newblock In: {ICVGIP} 2002, Proceedings of the Third Indian Conference on
  Computer Vision, Graphics {\&} Image Processing, Ahmadabad, India, December
  16-18, 2002. (2002)

\bibitem{MaHVKS-IJCV-2004}
Ma, Y., Huang, K., Vidal, R., Kosecka, J., Sastry, S.:
\newblock Rank conditions on the multiple-view matrix.
\newblock International Journal of Computer Vision \textbf{59}(2) (2004)
  115--137

\bibitem{Oskarsson-IVC-2004}
Oskarsson, M., Zisserman, A., {\AA}str{\"{o}}m, K.:
\newblock Minimal projective reconstruction for combinations of points and
  lines in three views.
\newblock Image Vision Comput. \textbf{22}(10) (2004)  777--785

\bibitem{Larsson-Syzygy-CVPR-2017}
Larsson, V., {\AA}str{\"{o}}m, K., Oskarsson, M.:
\newblock Efficient solvers for minimal problems by syzygy-based reduction.
\newblock In: 2017 {IEEE} Conference on Computer Vision and Pattern
  Recognition, {CVPR} 2017, Honolulu, HI, USA, July 21-26, 2017. (2017)
  2383--2392

\bibitem{Fabbri-IJCV-2016}
Fabbri, R., Kimia, B.B.:
\newblock Multiview differential geometry of curves.
\newblock International Journal of Computer Vision \textbf{120}(3) (2016)
  324--346

\bibitem{JoswigKSW16}
Joswig, M., Kileel, J., Sturmfels, B., Wagner, A.:
\newblock Rigid multiview varieties.
\newblock {IJAC} \textbf{26}(4) (2016)  775--788

\bibitem{AholtO14}
Aholt, C., Oeding, L.:
\newblock The ideal of the trifocal variety.
\newblock Math. Comput. \textbf{83}(289) (2014)  2553--2574

\bibitem{Aholt-1107-2875}
Aholt, C., Sturmfels, B., Thomas, R.R.:
\newblock A hilbert scheme in computer vision.
\newblock CoRR \textbf{abs/1107.2875} (2011)

\bibitem{Trager-PhD-2018}
Trager, M.:
\newblock Cameras, Shapes, and Contours: Geometric Models in Computer Vision.
  (Cam{\'{e}}ras, formes et contours: mod{\`{e}}les g{\'{e}}om{\'{e}}triques en
  vision par ordinateur).
\newblock PhD thesis, {\'{E}}cole Normale Sup{\'{e}}rieure, Paris, France
  (2018)

\bibitem{Ponce-IJCV-2016}
Trager, M., Ponce, J., Hebert, M.:
\newblock Trinocular geometry revisited.
\newblock International Journal Computer Vision (March 2016)  1--19

\bibitem{HZ-2003}
Hartley, R., Zisserman, A.:
\newblock Multiple View Geometry in Computer Vision. 2nd edn.
\newblock Cambridge (2003)

\bibitem{Nister-IJCV-2006}
Nist{\'{e}}r, D., Schaffalitzky, F.:
\newblock Four points in two or three calibrated views: Theory and practice.
\newblock International Journal of Computer Vision \textbf{67}(2) (2006)
  211--231

\bibitem{M2}
Grayson, D.R., Stillman, M.E.:
\newblock Macaulay2, a software system for research in algebraic geometry.
\newblock Available at http://www.math.uiuc.edu/Macaulay2/

\bibitem{Duff-Monodromy}
Duff, T., Hill, C., Jensen, A., Lee, K., Leykin, A., Sommars, J.:
\newblock Solving polynomial systems via homotopy continuation and monodromy.
\newblock IMA Journal of Numerical Analysis (2018)

\bibitem{Larsson-AGMP-2018}
Larsson, V., et. al:
\newblock Automatic generator of minimal problems.
\newblock
  http://www2.maths.lth.se/matematiklth/personal/viktorl/code/basis\_selection.zip
  (2015)

\bibitem{Matas-ICPR-2002}
Matas, J., Obdrz{\'{a}}lek, S., Chum, O.:
\newblock Local affine frames for wide-baseline stereo.
\newblock In: 16th International Conference on Pattern Recognition, {ICPR}
  2002, Quebec, Canada, August 11-15, 2002. (2002)  363--366

\bibitem{DBLP:journals/ijcv/XiaDG14}
Xia, G., Delon, J., Gousseau, Y.:
\newblock Accurate junction detection and characterization in natural images.
\newblock International Journal of Computer Vision \textbf{106}(1) (2014)
  31--56

\bibitem{DBLP:conf/cvpr/KneipSS11}
Kneip, L., Scaramuzza, D., Siegwart, R.:
\newblock A novel parametrization of the perspective-three-point problem for a
  direct computation of absolute camera position and orientation.
\newblock In: {CVPR} -- {IEEE} Conference on Computer Vision and Pattern
  Recognition. (2011)  2969--2976

\bibitem{Dhome-PAMI-1989}
{Dhome}, M., {Richetin}, M., {Lapreste}, J.., {Rives}, G.:
\newblock Determination of the attitude of 3d objects from a single perspective
  view.
\newblock IEEE Transactions on Pattern Analysis and Machine Intelligence
  \textbf{11}(12) (1989)  1265--1278

\bibitem{Chen-ICCV-1990}
{Chen}, H.H.:
\newblock Pose determination from line-to-plane correspondences: existence
  condition and closed-form solutions.
\newblock In: ICCV. (1990)  374--378

\bibitem{Ramalingam-ICRA-2011}
Ramalingam, S., Bouaziz, S., Sturm, P.:
\newblock Pose estimation using both points and lines for geo-localization.
\newblock In: ICRA. (2011)  4716--4723

\bibitem{Miraldo-TC-2015}
Miraldo, P., Araujo, H.:
\newblock Direct solution to the minimal generalized pose.
\newblock Cybernetics, IEEE Transactions on \textbf{45}(3) (2015)  418--429

\bibitem{LRS18}
Leykin, A., Rodriguez, J.I., Sottile, F.:
\newblock Trace test.
\newblock Arnold Mathematical Journal \textbf{4}(1) (2018)  113--125

\bibitem{HR19}
Hauenstein, J.D., Rodriguez, J.I.:
\newblock Multiprojective witness sets and a trace test.
\newblock To appear in Advances in Geometry. arXiv preprint arXiv:1507.07069
  (2019)

\end{thebibliography}
}

\newpage
\begin{center}
    \textbf{{\Large \PLoneP{} - Point-line Minimal Problems under Partial Visibility in Three Views \\[1ex] --- SM ---}} \\
\bigskip
\ifarxiv\else
Paper ID \ECCVSubNumber
\fi
\end{center}

\noindent Here we give proofs of all results from the main paper, some additional results for camera registration, and other details.
On the last page, we provide a glossary of assumptions, properties and concepts used through the whole article; see~\Cref{sec:glossary}.

\section{Note on partial visibility in two views}

Let us recall that we consider reduced minimal and reduced camera-minimal \PLoneP{}s in \emph{three views}, since there is only one such \PLoneP{} in two views, namely the five-points problem. 

To argue this, we first notice that free lines cannot occur in a reduced \PLoneP{} in two views. Indeed, if there was a free line -- no matter if it is observed in both views, one view, or not at all -- we could forget the free line both in 3D and in the images to reduce the \PLoneP{} (i.e. forgetting the free line satisfies Definition~\ref{def:reduced-PL1P} of reducibility).

Secondly, we argue that pins cannot appear in a reduced \PLoneP{} in two views.
To see this we distinguish several cases, depending on how often a hypothetical pin is observed in the views.

\begin{itemize}
    \item If there exists a pin in 3D that is not observed in any of the two views, then the \PLoneP{} would be reducible by simply forgetting that pin.
    \item If a pin in 3D is observed in exactly one of the two views, it could either appear as a pin or as a free line in that view (\ie depending on if that view also observes the point of the pin or not).
    \begin{enumerate}
        \item If the single view seeing the pin also observes its point, the \PLoneP{} is reducible by forgetting the pin.
        \item If the single view seeing the pin does not observe its point, that view cannot observe any of the other pins of that point either (due to our assumptions on $\mathcal{O}$ at the end of Definition~\ref{def:PLP}). 
        Hence, no matter how the other view observes that point and its other pins, the \PLoneP{} is reducible by forgetting the point together with all of its pins.
    \end{enumerate}
    \item If a pin in 3D is observed in both views, it could appear as either a pin or a free line in either of the views.
    \begin{enumerate}
        \item If both views also observe the point of the pin, the \PLoneP{} is reducible by forgetting the pin.
        \item If at least one of the two views does not observe the point of the pin, then we may argue as in case~2 above that such a view cannot observe any of the other pins of that point either. Once again, the \PLoneP{} is reducible by forgetting the point together with all of its pins.
    \end{enumerate}
\end{itemize}

Finally, a \PLoneP{} which observes a point without pins in at most one of its views is reducible by forgetting that point.
All in all, we conclude that reduced \PLoneP{}s in two views can only consist of points without pins which are observed in both views.
The only such (camera-)minimal problem is the five-points problem.

\section{Note on registration problems}
Among the minimal problems we discovered that there are extensions of the classical five-point problem in two views that can be interpreted as camera registration problems; see \Cref{tab:registration}.

\begin{table}[h!]
    \centering
    \resizebox{1.0\textwidth}{!}{
    \begingroup
    \tikzstyle{every node}=[circle, draw, fill=black,inner sep=0pt, minimum width=6pt]
    \tikzstyle{ghost}=[circle, draw, fill=white, opacity=0,inner sep=0pt, minimum width=4pt]
    \input{tables/registration-problems-table.tex}
    \endgroup
    }
    \caption{Camera registration: minimal problems that can be determined as relative pose problems for two views appended with a minimal registration problem for the third camera.} 
    \label{tab:registration}
\end{table}

\begin{remark}
Extension of reduced camera-minimal problems by means of camera registration problems is one simple construction for (camera-)minimal problems in arbitrarily many views.    
\end{remark}

\section{Proofs}

\subsection{Theorems 1 and 4}

We start by investigating the implications of the two conditions in Definition~\ref{def:reduced-PL1P} of reducibility. When obtaining  a new \PLoneP{} $(p',l',\cI',\obs')$ from a given \PLoneP{} $(\PLP)$ by forgetting some of its points and lines, the induced projections $\Pi$ and $\pi$ between the domains and codomains of the joint camera maps yield the commutative diagram in Figure~\ref{fig:commutativeDiagram}. We call $\Pi$ the \emph{forgetting map}, and note that the map $\pi$ between the image varieties is completely determined by the forgetting map $\Pi$. If the forgetting map $\Pi$ satisfies the first condition in Definition~\ref{def:reduced-PL1P} (i.e. for each forgotten point, at most one of its pins is kept), we say that it is \emph{feasible}. The second condition in Definition~5 we refer to as the \emph{lifting property}.

\begin{figure}
    \centering
        \begin{tikzcd}
        \PplI \times \cams{m}
        \arrow[r, dashed, "\Phi~=~\Phi_{\PLP}"] 
        \arrow[d, "\ppi" left]
         &[20mm] \YplIO  
        \arrow[d, "{\pi}"]
         \\
        \cX_{p',l',\cI'} \times \cams{m}
        \arrow[r, dashed, "\Phi'~=~\Phi_{p',l',\cI',\obs'}" below]
        &[20mm]
        \mathcal{Y}_{p',l',\cI',\obs'}
        \end{tikzcd}
    \caption{Two \PLoneP{}s and their joint camera maps related by a forgetting map $\Pi$. The upper \PLoneP{} is reducible to the lower \PLoneP{} iff the forgetting map $\Pi$ is feasible and satisfies the lifting property.}
    \label{fig:commutativeDiagram}
\end{figure}

\begin{lemma}
\label{lem:forget-irreducible}
If the forgetting map $\Pi$ is feasible, then both $\Pi$ and $\pi$ are surjective and have irreducible fibers\footnote{A fiber of a map is the preimage over a single point in its codomain.} of equal dimension.
\end{lemma}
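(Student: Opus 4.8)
\textbf{Proof plan for Lemma~\ref{lem:forget-irreducible}.}

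The plan is to analyze the forgetting map $\Pi$ feature by feature, exploiting the local structure of \PLoneP{}s. Since a \PLoneP{} arrangement decomposes (Definition~\ref{def:PL1P} and the discussion following it) into independent local features --- free lines and points with $k$ pins --- and the forgetting map acts independently on each discarded feature, both $\Pi$ and $\pi$ factor as products of elementary forgetting maps, each of which either (a) drops a whole local feature, or (b) drops some pins of a kept point while retaining at most one pin (feasibility). It therefore suffices to check surjectivity and irreducibility/dimension of fibers for these elementary maps, and then multiply: a product of surjective maps with irreducible fibers of dimensions $d_i$ is surjective with irreducible fibers of dimension $\sum d_i$ (irreducibility of a product of irreducible varieties over an algebraically closed field, applied fiberwise). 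First I would set up this factorization carefully, noting that $\cams{m}$ is untouched by $\Pi$ so it contributes a trivial factor on both sides, and that $\pi$ is literally the product of the corresponding projections on the image side because, as remarked after Figure~\ref{fig:commutativeDiagram}, $\pi$ is determined by $\Pi$.

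The core of the argument is then the case analysis for a single discarded feature. For the domain map $\Pi$: dropping a whole free line is the projection $(\GG_{1,3})\times(\text{rest})\to(\text{rest})$, surjective with fiber $\GG_{1,3}$, which is irreducible of dimension $4$. Dropping a point with $k\ge 1$ pins is a projection forgetting a factor whose fiber is the variety of a point together with $k$ incident lines in $\PP^3$ (with no constraints tying it to the retained data, since the feature was independent); this fiber is a tower of projective-bundle-type maps --- choose the point in $\PP^3$, then choose each of the $k$ lines through it as a point in a $\PP^2$ worth of lines through the point --- hence irreducible of dimension $3+2k$. Dropping a point with no pins gives fiber $\PP^3$. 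The genuinely new case is (b): forgetting some pins of a \emph{retained} point $X$. Here feasibility is essential --- at most one pin of $X$ survives. The fiber over a fixed configuration (in particular a fixed $X$, and possibly one fixed incident line through $X$) is the space of the forgotten lines through the fixed point $X$, each ranging over the $\PP^2$-pencil of lines through $X$; this is again an iterated projective bundle, so irreducible, of the expected dimension. For $\pi$ on the image side the same analysis applies verbatim with $\PP^2,\GG_{1,2}$ in place of $\PP^3,\GG_{1,3}$ --- the image of a local feature in one view is an independent local 2D-feature by the definition of $\YplIO$, and the incidence constraints $x_{v,i}\in\ell_{v,j}$ cut out exactly the same projective-bundle structure. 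Finally one checks that in each case the elementary $\Pi$ and the corresponding elementary $\pi$ have fibers of \emph{equal} dimension; this is immediate from the parametrizations (a line through a point in $\PP^3$ and a line through a point in $\PP^2$ are both parametrized by a $\PP^1$-pencil --- wait, by a $\PP^1$; a free line in $\PP^3$ by $\GG_{1,3}$ of dimension $4$ versus $\GG_{1,2}$ of dimension $2$), so here one must instead argue that $\Pi$ and $\pi$ are \emph{compatible fibrations}: the diagram commutes, $\Phi$ and $\Phi'$ are dominant/generically defined, and the fiber dimensions match because forgetting the same features on both sides removes the same number of degrees of freedom, which is precisely the content tabulated (implicitly) when one subtracts the ``3D''/``2D'' entries of Table~\ref{tab:localFeatures} for the removed features versus their counterpart; equivalently, one observes that on a \emph{minimal} (or merely balanced) source the generic fiber of $\Phi$ is finite, so $\dim(\text{fiber of }\Pi)=\dim(\text{fiber of }\pi)$ follows from the commuting square together with genericity. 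I would state this dimension-matching as the combinatorial bookkeeping that it is, deferring to the degree-of-freedom counts already established.

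I expect the main obstacle to be the dimension-equality claim, $\dim(\mathrm{fiber}(\Pi))=\dim(\mathrm{fiber}(\pi))$, rather than irreducibility (which is a routine iterated-projective-bundle argument). The subtlety is that for a free line the 3D fiber has dimension $4$ while the 2D fiber has dimension $2$, so the equality is \emph{false} for the unrestricted forgetting maps and only becomes true once one conditions correctly: $\pi$ is the fiber-product projection $\YplIO=\mathcal{Y}_{p',l',\cI',\obs'}\times_{(\text{2D features forgotten})}$, and its fiber over $\Phi'(S')$ must be compared not with the whole fiber of $\Pi$ but with the part relevant to the lifting property. The clean way to handle this, and the route I would take, is: first prove irreducibility and compute $\dim\mathrm{fiber}(\Pi)$ and $\dim\mathrm{fiber}(\pi)$ separately by the projective-bundle argument above; then show they are equal by a parameter count, using that each forgotten free line contributes $4$ to $\dim\cX$ but the forgetting map on the image side forgets a free line in each view it is seen in, and the bookkeeping in equations~\eqref{eq:dim2D}--\eqref{eq:balanced} is exactly arranged so that in the reducible situations of Theorem~\ref{thm:uniqueReduced} (free line seen in two views: $4$ vs $2+2+?$) the net removed dimensions on the two sides agree. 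Concretely, the cleanest formulation is that feasibility guarantees each elementary forgetting map is \emph{equidimensional and has irreducible fibers}, and that the commuting square forces $\dim\mathrm{fiber}(\Pi)-\dim\mathrm{fiber}(\pi)$ to equal $\dim\Phi^{-1}(\text{pt})-\dim(\Phi')^{-1}(\text{pt})$ generically, which is $0$ because both source problems have the same relative dimension over their images --- a consequence of the local feature contributing equal ``new'' degrees of freedom in 3D and in its images once one accounts for all views in which it is observed. I would isolate this as a short sublemma and verify it case by case against Table~\ref{tab:localFeatures}.
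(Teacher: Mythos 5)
There is a genuine gap, and it stems from a misreading of the statement. The phrase ``irreducible fibers of equal dimension'' means that \emph{each} of $\Pi$ and $\pi$ separately has all of its fibers irreducible and of one common dimension (so that the quantities $\fiberPi$ and $\fiberpi$ are well defined); it does \emph{not} assert $\fiberPi=\fiberpi$. Indeed, immediately after this lemma the paper introduces $\fiberPi$ and $\fiberpi$ as two a priori different numbers and only proves inequalities between them under \emph{extra} hypotheses: minimality of the source gives $\fiberPi\le\fiberpi$, and the lifting property gives $\fiberPi\ge\fiberpi$. Your own example already shows the cross-map equality is false under feasibility alone (forgetting a free line: fiber of $\Pi$ is $\GG_{1,3}$ of dimension $4$, while the fiber of $\pi$ has dimension $2$, $4$, or $6$ depending on how often the line is observed), and the case table in the SM is full of rows with $\fiberPi\neq\fiberpi$. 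Your attempted rescue of the equality is therefore doomed: it imports hypotheses (minimality/balancedness) that are not in the lemma, and even with minimality one only gets an inequality, not equality. The entire last paragraph, and the proposed ``dimension-matching sublemma,'' should be deleted rather than repaired.

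The remainder of your plan is essentially the paper's argument and is fine: factor a feasible $\Pi$ into elementary forgetting maps that drop a single point or a single line (feasibility guarantees that a forgotten point retains at most one incident kept line), observe that a composition of surjective maps with irreducible equidimensional fibers is again of that kind, and identify the elementary fibers explicitly --- $\GG_{1,3}$ for a free line, $\PP^2$ for a pin through a kept point, $\PP^3$ or $\PP^1$ for a point with zero or one kept incident line --- with the same analysis view-by-view for $\pi$ using $\GG_{1,2}$, $\PP^2$, $\PP^1$. If you restrict your write-up to surjectivity, irreducibility, and equidimensionality of each map on its own, you have the lemma.
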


\begin{proof}
We first show that $\Pi$ is surjective and has irreducible fibers of the same dimension. We may assume that $\Pi$ forgets either a single point or a single line, as a every feasible forgetting map is the composition of several feasible forgetting maps which forget either one point or one line. Moreover, the composition of several surjective maps with irreducible fibers of equal dimension is again surjective and has irreducible fibers of the same dimension.


Let us first assume that $\Pi$ forgets a single line. If this line is the pin of a point, each fiber of $\Pi$ is the set of lines in $\PP^3$ through that point; this set is isomorphic to $\PP^2$. If $\Pi$ forgets a free line, its fibers are isomorphic to $\GG_{1,3}$. In both cases, the forgetting map $\Pi$ is surjective.

Now let us assume that $\Pi$ forgets a single point. As $\Pi$ is feasible, this point is incident to at most one line in space. Depending on if it is incident to one or zero lines, the fibers of $\Pi$ are isomorphic to either $\PP^1$ or $\PP^3$. In either case, $\Pi$ is surjective.

To show that $\pi$ is surjective with irreducible fibers of equal dimension, we apply a nearly-identical proof as above to the $m$ factors of $\pi$ corresponding to the different views.
\qed
\end{proof}

For a feasible forgetting map $\Pi$, we write  $\fiberPi$, resp.\ $\fiberpi$, for the dimensions of the fibers of $\Pi$, resp.\ $\pi$. Moreover, we denote by $\cdeg(\PLP)$ the camera-degree of a point-line problem $(\PLP)$. We will also use this notation for point-line problems which are not camera-minimal:  in that case, the camera-degree is either zero (if the joint camera map is not dominant) or $\infty$.

\begin{lemma}\label{lem:diagram-one-way}
Consider Figure~\ref{fig:commutativeDiagram} with a feasible forgetting map $\Pi$.
\begin{enumerate}
    \item  If the upper joint camera map $\Phi$ is dominant, so is the lower one $\Phi'$.
    \item $\cdeg(\PLPprime) \geq \cdeg(\PLP)$.
    \item If $(\PLP)$ is minimal, then 
    $\fiberPi \leq \fiberpi$.
\end{enumerate}
\end{lemma}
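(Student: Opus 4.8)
\textbf{Proof plan for Lemma~\ref{lem:diagram-one-way}.}
The plan is to exploit the commutative diagram in Figure~\ref{fig:commutativeDiagram} together with the surjectivity and equidimensional-irreducible-fiber properties of $\Pi$ and $\pi$ established in Lemma~\ref{lem:forget-irreducible}. For part~(1): if $\Phi$ is dominant, then $\pi \circ \Phi = \Phi' \circ \Pi$ has dense image in $\mathcal{Y}_{p',l',\cI',\obs'}$ because $\pi$ is surjective (hence dominant); since this composite factors through $\Phi'$, the image of $\Phi'$ is also dense, so $\Phi'$ is dominant. For part~(2): assuming $\Phi$ (equivalently $\Phi'$, by part~(1)) is dominant — otherwise $\cdeg(\PLPprime)$ is $0$ or $\infty$ and the inequality is trivial — I would take a generic image point $(x,\ell) \in \mathcal{Y}_{p',l',\cI',\obs'}$ in the image of $\Phi'$, pick a generic point $(x'',\ell'')$ in the fiber $\pi^{-1}(x,\ell)$ (which lies over a generic point of $\mathcal{Y}_{p,l,\cI,\obs}$ since $\pi$ is surjective with irreducible fibers, so a generic choice stays generic in the source), and compare the camera fibers. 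The key observation is that $\gamma_{p',l',\cI'}(\Phi'^{-1}(x,\ell)) \supseteq \gamma_{p,l,\cI}(\Phi^{-1}(x'',\ell''))$ via the identity on the $\cams{m}$-factor: any camera tuple $P$ admitting a lift to a solution of $(\PLP)$ over $(x'',\ell'')$ also admits one (namely its $\Pi$-image) to a solution of $(\PLPprime)$ over $(x,\ell)$ because the diagram commutes. Hence the camera-degree can only grow when we forget features.

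For part~(3), which I expect to be the main obstacle: assume $(\PLP)$ is minimal, so by Lemma~\ref{lem:balancedPlusDominant} it is balanced and $\Phi$ is dominant, and in particular $\Phi$ is a \emph{generically finite dominant} map between varieties of equal dimension $\dim(\PplI\times\cams{m}) = \dim(\YplIO)$. Applying part~(1), $\Phi'$ is dominant too. Now I would run a dimension count along the two routes around the square. Over a generic point $(x,\ell)\in\mathcal{Y}_{p',l',\cI',\obs'}$, the fiber $\Phi'^{-1}(x,\ell)$ has dimension $\dim(\cX_{p',l',\cI'}\times\cams{m}) - \dim(\mathcal{Y}_{p',l',\cI',\obs'})$. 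Pulling this fiber back along the two maps: $(\pi\circ\Phi)^{-1}(x,\ell) = \Phi^{-1}(\pi^{-1}(x,\ell))$, and since $\pi^{-1}(x,\ell)$ has dimension $\fiberpi$ and $\Phi$ is dominant with generic fiber dimension $0$ (minimality), this preimage has dimension $\fiberpi$. On the other route, $(\Phi'\circ\Pi)^{-1}(x,\ell) = \Pi^{-1}(\Phi'^{-1}(x,\ell))$, and since $\Pi$ is surjective with irreducible fibers of dimension $\fiberPi$, this preimage has dimension $\dim(\Phi'^{-1}(x,\ell)) + \fiberPi$ — provided the generic fiber of $\Phi'^{-1}(x,\ell)$ meets the locus where $\Pi$ has the expected fiber dimension, which holds generically since $\Pi$ is surjective with equidimensional fibers. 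Equating the two expressions for the dimension of the same set $\Phi^{-1}(\pi^{-1}(x,\ell)) = \Pi^{-1}(\Phi'^{-1}(x,\ell))$ gives $\fiberpi = \dim(\Phi'^{-1}(x,\ell)) + \fiberPi \geq \fiberPi$, since $\dim(\Phi'^{-1}(x,\ell))\geq 0$ whenever the fiber is nonempty, which it is because $\Phi'$ is dominant and $(x,\ell)$ is generic.

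The delicate point — and the step I would write most carefully — is the genericity bookkeeping: I must ensure that the generic point $(x,\ell)$ of $\mathcal{Y}_{p',l',\cI',\obs'}$ can be chosen so that (a) it lies in the dense image of $\Phi'$, (b) a generic point of its $\pi$-fiber is a generic point of $\mathcal{Y}_{p,l,\cI,\obs}$ lying in the image of $\Phi$, and (c) the expected-dimension formulas for fibers of $\Phi$, $\Phi'$, $\Pi$, $\pi$ all hold simultaneously over the relevant loci. Each of these is a nonempty-open-condition statement, and their intersection is still nonempty open because all the varieties involved are irreducible (the $\cX$'s are irreducible as they are iterated projective/Grassmannian bundles over a point, the $\cams{m}$ factor is irreducible, and $\Pi$, $\pi$ have irreducible fibers by Lemma~\ref{lem:forget-irreducible}), so fiber-dimension semicontinuity applies cleanly. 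I would phrase the argument using the theorem on dimension of fibers of a dominant morphism of irreducible varieties and the fact that composites and pullbacks of such maps behave additively on generic fiber dimensions, exactly as was done for analogous statements in~\cite{PLMP}.
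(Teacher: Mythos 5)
Your parts (1) and (2) are in substance the paper's argument: solutions of the upper problem push down along $\Pi$ to solutions of the lower problem with the same cameras, over a generic $(x',\ell')$ and a generic point of its $\pi$-fiber; the dense-image phrasing of (1) is just a reformulation. For part (3), however, you take a genuinely different and heavier route. The paper never analyzes fibers of $\Phi$ at all: it notes that dominance of $\Phi'$ (from part 1) forces $\dim(\cX_{p',l',\cI'}\times\cams{m})\geq\dim(\mathcal{Y}_{p',l',\cI',\obs'})$, rewrites both sides via Lemma~\ref{lem:forget-irreducible} as $\dim(\PplI\times\cams{m})-\fiberPi$ and $\dim(\YplIO)-\fiberpi$, and concludes from balancedness of the minimal problem. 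That is a two-line ambient dimension count.

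Your route, comparing $\dim\Phi^{-1}(\pi^{-1}(x,\ell))$ with $\dim\Pi^{-1}(\Phi'^{-1}(x,\ell))$, can be made to work, but the step you state as immediate --- ``since $\Phi$ is dominant with generic fiber dimension $0$, the preimage of $\pi^{-1}(x,\ell)$ has dimension $\fiberpi$'' --- is not automatic and is the real content of your argument. The preimage of a \emph{special} subvariety under a generically finite dominant map can jump in dimension (think of the exceptional divisor of a blow-up), and for generic $(x,\ell)$ the fiber $\pi^{-1}(x,\ell)$ may well meet the locus where $\Phi$-fibers are positive-dimensional whenever that locus has codimension at most $\fiberpi$; so ``the expected-dimension formulas hold simultaneously'' is not a single open condition you can intersect away. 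The fix is a stratification: on the stratum where $\Phi$-fibers have dimension $d\geq 1$, the image has dimension at most $\dim(\YplIO)-d$, hence meets $\pi^{-1}(x,\ell)$ in dimension at most $\fiberpi-d$ for generic $(x,\ell)$, and adding back $d$ still gives at most $\fiberpi$; together with the trivial bound $\dim\Pi^{-1}(\Phi'^{-1}(x,\ell))\geq\fiberPi$ (all $\Pi$-fibers have dimension exactly $\fiberPi$ and $\Phi'^{-1}(x,\ell)\neq\emptyset$) and a small remark that indeterminacy loci of the rational maps are proper inside each $\Pi$-fiber, this yields $\fiberPi\leq\fiberpi$. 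So your conclusion is reachable, but only the inequalities are needed (the asserted equality is beside the point), and the missing stratified fiber-dimension argument should be written out; alternatively, note that the paper's comparison of ambient dimensions sidesteps all of these genericity issues.
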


\begin{proof}
For a generic image $(x',\ell') \in \mathcal{Y}_{\PLPprime}$ of the lower \PLoneP{}, we consider a generic image $(x,\ell) \in \pi^{-1} (x', \ell')$ of the upper \PLoneP{}. Every solution $S \in \Phi^{-1}(x,\ell)$ of the upper \PLoneP{} yields a solution $\Pi(S)$ of the lower \PLoneP{} with \emph{the same} cameras. This shows the first two parts of the assertion.

For the third part, since the joint camera map $\Phi$ of a minimal \PLoneP{} is dominant, we use part~1 to see that the lower joint camera map $\Phi'$ is also dominant. This implies:
\begin{align*}
    \begin{array}{ccc}
        \dim(\cX_{p',l',\cI'} \times \cams{m}) & \geq & \dim (\mathcal{Y}_{\PLPprime})  \\
        || & & || \\
        \dim (\PplI \times \cams{m}) - \fiberPi & &  \dim(\YplIO) - \fiberpi
    \end{array}
\end{align*}
Since each minimal \PLoneP{} is balanced, i.e. $\dim (\PplI \times \cams{m}) = \dim(\YplIO)$, this concludes the proof.
\qed
\end{proof}

\begin{lemma}
\label{lem:diagram-two-way}
Consider Figure~\ref{fig:commutativeDiagram} with a feasible forgetting map $\Pi$ that satisfies the lifting property.
\begin{enumerate}
    \item $\Phi$ is dominant $\Leftrightarrow$ $\Phi'$ is dominant.
    \item $\cdeg(\PLPprime) = \cdeg(\PLP)$.
    \item $\fiberPi \geq \fiberpi$.
\end{enumerate}
\end{lemma}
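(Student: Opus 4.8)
The plan is to prove the three claims of Lemma~\ref{lem:diagram-two-way} by leveraging the lifting property to ``go the other way'' compared to Lemma~\ref{lem:diagram-one-way}, so that together the two lemmas yield equalities. First I would fix a generic solution $S' = ((X',L'),P) \in \cX_{p',l',\cI'} \times \cams{m}$ of the lower \PLoneP{} and a generic image $(x,\ell) \in \pi^{-1}(\Phi'(S'))$ of the upper \PLoneP{}; the lifting property in Definition~\ref{def:reduced-PL1P} hands us a point-line arrangement $(X,L) \in \PplI$ with $\Phi((X,L),P) = (x,\ell)$ and $\Pi((X,L),P) = S'$. In other words, $(x,\ell)$ lies in the image of $\Phi$. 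Since a generic image of the lower problem arises as $\Phi'(S')$ for generic $S'$ (dominance of $\Phi'$, handled below) and since $\pi$ is dominant with irreducible fibers by Lemma~\ref{lem:forget-irreducible}, a dimension count shows that the union of the sets $\pi^{-1}(\Phi'(S'))$ over generic $S'$ is dense in $\YplIO$; hence $\Phi$ is dominant as soon as $\Phi'$ is. Combined with part~1 of Lemma~\ref{lem:diagram-one-way}, this gives the equivalence in part~1.

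For part~2, the same lifting datum shows that every solution $S'$ of the lower problem with cameras $P$ is of the form $\Pi(S)$ for some solution $S$ of the upper problem with the \emph{same} cameras $P$; combined with the reverse inclusion already established in the proof of Lemma~\ref{lem:diagram-one-way} (every $\Phi^{-1}(x,\ell)$ maps onto solutions of the lower problem with identical cameras), we get that the camera poses appearing over a generic $(x,\ell)$ of the upper problem and over its image $\pi(x,\ell)$ in the lower problem are literally the same finite set. Therefore $\cdeg(\PLP) = \cdeg(\PLPprime)$, which together with part~2 of Lemma~\ref{lem:diagram-one-way} confirms the equality (both inequalities now hold). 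Part~3 then follows formally: by part~1 the lower joint camera map $\Phi'$ is dominant exactly when $\Phi$ is; running the dimension identity from the proof of Lemma~\ref{lem:diagram-one-way}(3) but now using that $\Phi'$ dominant forces $\dim(\cX_{p',l',\cI'}\times\cams{m}) \ge \dim(\mathcal{Y}_{\PLPprime})$, i.e. $\dim(\PplI\times\cams{m}) - \fiberPi \ge \dim(\YplIO) - \fiberpi$, and invoking balancedness of the upper problem (which holds because $\Phi$ is dominant and balancedness is what makes it minimal — or more carefully, we may restrict to the minimal case where this is given), we obtain $\fiberPi \ge \fiberpi$. A cleaner route to part~3 avoiding any balancedness assumption: the lifting property says the generic fiber of $\Pi$ over $S'$ surjects onto (a dense subset of) the generic fiber of $\pi$ over $\Phi'(S')$ via $\Phi$ followed by inclusion, so $\fiberPi \ge \dim$ of a generic fiber of that restricted map $\ge \fiberpi$; I would present whichever of these is shortest once the fiber-dimension bookkeeping is pinned down.

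The main obstacle I expect is part~3, specifically making the fiber-dimension comparison rigorous: one must argue that a \emph{generic} choice of $(x,\ell)$ in a \emph{generic} fiber $\pi^{-1}(\Phi'(S'))$ is again generic in $\YplIO$ (so that the lifting property, which is stated only for generic data, applies), and then that the lifted arrangements $(X,L)$ sweep out enough of the $\Pi$-fiber to force the dimension inequality. This requires care with the order of generic choices and with the fact that the lifting property as stated produces \emph{at least one} lift rather than controlling the full preimage; I would handle it by working with the incidence correspondence $\{(S, (x,\ell)) : \Phi(S) = (x,\ell),\ \Pi(S) = S'\}$ fibered over the $\pi$-fiber and applying the theorem on dimension of fibers of a dominant morphism between irreducible varieties (using irreducibility of the relevant fibers from Lemma~\ref{lem:forget-irreducible}). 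Parts~1 and~2 are comparatively routine once the lifting datum is in hand.
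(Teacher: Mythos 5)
Your proposal matches the paper's proof: parts 1 and 2 are obtained exactly as in the paper, by choosing a generic image of the upper problem fiberwise over a generic image of the lower one, lifting a generic lower solution to an upper solution with the \emph{same} cameras, and combining with Lemma~\ref{lem:diagram-one-way}; for part 3 the paper's terse justification is precisely your ``cleaner route'': the lifting property makes $\Phi$ restricted to $\Pi^{-1}(S')$ dominant onto $\pi^{-1}(\Phi'(S'))$ for generic $S'$, whence $\fiberPi = \dim(\Pi^{-1}(S')) \geq \dim(\pi^{-1}(\Phi'(S'))) = \fiberpi$. Do discard your first route for part 3 (invoking balancedness/minimality would import a hypothesis the lemma does not have), and note that the genericity bookkeeping you worry about is already built into the statement of the lifting property, which is formulated for generic $S'$ and generic images in $\pi^{-1}(\Phi'(S'))$.
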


\begin{proof}
We can pick a generic image  of the upper \PLoneP{}, by first choosing a generic image $(x',\ell') \in \mathcal{Y}_{\PLPprime}$ of the lower \PLoneP{} and then considering a generic image $(x,\ell) \in \pi^{-1}(x',\ell')$ in its fiber. If $\Phi'$ is dominant, a generic solution $S' \in \Phi'^{-1}(x',\ell')$ of the lower \PLoneP{} can be lifted to a solution $S \in \Phi^{-1}(x, \ell)$ of the upper \PLoneP{} with \emph{the same} cameras. Together with Lemma~\ref{lem:diagram-one-way}, this shows the first two parts of the assertion.

For the third part, we consider a generic element $S' \in \cX_{p',l',\cI'} \times \cams{m}$. By the lifting property, we have that
\begin{align*}
\fiberPi = \dim(\Pi^{-1}(S')) \geq \dim(\pi^{-1}(\Phi'(S'))) = \fiberpi. \hspace{13mm} \qed
\end{align*}
\end{proof}

\noindent
\textbf{Proof of Lemma \ref{lem:degs-equal}.} Let $(\PLP )$ be a minimal \PLoneP{} with camera-degree $d$. Fix $(x,\ell) \in \YplIO$ generic and let $\gamma \left( \Phi_{\PLP}^{-1} (x, \ell )\right) = \{ P_1, \ldots , P_d \}.$ We may assume that $ \Phi_{\PLP}^{-1} (x, \ell )$ is a nonempty finite set; we wish to show it contains $d$ elements. We begin by lifting $P_1,\ldots, P_d$ to partial solutions in $\GG_{1,3}^l \times \cams{m}$, \ie by reconstructing lines in 3D. Our first observation is that each of the $l$ world lines must be viewed at least twice. Indeed, if a line was observed at most once, then even if it is a pin (\ie it passes though a point in 3D) there would be an at least one-dimensional family of world lines yielding the same view (and possibly passing through the pin point). This contradicts our assumption that the \PLoneP{} $(\PLP)$ is minimal.

Now since each world line is observed in at least two views, it can be uniquely recovered from fixed cameras; in other words, each camera solution $P_i$ extends uniquely to a partial solution $(L_i,P_i) \in \GG_{1,3}^l \times \cams{m}$. To conclude that there are exactly $d$ lifts 
\[
\left((X_1, L_1), P_1\right),\ldots , \left((X_d, L_d) , P_d\right) \in \Phi_{\PLP}^{-1} (x, \ell ),
\]
we note that the condition that each $X_i$ is a lifted solution  of the partial solution $(L_i, P_i)$ is given by linear equations depending on $P_i$ and $L_i.$ Thus, there is either a unique or infinitely many such $X_i$, but since the \PLoneP{} $(\PLP)$ is minimal, the latter cannot hold.
\qed

\begin{example}
Lemma~\ref{lem:degs-equal} does \emph{not} hold for point-line problems in general.
Here we present a minimal \PLkP{4} whose degree is double its camera-degree.

Consider the variety $\mathcal{X}_{9,5,\mathcal{I}}$ of point-line arrangements consisting of five free points (red in Figure~\ref{fig:schubert}) and five lines (and four additional black points) where one of the lines (in black) intersects the other four (colorful) lines (in the four black extra points), i.e.\ $$\mathcal{I} = \{(1,1), (2,2), (3,3), (4,4), (1,5), (2,5), (3,5), (4,5)\}.$$ 
We are interested in the \PLkP{4} in two views where both views observe the five free points and the first four (colorful) lines, but not the fifth (black) line  and none of the (black) extra points; see Figure~\ref{fig:schubert}.

\begin{figure}
    \centering
    \begin{tabular}{ccc}
    \includegraphics[height=0.4\textwidth]{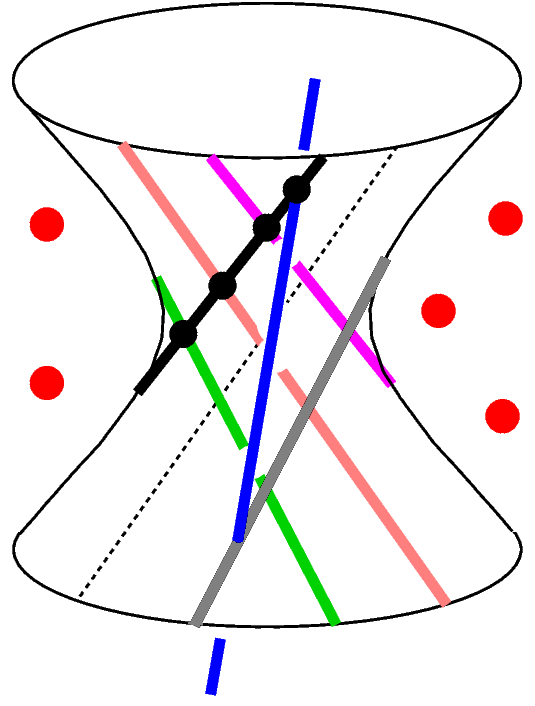} 
    &\hspace*{0.03\textwidth}&
    \includegraphics[height=0.2\textwidth,angle=90]{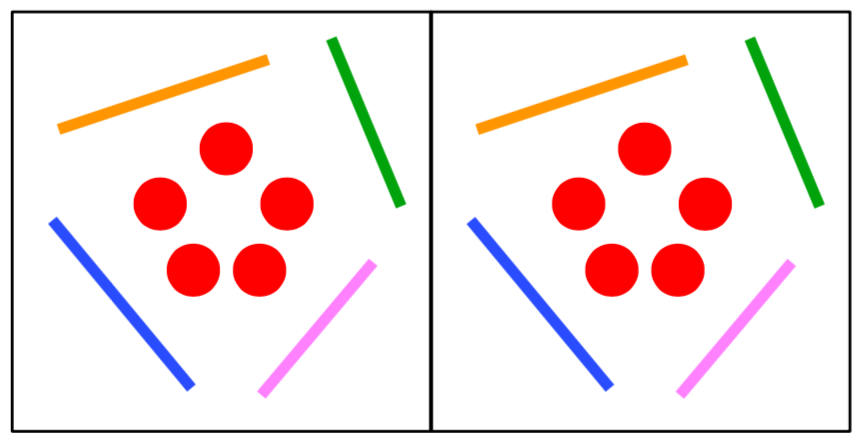}\\
    \parbox{0.45\textwidth}{An arrangement of 5 lines and 9 points in 3D space. The grey line is \emph{not} part of the arrangement.} &&
    \parbox{0.45\textwidth}{A projection to two images that forgets all black features.}
\end{tabular}    
    \caption{The five-point problem in two views can be  combined with the classical Schubert four-line problem to obtain a minimal \PLkP{4} with degree $40$ and camera-degree $20$.}
    \label{fig:schubert}
\end{figure}

Clearly this point-line problem contains the classical five-point problem as a subproblem. In fact, it is reducible to the five-point problem by forgetting all five lines plus the four extra points. To see this we have to check the lifting property: for fixed camera poses, adding the first four lines to their views uniquely recovers these four lines in space. For generic four lines $L_1, \ldots, L_4$ (shown in blue, green, orange and pink in Figure~\ref{fig:schubert}) in 3D, there are two lines (the black one and the grey one) which intersect $L_1, \ldots, L_4$.\footnote{Sottile, F.: Enumerative real algebraic geometry. In: Algorithmic and Quantitative Aspects of Real Algebraic Geometry in Mathematics and Computer Science. (2001)} Hence, every solution to the five-point problem can be lifted to a solution of the \PLkP{4} in Figure~\ref{fig:schubert} \emph{in two ways}.

By Lemma~\ref{lem:diagram-two-way}, we see that the \PLkP{4} is camera-minimal and has the same camera-degree as the five-point problem, namely $20$.
Moreover, the \PLkP{4} is balanced since its reduction to the balanced five-point problem removes $16$ degrees of freedom both in 3D and in the views.
Hence, the \PLkP{4} we constructed is indeed minimal.
Since each of the $20$ solutions of the five-points problem can be lifted to \emph{two} solutions of the \PLkP{4}, we have shown that its degree is $40$.
\end{example}

\noindent
\textbf{Proof of Theorems 1 and 4. }
Theorem 4 immediately follows from parts 1 and 2 of Lemma~\ref{lem:diagram-two-way}. To complete the proof of Theorem 1, suppose the upper \PLoneP{} $(p,l,\cI , \obs)$ is minimal. 
From parts 3 of lemmas~\ref{lem:diagram-one-way} and~\ref{lem:diagram-two-way}, we have that $\ppi$ and $\pi$ have equal fiber dimensions. 
It follows that the lower \PLoneP{} $(\PLPprime)$ is balanced, since
\begin{align*}
\dim (\cX_{p',l',\cI'} \times \cams{m}) &= \dim (\cX_{p,l,\cI}  \times \cams{m} ) -  \fiberPi \\
&= \dim (\YplIO ) - \fiberpi \\
&= \dim (\mathcal{Y}_{\PLPprime}).
\end{align*}
Moreover, from part 1 of Lemma~\ref{lem:diagram-one-way} we have that $\Phi '$ is dominant; so $(\PLPprime)$ is minimal. 

For generic image data $(x',\ell') \in \mathcal{Y}_{\PLPprime}$ and $(x,\ell ) \in \pi^{-1} (x', \ell ')$, it remains to show that the finite sets $\Phi^{-1} (x,\ell )$ and $(\Phi ')^{-1} (x',\ell ')$ have the same cardinality. 
By Theorem 4, we already know this for the camera solutions $\gamma \left( \Phi^{-1} (x,\ell ) \right) $  and $\gamma \left( (\Phi ')^{-1} (x',\ell ') \right).$ 
Since both problems are minimal, we are done by Lemma~\ref{lem:degs-equal}.
\qed

\subsection{Theorems~2, 3, 5, and 7}

In this subsection, we give full details on how the local features appearing in Theorems 2, 3, 5, and 7 are derived. 
The first Lemma~\ref{lem:invisible} explains why unobserved features cannot occur in neither minimal nor reduced problems.
The next two Lemmas~\ref{lem:pinSingleViewRed} and~\ref{lem:pinDoubleViewRed} give reduction rules for pins.

\begin{lemma}
\label{lem:invisible}
Consider a \PLoneP{} in three views.
If some point or line in space is not observed in any view, then the \PLoneP{} is reducible and not minimal.
\begin{itemize}
    \item For an unobserved line, the reduction forgets the line.
    \item For an unobserved point with at most one pin, the reduction forgets the point.
    \item For an unobserved point with at least two pins, the reduction forgets the point and all its pins.
\end{itemize}
\end{lemma}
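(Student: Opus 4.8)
The statement asserts three reduction rules (forget an unobserved line; forget an unobserced point with $\le 1$ pin; forget an unobserved point together with all its pins when it has $\ge 2$ pins), and in each case that the \PLoneP{} is reducible along this reduction and is not minimal. The plan is to handle reducibility first, using the feasibility/lifting framework set up in Lemmas~\ref{lem:forget-irreducible}--\ref{lem:diagram-two-way}, and then to deduce non-minimality from a dimension count showing the balanced equality fails.

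\textbf{Reducibility.} In each of the three cases I will exhibit the forgetting map $\Pi$ that deletes the indicated feature(s) and check the two conditions in Definition~\ref{def:reduced-PL1P}. Feasibility is immediate: deleting a line keeps nothing incident to a forgotten point; deleting an unobserved point with at most one pin keeps at most one pin of that forgotten point; deleting an unobserved point with $\ge 2$ pins deletes all its pins, so no pin of a forgotten point survives. For the lifting property I must show a generic solution $S'$ of the reduced \PLoneP{} lifts to a solution of the original for generic image data in $\pi^{-1}(\Phi'(S'))$. Here is where ``unobserved'' is crucial: since the deleted feature is seen by no camera, $\pi$ is an isomorphism on the relevant factors (the image varieties $\YplIO$ and $\mathcal{Y}_{\PLPprime}$ coincide), so $\pi^{-1}(\Phi'(S'))$ is a single point $\Phi'(S')$ itself. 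Thus lifting reduces to: given the camera poses $P$ from $S'$ (and the 3D positions of the surviving features), can I choose 3D coordinates for the deleted feature(s) so as to obtain a point in $\PplI$ mapping to $\Phi'(S')$? For a free line: pick any line in $\PP^3$ (a generic one, to respect non-degeneracy); there is no incidence constraint on it, so this works. For an unobserved point with no pins: pick any point in $\PP^3$. For an unobserved point with one pin: the point $X$ must lie on the one surviving line $L$ (which is part of $S'$), so pick $X$ generic on $L$ — possible since $L$ is one-dimensional. For an unobserved point with $\ge 2$ pins: here I delete the point and all its pins simultaneously, so I must reconstruct a point $X$ together with $k\ge 2$ lines through it; choose $X$ generic in $\PP^3$ and $k$ generic lines through $X$. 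In all cases the lifted arrangement lies in $\PplI$ and maps to $\Phi'(S')$, and $\Pi$ of it recovers $S'$; hence the lifting property holds and the original \PLoneP{} is reducible to the reduced one.

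\textbf{Non-minimality.} I would argue by contradiction: suppose the original \PLoneP{} $(\PLP)$ is minimal. By Lemma~\ref{lem:balancedPlusDominant} it is balanced, $\dim(\PplI\times\cams{m})=\dim(\YplIO)$. By Lemma~\ref{lem:forget-irreducible}, $\Pi$ and $\pi$ are surjective with irreducible fibers; the key computation is their fiber dimensions. Since $\pi$ is an isomorphism on the deleted (unobserved) factors, $\fiberpi=0$. On the other hand $\fiberPi>0$: a free line contributes a positive-dimensional fiber ($\GG_{1,3}$ has dimension $4$); a point with no pins contributes $\dim\PP^3=3$; a point with one surviving pin (the case where we delete only the point) would contribute $\dim\PP^1=1$ if that pin is kept — but in our statement when a point with one pin is unobserved and we forget just the point, the surviving pin is a line and the fiber is $\PP^1$, dimension $1$; deleting a point with $\ge2$ pins and all its pins contributes $3+2k>0$. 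In every case $\fiberPi\ge 1>0=\fiberpi$. But part~3 of Lemma~\ref{lem:diagram-one-way} says that if $(\PLP)$ is minimal then $\fiberPi\le\fiberpi$, a contradiction. Hence $(\PLP)$ is not minimal.

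\textbf{Expected obstacle.} The routine parts (feasibility, the dimension bookkeeping) are mechanical; the one point requiring care is the claim that ``unobserved'' forces $\pi$ to be an isomorphism and hence $\fiberpi=0$ — i.e.\ carefully identifying which 2D factors of $\YplIO$ correspond to the deleted feature and confirming none survive, especially in the $\ge 2$-pin case where deleting the point forces deletion of all its pins (which are themselves unobserved by the assumption on $\obs$ in Definition~\ref{def:PLP}, since a view seeing a pin but not its point is excluded). I would make this explicit: if the point $X$ is unobserved in every view, then by the completeness/observation assumptions no view can observe any pin through $X$ either, so forgetting $X$ together with all its pins removes only unobserved 2D data, and $\pi$ is an isomorphism. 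Once that is nailed down, the contradiction with Lemma~\ref{lem:diagram-one-way}(3) closes the argument.
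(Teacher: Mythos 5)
Your treatment of the first two bullets matches the paper's: there the forgotten feature is literally invisible, so $\pi$ is the identity on image data, lifting is trivial, and $\fiberPi>0=\fiberpi$ contradicts part~3 of Lemma~\ref{lem:diagram-one-way}. The genuine gap is in the third bullet, and it is exactly at the spot you flagged as the ``expected obstacle'': your claim that if the point $X$ is unobserved then no view can observe any pin through $X$ is false. The assumption on $\obs$ in Definition~\ref{def:PLP} only says that a view which observes \emph{two} lines meeting according to $\cI$ must observe their intersection point; it does not prevent a view from observing a \emph{single} pin of $X$ without observing $X$ (the pin then simply appears as a line in that image). So for an unobserved point with $k\ge 2$ pins, each view can see at most one of the pins, but the pins may very well be seen --- and then $\pi$ is \emph{not} an isomorphism, $\fiberpi$ is not $0$, and your lifting recipe ``choose $X$ generic in $\PP^3$ and $k$ generic lines through $X$'' fails, because the lifted pins must project onto the prescribed image lines.

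The paper handles this case as follows: at most one pin per view and three views (after discarding any completely unobserved pin via the first bullet) force $k\le 3$, leaving finitely many observation patterns; in each, the lifting property is checked geometrically --- e.g.\ for three pins seen in three different views, the back-projected planes of the three image lines under the fixed cameras meet in a point, which one takes as $X$, choosing the pins inside those planes through $X$. The corrected fiber dimensions are $\fiberPi=7,7,9$ versus $\fiberpi=4,6,6$, so the inequality $\fiberPi>\fiberpi$ (and hence non-minimality via Lemma~\ref{lem:diagram-one-way}(3)) still holds, but not for the reason you gave; your dimension count $\fiberpi=0$ is wrong for this bullet, and the lifting argument needs the back-projection/intersection argument rather than a free choice of $X$.
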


\begin{proof}
For each bullet we check that the associated forgetting map $\Pi$ satisfies the conditions in the definition of reducibility. 
Feasibility holds vacuously for the first two bullets and is also easily seen for the third. 
Lifting solutions in the first two bullets is also trivial since $\Pi$ only forgets what is not observed at all.
This argument also shows that
$\fiberPi > 0 = \fiberpi$, 
so by part~3 of Lemma~\ref{lem:diagram-one-way}
the \PLoneP{} cannot be minimal in the first two bullets.

To verify the lifting property for the third bullet, we note that at most one of the pins can occur in each of the three views.  
By the first bullet, we may assume that all pins are visible. 
With these assumptions, it follows that the point has at most three pins. 
Hence, we are left with the following three cases:
\begin{center}
    \begin{tabular}{rccc}
    3-space & views & $\fiberPi$ & $\fiberpi$\\ \hline
    point with 2 pins  & 
    \begin{tabular}{c}
    \includegraphics[width=0.2\textwidth]{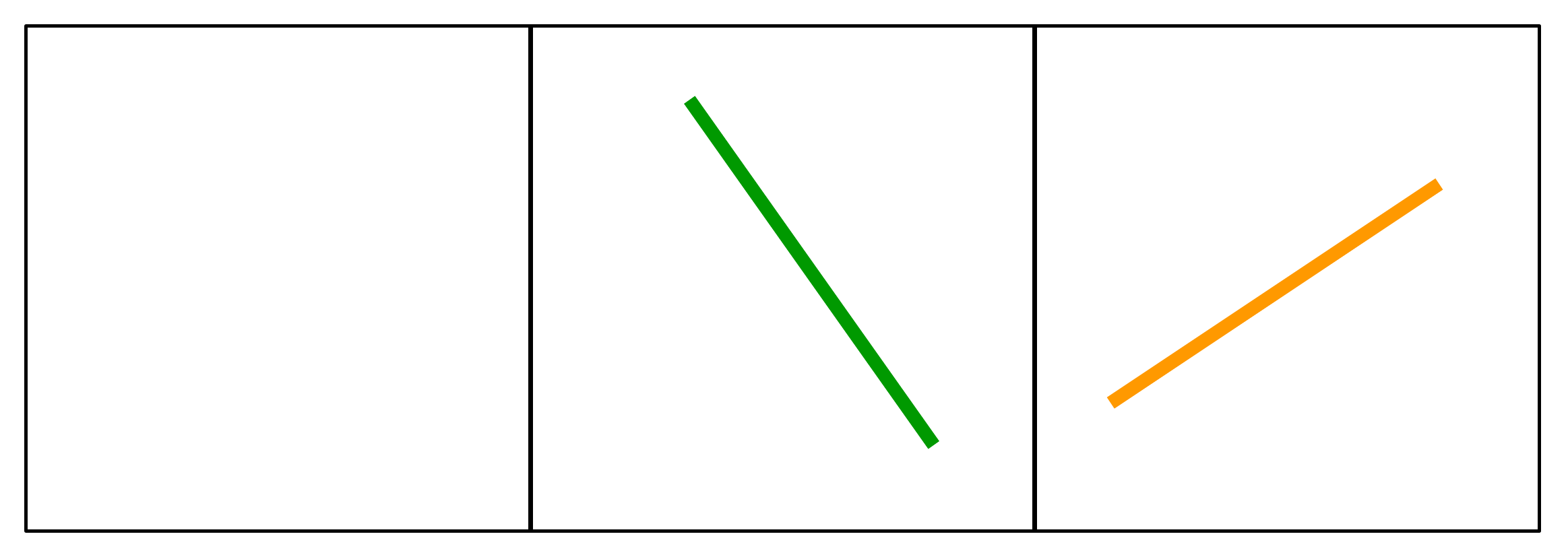}
    \end{tabular}
    & 7 & 4 \\
    point with 2 pins  &
    \begin{tabular}{c}
    \includegraphics[width=0.2\textwidth]{pix/pin2RedAll.pdf} 
    \end{tabular}
    & 7 & 6 \\
    point with 3 pins  & 
        \begin{tabular}{c}
    \includegraphics[width=0.2\textwidth]{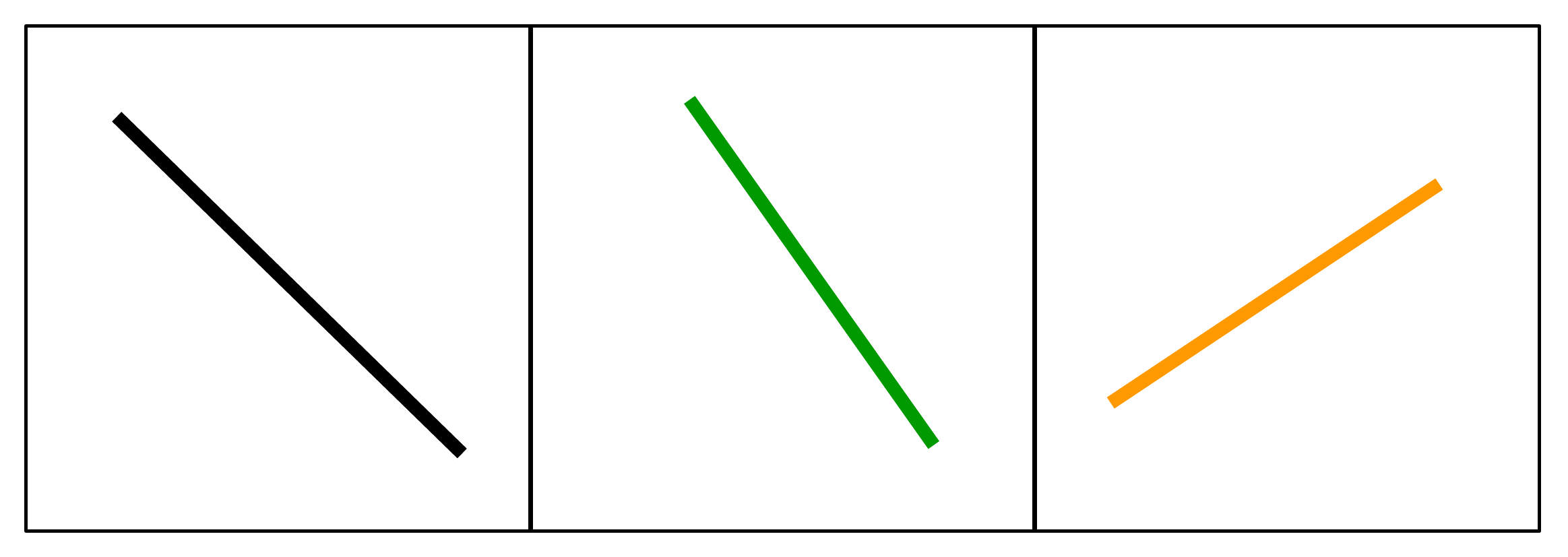}
    \end{tabular}
    & 9 & 6 
\end{tabular}
\end{center}
Now we can easily check that the lifting property is satisfied in each of these three cases.
For instance, for the last row (i.e. a point with three pins), 
the preimage of any three pins viewed like \includegraphics[width=0.09\textwidth]{pix/pin3.pdf} under fixed cameras is three planes in space which necessarily intersect in a point. 
Thus we can pick that point plus any three lines passing through that point and contained in the respective planes to lift solutions.
This shows that the \PLoneP{} is reducible as described in the third bullet.
Moreover, we see that $\fiberPi > \fiberpi$ holds in all three cases depicted above, so the \PLoneP{} cannot be minimal by part~3 of Lemma~\ref{lem:diagram-one-way}.
\qed
\end{proof}

\begin{lemma}
\label{lem:pinSingleViewRed}
Consider a \PLoneP{} in three views.
If some pin is observed in exactly one view such that the view also observes the point of the pin, 
then the \PLoneP{} is not minimal and it is reducible by forgetting the pin.
\end{lemma}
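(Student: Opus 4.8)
The plan is to apply the machinery of feasible forgetting maps developed in Lemmas~\ref{lem:forget-irreducible}--\ref{lem:diagram-two-way}. Let $(\PLP)$ be a \PLoneP{} in three views containing a pin $L$ which is observed in exactly one view, say view~$1$, and suppose view~$1$ also observes the point $X$ incident to $L$. Let $\Pi$ be the forgetting map that deletes the line $L$ (both in space and in view~$1$), keeping everything else, including the point $X$ and its other pins. First I would check that $\Pi$ is \emph{feasible}: the only forgotten line is $L$, and forgetting a line never violates the condition ``for each forgotten point, at most one of its pins is kept'' since no point is forgotten; so feasibility holds vacuously. By Lemma~\ref{lem:forget-irreducible}, $\Pi$ and the induced $\pi$ are then surjective with irreducible fibers of equal dimension.

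Next I would verify the \emph{lifting property}. Fix a generic solution $S' = ((X',L'),P)$ of the reduced problem, where $P$ includes a fixed pose for camera~$1$, and fix a generic image $(x,\ell) \in \pi^{-1}(\Phi'(S'))$. The extra image datum in $(x,\ell)$ compared to $\Phi'(S')$ is the image line $\ell_{1,L}$ of the forgotten pin in view~$1$, which is incident to the image point $x_{1,X}$ (this incidence holds because $(x,\ell)\in\YplIO$ and the pin $L$ is incident to $X$ in $\cI$). I must produce a line $L$ in $\PP^3$ through the point $X$ of $S'$ that projects to $\ell_{1,L}$ under camera~$P_1$. The preimage of the image line $\ell_{1,L}$ under $P_1$ is a plane $H$ in $\PP^3$ through the camera center, and since $x_{1,X}$ lies on $\ell_{1,L}$, the point $X$ lies on $H$; hence any line through $X$ contained in $H$ projects to $\ell_{1,L}$, and such lines form a pencil (a $\PP^1$). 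Picking one such line $L$ gives a solution of $(\PLP)$ lifting $S'$. This establishes reducibility by Definition~\ref{def:reduced-PL1P}.

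Finally, to conclude non-minimality I would compare fiber dimensions. The fiber of $\Pi$ over $S'$ is the set of all lines in $\PP^3$ through the fixed point $X$, which is isomorphic to $\PP^2$, so $\fiberPi = 2$. The fiber of $\pi$ over $\Phi'(S')$ consists of the choices of image line $\ell_{1,L}$ through the fixed image point $x_{1,X}$ in view~$1$ (the other two views see neither $L$ nor contribute any new datum), which is a pencil of lines in $\PP^2$, so $\fiberpi = 1$. Thus $\fiberPi = 2 > 1 = \fiberpi$. By part~3 of Lemma~\ref{lem:diagram-one-way} (contrapositive), a \PLoneP{} with $\fiberPi > \fiberpi$ under a feasible forgetting map cannot be minimal, so $(\PLP)$ is not minimal. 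Combined with the reducibility shown above, this proves the lemma.

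The main obstacle is getting the fiber-dimension bookkeeping exactly right, especially making sure that $\fiberpi$ really is $1$ and not larger: one must argue carefully that the only component of the image variety $\YplIO$ that changes under $\pi$ is the image of the single pin in the single view where it is seen, and that the incidence constraint $x_{1,X}\in\ell_{1,L}$ cuts the space of possible image lines down from a $\PP^{2\vee}$ (dimension~$2$) to a pencil (dimension~$1$). This hinges on the assumption on $\mathcal{O}$ from Definition~\ref{def:PLP} and on genericity, which guarantee no further incidences force extra constraints. Everything else is a routine application of the already-established lemmas.
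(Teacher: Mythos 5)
Your proposal is correct and follows essentially the same route as the paper's proof: feasibility of forgetting the pin, the lifting property via the pencil of lines through $X$ in the back-projected plane of the observed image line, and the fiber-dimension comparison $\fiberPi = 2 > 1 = \fiberpi$ combined with part~3 of Lemma~\ref{lem:diagram-one-way} to rule out minimality. You merely spell out in more detail the steps the paper states briefly, and the bookkeeping you worry about at the end is exactly what the paper's argument relies on as well.
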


\begin{proof}
The forgetting map $\Pi$ which forgets the pin is clearly feasible. 
It also satisfies the lifting property:
for a fixed arrangement of cameras that view the point $X$ of the pin,
no matter how the pin is viewed in its single view, there is in fact a pencil of lines through $X$ yielding that view.
This shows that the \PLoneP{} is reducible.
Moreover, this reduction satisfies that
$$\fiberPi = 2 > 1 = \fiberpi;$$ 
so by part~3 of Lemma~\ref{lem:diagram-one-way}
the \PLoneP{} cannot be minimal.
\qed
\end{proof}

\begin{lemma}
\label{lem:pinDoubleViewRed}
Consider a  \PLoneP{} in three views.
If some pin is observed in exactly two views such that both views also observe the point of the pin, 
then the \PLoneP{} is reducible by forgetting the pin.
\end{lemma}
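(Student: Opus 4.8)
The plan is to imitate the structure of the proof of Lemma~\ref{lem:pinSingleViewRed}, now for the case of a pin seen in exactly two views. Let $\Pi$ be the forgetting map that drops the single pin $L$ through the point $X$, together with the induced map $\pi$ on image varieties. First I would observe that $\Pi$ is feasible in the sense of Lemma~\ref{lem:forget-irreducible}: a forgetting map that deletes one line automatically satisfies the condition that at most one pin of each forgotten point is kept (here no point is forgotten at all), so Lemma~\ref{lem:forget-irreducible} applies and both $\Pi$ and $\pi$ are surjective with irreducible fibers. The fiber of $\Pi$ over a point-line arrangement is the $\PP^2$ of lines in $\PP^3$ through $X$, so $\fiberPi=2$.

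The heart of the argument is verifying the lifting property from Definition~\ref{def:reduced-PL1P}: given a generic solution $S'=((X',L'),P)\in\cX_{p',l',\cI'}\times\cams{m}$ of the reduced {\PLoneP} and a generic image $(x,\ell)\in\pi^{-1}(\Phi'(S'))$, I must produce a line $L$ through $X'$ (the image of $X'$ under the forgetting map) whose projections in the two views observing $L$ are exactly the two prescribed image lines $\ell$ of the pin. In the two views $v_1,v_2$ that see the pin, the point $X'$ is also observed (this is the standing hypothesis of the lemma), so each prescribed image line $\ell_{v_i}$ passes through the image point $x_{v_i,X'}$ — this incidence is precisely what cutting out $\pi^{-1}(\Phi'(S'))$ guarantees, since the generic element of that fiber still satisfies the incidence relations of the larger problem. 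A line $L$ through $X'$ that projects to a prescribed line in view $v_i$ lies in the back-projected plane $\Pi_{v_i}$ (the plane through the camera center of $P_{v_i}$ and the line $\ell_{v_i}$), and this plane contains $X'$ because $x_{v_i,X'}\in\ell_{v_i}$. Thus I need a line through $X'$ lying in both planes $\Pi_{v_1}$ and $\Pi_{v_2}$: for generic data these two planes are distinct and both contain $X'$, so they meet in a line through $X'$, and that line is the desired $L$. (If the third view also happens to see the pin the hypothesis excludes this, so exactly two back-projection constraints are imposed.) This establishes reducibility.

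Finally, for the dimension bookkeeping: the fiber of $\pi$ over $\Phi'(S')$ is the set of ways of choosing the two image lines of the pin compatibly with the incidences, i.e. one line through each of the two prescribed image points, which is a $\PP^1\times\PP^1$; hence $\fiberpi=2$. So here $\fiberPi=\fiberpi=2$, in contrast with Lemmas~\ref{lem:invisible} and~\ref{lem:pinSingleViewRed} where the fiber dimensions differed. Consequently part~3 of Lemma~\ref{lem:diagram-one-way} gives no obstruction to minimality — which is exactly why the statement of this lemma, unlike the previous two, only claims reducibility and does \emph{not} assert non-minimality. (This is the source of the ``dangling pin'' phenomenon and of Theorem~\ref{thm:danglingPins}.) The main obstacle is the lifting-property verification: one has to be careful that genericity of $(x,\ell)$ in the fiber $\pi^{-1}(\Phi'(S'))$ really does force $x_{v_i,X'}\in\ell_{v_i}$ and that the two back-projected planes are distinct, so that their intersection is a single line rather than a plane; both follow from the completeness of the incidence structure and genericity, but they should be stated explicitly.
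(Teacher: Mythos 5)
Your proof is correct and follows essentially the same route as the paper's: feasibility is immediate since only a line is forgotten, and the lifting property holds because the two image lines pass through the image of the point, so the two back-projected planes both contain the point and intersect in the unique line through it yielding the prescribed views. Your additional fiber-dimension count $\fiberPi=\fiberpi=2$ is accurate and correctly explains why, unlike Lemmas~\ref{lem:invisible} and~\ref{lem:pinSingleViewRed}, no non-minimality claim is made here, though it is not needed for the statement itself.
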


\begin{proof}
The forgetting map $\Pi$ which forgets the pin is clearly feasible.
Is also satisfies the lifting property: 
for a fixed arrangement of cameras that view the point $X$ of the pin, no matter how the pin is viewed in its two views, since it also passes through $X$ in both views, there is a unique line in 3D through $X$ yielding these two views.
This shows that the \PLoneP{} is reducible. 
\qed
\end{proof}


With the above lemmas in hand, we are able to enumerate a finite list of local features that may appear in a reduced (camera-)minimal \PLoneP{} in three views as well as a finite list of reduction rules to obtain such a \PLoneP{}.
To aid in this task, we list all possible ways in which free lines and points with $0,1$ or $2$ pins are viewed, and classify them according to whether or not a) they may appear in a minimal problem and b) they are reducible. 

Lemma~\ref{lem:tableComplete} below dispenses with cases involving local features with free lines and points with up to two pins that are already handled by Lemma~\ref{lem:invisible},~\ref{lem:pinSingleViewRed}, or~\ref{lem:pinDoubleViewRed}. 
The remaining cases are shown in Table~\ref{tab:proofAllTheorems}. 
For each of these local features, we may forget either a point and/or some number of lines.
Table~\ref{tab:proofAllTheorems} lists all feasible forgetting maps $\Pi$ for each observed local feature.
From this, we classify which observations of local features a) may appear in minimal problems and b) are reducible.

To determine reducibility of an observed local feature, we simply have to check if one of the listed feasible forgetting maps satisfies the lifting property.
Finding out if a local feature can be observed in a certain way in a minimal problem, is more subtle. 
We use the following two rules to exclude observed local features from appearing in minimal problems:
\begin{itemize}
    \item By Part~3 of Lemma~\ref{lem:diagram-one-way}, if $\fiberPi > \fiberpi$ for some feasible forgetting map $\Pi$, then the observed local feature cannot occur in any minimal \PLoneP{}. 
    \item A dangling pin (\ie a pin in 3D which is observed in a single view) cannot occur in any minimal \PLoneP{}.
\end{itemize}
All observations of local features which we cannot exclude from minimal problems using the two rules described above, we allow a priori to be part of minimal \PLoneP{}s. 
We mark this in Table~\ref{tab:proofAllTheorems} with a ``yes'' in the column ``minimal''\footnote{Our computations described in Section~\ref{sec:minimality} verify that actually all observed features marked as minimal in Table~\ref{tab:proofAllTheorems} do appear in some minimal \PLoneP{}s.}.


Finally, depending on the outcome of the reducibilty and minimality checks, we assign each observed local feature listed in Table~\ref{tab:proofAllTheorems} to one of the Theorems~2,3,5, or 7.
This assignment is summarized in Table~\ref{table:thm-classifier}.

\begin{table}[h!]
    \centering
  \begin{tabular}{c|rm{5.5em}l}
  & &  
  \begin{tabular}{c}
  can occur in\\
  a minimal\\
  problem? 
  \end{tabular}
  &  \\
  & YES & &  NO\\
    \hline 
  reduced 
  & 
  \begin{tabular}{r}
  Theorem 3\\
 \FFF{0.1}
  \end{tabular}
  & & 
    \begin{tabular}{l}
  Theorem 5\\
 \PPDo{0.1}
  \end{tabular}
  \\[1em]
  reducible
  &
    \begin{tabular}{r}
  Theorem 2\\
 \FFN{0.1}
  \end{tabular}
  & & 
\begin{tabular}{l}
  Theorem 7\\
 \FNN{0.1}
  \end{tabular}
  \end{tabular}
    \caption{Local features observed in three views pertaining to Theorems 2, 3, 5, and 7 with examples.}
    \label{table:thm-classifier}
\end{table}

\begin{lemma}
\label{lem:tableComplete}
All possibilities of how free lines and points with at most two pins are observed in three views are either treated by Lemmas~\ref{lem:invisible}, \ref{lem:pinSingleViewRed}, \ref{lem:pinDoubleViewRed} or appear in Table~\ref{tab:proofAllTheorems}.
\end{lemma}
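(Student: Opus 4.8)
\textbf{Proof proposal for Lemma~\ref{lem:tableComplete}.}

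The plan is a straightforward but careful enumeration. A local feature in a \PLoneP{} that is either a free line or a point with at most two pins is determined, up to relabeling the views, by a small combinatorial datum: for each of the (at most three) lines of the feature, the set of views in which it is observed; for the point (if present), the set of views in which it is observed; and a compatibility condition coming from the assumptions on $\obs$ in Definition~\ref{def:PLP} (whenever a view observes a pin it must also observe the incident point). First I would fix the feature type --- free line, point with zero pins, point with one pin, point with two pins --- and then list all observation patterns modulo the symmetric group $S_3$ acting on the three views and modulo relabeling the (at most two) pins.

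Next I would go through the four cases and, for each observation pattern, either point to one of Lemmas~\ref{lem:invisible}, \ref{lem:pinSingleViewRed}, \ref{lem:pinDoubleViewRed}, or else record that it appears as a row of Table~\ref{tab:proofAllTheorems}. Concretely: \emph{Free line.} If it is observed in $0$ views, Lemma~\ref{lem:invisible} applies; if in exactly one or exactly two or all three views, these are the three patterns \FNN{0.05}, \FFN{0.05}, \FFF{0.05} appearing in the table. \emph{Point with zero pins.} Observed in $0$ or $1$ views: Lemma~\ref{lem:invisible}; observed in $2$ or $3$ views: the two remaining patterns in the table. \emph{Point with one pin.} Enumerate the pair (views seeing the pin, views seeing the point), subject to ``pin-observing $\Rightarrow$ point-observing''. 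If the point is unobserved then the pin is unobserved, and Lemma~\ref{lem:invisible} applies. If the pin is observed in exactly one view which also sees the point, Lemma~\ref{lem:pinSingleViewRed} applies; the complementary patterns (pin observed in a view that does \emph{not} see the point, or the point observed but the pin not) are the rows of the table. \emph{Point with two pins.} Here one uses that, by the compatibility assumption on $\obs$, a view not seeing the point sees neither pin, so in each view the local picture is one of: nothing, just the point, the point with one pin, the point with both pins. Enumerate all resulting patterns modulo $S_3$ and pin-swap. Unobserved point: Lemma~\ref{lem:invisible}. A pin observed in exactly one view that also sees the point: Lemma~\ref{lem:pinSingleViewRed}. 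A pin observed in exactly two views that both see the point: Lemma~\ref{lem:pinDoubleViewRed}. Everything left over must coincide with a row of Table~\ref{tab:proofAllTheorems}; I would simply display the (short) remaining list and match it against the table.

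The only real work is bookkeeping: making sure the enumeration modulo the view-permutation and pin-relabeling symmetries is exhaustive and non-overlapping, and checking that every pattern not killed by one of the three preceding lemmas is literally one of the rows drawn in Table~\ref{tab:proofAllTheorems}. The main obstacle --- such as it is --- is organizing the two-pin case so that the argument ``a view missing the point misses both pins'' is invoked cleanly and the resulting case list is visibly complete; I would present that reduction first and then tabulate. No deep input is needed beyond Definition~\ref{def:PLP} and the three reduction lemmas already proved. \qed
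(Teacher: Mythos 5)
Your overall strategy (enumerate observation patterns per feature type, modulo relabeling views and pins, and dispatch each pattern to Lemma~\ref{lem:invisible}, \ref{lem:pinSingleViewRed}, \ref{lem:pinDoubleViewRed} or to a row of Table~\ref{tab:proofAllTheorems}) is exactly the paper's approach; the paper just organizes the bookkeeping by the counts ``(number of views seeing the pin)\,:\,(number of views seeing the point)'' rather than by per-view local pictures. However, there is a genuine error in your reduction of the two-pin case. You invoke a ``compatibility condition'' that whenever a view observes a pin it must also observe the incident point, and from it you conclude that ``a view not seeing the point sees neither pin.'' That is not what Definition~\ref{def:PLP} assumes: the hypothesis on $\obs$ only says that if a camera observes \emph{two lines that meet according to $\cI$} (i.e.\ both pins of the same point), then it observes their intersection point. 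A view may perfectly well observe a single pin without observing its point --- this is precisely what happens in rows 7--9 and 15--17 of Table~\ref{tab:proofAllTheorems} (e.g.\ the dangling-pin configurations \DoDtP{.09}, \DoDtPt{.09}, \DoPotDt{.09} of Theorem~\ref{thm:danglingPins}). With your false premise, your two-pin enumeration would exclude exactly these configurations, so your ``short remaining list'' would not be exhaustive and the completeness claim of the lemma would fail for the cases that matter most downstream (they are the non-minimal, non-reducible features of camera-minimal problems). The correct version of the constraint you may use is only: a view observing \emph{both} pins must observe the point; a view may see one pin and miss the point. You also contradict this premise yourself in the one-pin case, where you (correctly) keep patterns with the pin observed in a view that does not see the point.

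A smaller slip: for a point with zero pins you assign the pattern ``observed in exactly one view'' to Lemma~\ref{lem:invisible}, but that lemma only treats features observed in \emph{no} view; the once-observed free point is not covered by it and instead appears as row 4 of Table~\ref{tab:proofAllTheorems} (so the table has three free-point rows, not two). Since the lemma's statement is a disjunction this misattribution does not by itself falsify your conclusion, but together with the two-pin issue it shows the enumeration needs to be redone against the actual assumption on $\obs$: fix the per-view alphabet for a two-pin point as \{nothing, point only, one pin only, point with one pin, point with both pins\} (a view seeing both pins necessarily sees the point), then re-run your case analysis; after applying Lemmas~\ref{lem:invisible}, \ref{lem:pinSingleViewRed}, \ref{lem:pinDoubleViewRed} the survivors are exactly rows 15--18 of the table, as in the paper's proof.
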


\begin{proof}
By Lemma~\ref{lem:invisible}, we only have to record the cases where each point and line is observed at least once.
All such cases for free points and free lines are depicted in rows 1--6 of Table~\ref{tab:proofAllTheorems}.
So we are left to discuss points with one or two pins.

\textbf{Points with one pin:} We are distinguishing the different cases by how often the pin and its point are observed in the three views. We use the short notation $\boldsymbol{\lambda:\rho}$ to denote that the pin resp. its point is viewed $\lambda$ resp. $\rho$ times. By Lemma~\ref{lem:invisible}, we have that $\lambda, \rho \in \lbrace 1,2,3 \rbrace$.
\begin{itemize}
    \item[\textbf{1:1}\!\!]  By Lemma~\ref{lem:pinSingleViewRed}, we are left with the case where the pin and its point do not appear in the same view; see row~7 of Table~\ref{tab:proofAllTheorems}.
    \item[\textbf{1:2}\!\!] By Lemma~\ref{lem:pinSingleViewRed}, we can exclude the cases where one view sees the pin together with its point. Hence, we get that one view sees the pin and the other two views observe its point; see row~8 of Table~\ref{tab:proofAllTheorems}.
    \item[\textbf{1:3}\!\!] Here all three views observe the point and one of them also sees the pin. This is already handled by Lemma~\ref{lem:pinSingleViewRed} and thus does not appear in Table~\ref{tab:proofAllTheorems}.
    \item[\textbf{2:1}\!\!] Both such cases are shown in rows~9 and~10 of Table~\ref{tab:proofAllTheorems}.
    \item[\textbf{2:2}\!\!] By Lemma~\ref{lem:pinDoubleViewRed}, we may assume that the two views which see the pin do not both observe its point; see row~11  of Table~\ref{tab:proofAllTheorems}.
    \item[\textbf{2:3}\!\!] Here all three views observe the point and two of them also see the pin. This is already handled by Lemma~\ref{lem:pinDoubleViewRed} and thus does not appear in Table~\ref{tab:proofAllTheorems}.
    \item[\textbf{3:1}\!\!] -- \textbf{3:3} These cases are depicted in rows 12, 13 and 14 of Table~\ref{tab:proofAllTheorems}.
\end{itemize}

\textbf{Points with two pins:} We are distinguishing the different cases by how often each pin is observed in the three views. We use the short notation $\boldsymbol{\lambda_1:\lambda_2}$ to denote that the first resp. second pin is viewed $\lambda_1$ resp. $\lambda_2$ times. By Lemma~\ref{lem:invisible}, we have that $\lambda_1, \lambda_2 \in \lbrace 1,2,3 \rbrace$.
\begin{itemize}
    \item[\textbf{1:1}\!\!] By Lemma~\ref{lem:pinSingleViewRed}, we can exclude the cases where a view that sees one of the pins also observes the point. So we get that each view observing one of the pins does neither see the point nor the other pin; see row~15 of Table~\ref{tab:proofAllTheorems}.
    \item[\textbf{1:2}\!\!] By Lemma~\ref{lem:pinSingleViewRed}, we are left with the cases where the view observing the first pin does not see the point. So that view cannot see the other pin either.
    The other two views both observe the second pin, and at least one of them has to view the point.
    By Lemma~\ref{lem:pinDoubleViewRed}, we may assume that the point is not observed by both of these views;
    see row~16 of Table~\ref{tab:proofAllTheorems}.
    \item[\textbf{1:3}\!\!] Here all three views observe the second pin and one of them also sees the first pin, so also the point. This is already handled by Lemma~\ref{lem:pinSingleViewRed} and does not appear in  Table~\ref{tab:proofAllTheorems}.
    \item[\textbf{2:2}\!\!] By Lemma~\ref{lem:pinDoubleViewRed}, we can exclude the cases where two views observe both pins.
    Hence, we are left with the situation where one view sees both pins (and their point), and the other two views see one pin each. 
    Again by Lemma~\ref{lem:pinDoubleViewRed}, we may assume that none of the latter two views observes the point; see row~17 of Table~\ref{tab:proofAllTheorems}.
    \item[\textbf{2:3}\!\!] There are exactly two views where both pins, and hence the point are seen. We  apply Lemma~\ref{lem:pinDoubleViewRed} to see that this does not appear in Table~\ref{tab:proofAllTheorems}.
    \item[\textbf{3:3}\!\!] See row~18 of Table~\ref{tab:proofAllTheorems}. \qed
\end{itemize}
\end{proof}

Next, we prove Lemmas~\ref{lem:partial-pin-removal},~\ref{lem:hedgehog}, and~\ref{lem:complete-pin-removal}, which address the cases involving three or more pins. Along the way, we prove Lemma~\ref{lem:replacements}, which will also be useful in establishing Theorem~\ref{thm:camMinLiftToMin}.

\begin{lemma}
\label{lem:partial-pin-removal}
Consider a \PLoneP{} in three views.
If a point with at least three pins is not completely observed in the views (i.e. at least one view does not see at least one pin), 
then the \PLoneP{} is reducible by one of the cases in Lemmas~\ref{lem:invisible}, \ref{lem:pinSingleViewRed}, \ref{lem:pinDoubleViewRed}.
\end{lemma}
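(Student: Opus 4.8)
\textbf{Proof plan for Lemma~\ref{lem:partial-pin-removal}.}
The plan is to reduce to the cases already handled by Lemmas~\ref{lem:invisible}, \ref{lem:pinSingleViewRed}, and~\ref{lem:pinDoubleViewRed} by a case analysis on how the pins of the point are observed. Let $X$ be the point in question, with pins $L_1, \ldots, L_k$ where $k \geq 3$, and suppose $X$ is not completely observed, i.e.\ some view $v$ fails to see some pin $L_i$. First I would dispose of the trivial case: if $X$ is not observed in any view, then Lemma~\ref{lem:invisible} (third bullet) applies directly and we are done. So we may assume $X$ is observed in at least one view; by Lemma~\ref{lem:invisible} (first bullet) we may also assume every pin $L_1,\ldots,L_k$ is observed in at least one view, since otherwise we forget an unobserved line.

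The core of the argument is the following dichotomy for each pin $L_j$. If $L_j$ is observed in exactly one view, then either that view also observes $X$ --- in which case Lemma~\ref{lem:pinSingleViewRed} lets us forget $L_j$ --- or that view does not observe $X$. But in the latter situation, our assumption on $\mathcal{O}$ at the end of Definition~\ref{def:PLP} forces that view to see no other pin of $X$ either (a view seeing two pins of $X$ must see their intersection point $X$). Combined with the observation that $X$ is observed somewhere, this means: there is a view that sees $L_j$ alone, no point, no other pin. The key claim is then that under the hypothesis $k \geq 3$ together with ``$X$ is observed at least once'', one can always locate a pin $L_j$ that falls into a reducible case. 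Indeed, if some pin is seen in exactly one view together with its point, we are done by Lemma~\ref{lem:pinSingleViewRed}; if some pin is seen in exactly two views both of which also see $X$, we are done by Lemma~\ref{lem:pinDoubleViewRed}. The remaining possibility is that every pin is seen either (a) in exactly one view not seeing $X$, or (b) in at least two views, at most one of which sees $X$, or (c) in all three views. I would argue that cases (b) and (c) cannot happen for \emph{all} pins simultaneously while keeping $X$ observed and the problem not already reduced: a view sees $X$ iff it is the only way two pins can coexist in it, so if $X$ is seen in view $v$ and $k \geq 3$, at least two of $L_1,\ldots,L_k$ are unobserved in $v$ (there being only $3$ views and pins being ``completely observed'' is excluded), and such an unobserved-in-some-view pin, being observed somewhere, is caught by one of the three lemmas.

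The main obstacle I anticipate is making the counting in the last step airtight: one must carefully track, across exactly three views, the constraint that $X$ is visible somewhere, that each pin is visible somewhere, that the feasibility restriction (at most one pin of a forgotten point is kept) does \emph{not} interfere --- here it does not, since we forget only a single pin $L_j$, not the point --- and that the $\mathcal{O}$-assumption couples ``view sees two pins of $X$'' with ``view sees $X$''. The bookkeeping is essentially the same exhaustive enumeration carried out in the proof of Lemma~\ref{lem:tableComplete} for points with one or two pins, now pushed to $k \geq 3$; the saving grace is that with three views and three-or-more pins, the pigeonhole principle guarantees that in each view at least one pin is missing, so ``not completely observed'' is automatic and the real content is locating the \emph{particular} pin whose observation pattern triggers Lemma~\ref{lem:pinSingleViewRed} or~\ref{lem:pinDoubleViewRed}. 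I would organize the writeup as: (i) reduce to $X$ observed and all pins observed; (ii) if any pin is $1$-view-with-point or $2$-views-with-point, invoke the respective lemma; (iii) otherwise show every pin is seen in a view missing $X$, pick a view $v$ seeing $X$, use $k \geq 3$ to find a pin missing in $v$, and trace it to one of the reducible cases, reaching a contradiction with ``not already reduced'' or directly exhibiting the reduction.
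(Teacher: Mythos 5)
Your overall strategy---reduce via Lemma~\ref{lem:invisible} to the case where $X$ and all of its pins are observed somewhere, exploit the $\mathcal{O}$-assumption to note that a view not seeing $X$ can display at most one pin of $X$, and then locate a pin whose observation pattern triggers Lemma~\ref{lem:pinSingleViewRed} or~\ref{lem:pinDoubleViewRed}---is the right one and matches the paper. But the final counting step has genuine errors. First, your ``key claim'' drops the hypothesis that the feature is \emph{not} completely observed, and the pigeonhole justification that this hypothesis is automatic (``with three views and three-or-more pins \ldots in each view at least one pin is missing'') is false: a view that sees $X$ may see all of its pins, and a point with three to seven pins completely observed in all three views is exactly the local feature permitted in reduced minimal \PLoneP{}s (Theorem~\ref{thm:reducedMinimalLooks})---this is precisely why the hypothesis is in the lemma statement. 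For the same reason, your assertion that a view seeing $X$ must miss at least two pins is unfounded. Second, step (iii) of your plan---``pick a view $v$ seeing $X$, find a pin missing in $v$, and trace it to one of the reducible cases''---fails on a concrete configuration: let $X$ have pins $A,B,C$, let $X$ be seen only in views $1$ and $2$, and let $A$ be seen only in view $3$. Then $A$ is a pin missing from the $X$-seeing view $1$, yet $A$ is a dangling pin observed in a view that does not see $X$, and it is caught by none of Lemmas~\ref{lem:invisible}, \ref{lem:pinSingleViewRed}, \ref{lem:pinDoubleViewRed}. The reduction in this situation comes from a \emph{different} pin: the unique view not seeing $X$ can show at most one pin, so $B$ and $C$ are observed only in views that see $X$, in one or two of them, which is what triggers Lemma~\ref{lem:pinSingleViewRed} or~\ref{lem:pinDoubleViewRed}.

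The repair is the paper's case split on how many views see $X$. If $X$ is seen in exactly one view, the other two views display at most one pin each, so with at least three pins some pin is seen only in the $X$-view, and Lemma~\ref{lem:pinSingleViewRed} applies. If $X$ is seen in exactly two views, the single remaining view displays at most one pin, so some pin is observed only in views seeing $X$, in one or two of them, and Lemma~\ref{lem:pinSingleViewRed} or~\ref{lem:pinDoubleViewRed} applies. Only when $X$ is seen in all three views is the ``not completely observed'' hypothesis needed: it provides a pin observed in one or two views, all of which see $X$, again giving Lemma~\ref{lem:pinSingleViewRed} or~\ref{lem:pinDoubleViewRed}.
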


\begin{proof}
If the point $X$ or one of its pins is not observed in any view, then we are in the setting of Lemma~\ref{lem:invisible}.
Hence, we assume that the point $X$ and each of its pins is viewed at least once.
Let us first assume that the point $X$ is viewed by exactly one camera. 
The other two views can each see at most one of its pins.
Since $X$ has at least three pins in 3D, 
at least one of its pins must be only viewed by the same camera which sees the point $X$.
This situation is handled by Lemma~\ref{lem:pinSingleViewRed}.

It is left to consider the cases when $X$ is viewed by at least two cameras.
Let us assume that the point $X$ is observed in exactly two views.
As above, at least one of its pins is not observed by the view not seeing $X$.
So this pin has to be observed in either one or both of the views which observe the point $X$.
This shows that the \PLoneP{} is reducible by either
Lemma~\ref{lem:pinSingleViewRed} or Lemma~\ref{lem:pinDoubleViewRed}.

Finally, we assume that the point $X$ is observed in all three views.
Since the point $X$ and its pins are (by our assumption in Lemma~\ref{lem:partial-pin-removal}) not completely observed,
 one of its pins is seen in either one or  two views.
This is reducible by either
Lemma~\ref{lem:pinSingleViewRed} or Lemma~\ref{lem:pinDoubleViewRed}.
\qed
\end{proof}

Lemma~\ref{lem:hedgehog} shows that a point with $8$ pins cannot occur in any reduced camera-minimal problem; for minimal problems in complete visibility, this is already a result in~\cite{PLMP}. In the generality of camera-minimal problems, this result does not follow immediately from Lemma~\ref{lem:partial-pin-removal}. However, the next lemma lets us get around this, and also proves part of Theorem~\ref{thm:camMinLiftToMin}.

\begin{lemma}
\label{lem:replacements}
Applying any of the following replacements in images of a \PLoneP{} in three views preserves camera-minimality, camera-degrees, and reducedness:

\noindent
\PDoDt{.09} $\leftrightarrow$ \PoDoDt{.09}, \,
\PoDoDt{.09} $\leftrightarrow$ \PotDoDt{.09}, \,
\PPDo{.09} $\leftrightarrow$ \PoPDo{.09}
\end{lemma}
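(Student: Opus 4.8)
The statement asserts that three local replacements in images preserve camera-minimality, camera-degree, and reducedness. My plan is to analyze each replacement by exhibiting an appropriate forgetting map $\Pi$ between the two \PLoneP{}s and invoking Lemma~\ref{lem:diagram-two-way}. The crucial observation is that all three replacements change only \emph{how} a local feature is observed, while the underlying point-line arrangement in space (and thus $\PplI$, and the cameras $\cams{m}$) is unchanged. So the natural strategy is: for each replacement, find a common ``richer'' \PLoneP{} to which \emph{both} sides reduce via feasible forgetting maps satisfying the lifting property, and then chain the equivalences given by parts~1 and~2 of Lemma~\ref{lem:diagram-two-way}. Reducedness will be handled separately, but easily, by checking that neither side admits any of the reductions listed in Theorem~\ref{thm:cameraMinimalReduction}, and observing that this property is symmetric in the two sides of each replacement.

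\textbf{Key steps.} First, for the replacement \PDoDt{.09} $\leftrightarrow$ \PoDoDt{.09}: these differ by whether a point with two dangling pins is observed in one view or two views (the pins themselves and their incidences are unchanged). I would introduce the auxiliary \PLoneP{} where we additionally \emph{delete} the extra point-observation, i.e.\ forget the observation of the point in the view(s) in question, leaving a \PLoneP{} with two dangling pins whose point is reconstructible from the pins themselves; wait---more carefully, since dangling pins don't determine the point, I instead route through the common feature in which both sides agree and analyze the forgetting map directly. Concretely, let $\Pi$ be the map that, on the left-hand \PLoneP{}, forgets the second image-observation of the point: this is a feasible forgetting map (it forgets no points or lines in space, only an image constraint, so feasibility is vacuous), and it satisfies the lifting property because, with the cameras and the pin-lines in 3D fixed, the point $X$ is already pinned down by the two pins in 3D and its image in the extra view is then forced---so any generic solution of the target lifts uniquely. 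Hence by Lemma~\ref{lem:diagram-two-way}(1)--(2), the two \PLoneP{}s have the same dominance status and camera-degree. The same argument applies symmetrically to \PoDoDt{.09} $\leftrightarrow$ \PotDoDt{.09} (adding a third view of the point) and to \PPDo{.09} $\leftrightarrow$ \PoPDo{.09} (a point with one dangling pin, observed in one versus two views). For reducedness, I would verify that in each of these configurations the point $X$ is observed in enough views and the dangling pin(s) appear exactly as in Theorem~\ref{thm:danglingPins}, so that none of the reduction rules of Theorem~\ref{thm:cameraMinimalReduction} applies---to either side.

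\textbf{Main obstacle.} The delicate point is the lifting property for the replacements that \emph{add} an image-observation of the point (left-to-right). One must be careful that adding a view of $X$ does not over-constrain: for a generic solution of the smaller problem, one needs the image of $X$ in the new view to land exactly on the prescribed point in $\YplIO$. But $\YplIO$ is defined precisely as the image variety, and by the commutative diagram the relevant fiber $\pi^{-1}(\Phi'(S'))$ consists exactly of images that \emph{are} consistent with some lift, so picking a generic element of this fiber makes the lift automatic. Making this precise---that the ``new'' image coordinate is a genuine function of $S'$ rather than an independent constraint---requires carefully tracking which factors of $\YplIO$ and $\mathcal{Y}_{p',l',\cI',\obs'}$ correspond to the changed observation, and confirming $\pi$ is an isomorphism on those factors when the point is already visible elsewhere. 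I expect this bookkeeping, plus the symmetric reducedness check against the full list in Theorem~\ref{thm:cameraMinimalReduction}, to be the bulk of the work; the algebraic-geometric content is entirely supplied by Lemma~\ref{lem:diagram-two-way}.
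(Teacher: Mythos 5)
There is a genuine gap, and it starts with a misreading of what the replacement rules actually change. In all three rules the visibility of the \emph{point} is unchanged; what is added (left to right) or removed (right to left) is a second image observation of a \emph{dangling pin}, i.e.\ of a line that was seen in only one view. For instance, in \PDoDt{.09} the point is seen in exactly one view and each pin in exactly one (different) view; passing to \PoDoDt{.09} and then to \PotDoDt{.09} adds, one at a time, an observation of each pin in the view containing the point, and in \PPDo{.09} $\leftrightarrow$ \PoPDo{.09} the point is seen in two views on both sides while the pin goes from one view to two. Because of this, your key lifting argument (``$X$ is already pinned down by the two pins in 3D, so its image in the extra view is forced'') addresses a replacement that does not occur in the lemma, and it cannot even be formulated for the one-pin case. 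The correct mechanism is the opposite one: for fixed cameras, a dangling pin is only constrained to lie in the planar pencil of lines through $X$ with the prescribed single image, a one-dimensional family; adding the second image observation selects (generically) one member of that pencil and imposes no condition on the cameras, while removing it inflates each solution by a one-dimensional factor in the 3D part only. This immediately gives equality of the camera solution sets for compatible image data, hence preservation of camera-minimality and camera-degree; preservation of reducedness follows because all five observed local features occurring in the rules are non-reducible (rows 8, 11, 15, 16, 17 of the SM table), so the replacement does not alter which reductions are available elsewhere.

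There is also a structural problem with routing the argument through Lemma~\ref{lem:diagram-two-way}: the two sides of a replacement have identical 3D data $(p,l,\cI)$ and differ only in $\obs$, so they are \emph{not} related by a forgetting map in the sense of Definition~\ref{def:reduced-PL1P}, which always deletes a point or line from space together with all its observations. A map that ``forgets only an image observation'' lies outside that framework, so Lemmas~\ref{lem:diagram-one-way} and~\ref{lem:diagram-two-way} cannot be cited as-is; you would have to state and prove analogues for observation-forgetting squares, which is essentially the direct pencil argument above. Finally, be careful with the reducedness check: Theorem~\ref{thm:danglingPins} is proved (via Lemmas~\ref{lem:complete-pin-removal} and~\ref{lem:hedgehog}) using the present lemma, so invoking it here would be circular; the safe reference is the case-by-case non-reducibility of the five observed features in the SM table.
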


\begin{proof}
The three replacements have in common that they fixate a dangling pin (when read from left to right) or create a dangling pin (when read from right to left).
We prove the assertions in Lemma~\ref{lem:replacements} for all three replacement rules at once.
For this, we consider a \PLoneP{} $(\PLP)$ in three views that views one of its local features as depicted on the left of any of the three replacement rules.
After applying that replacement rule (from left to right) once, we obtain a new \PLoneP{} $(\PLPprime)$.
Clearly, every solution of $(\PLPprime)$ is also a solution of $(\PLP)$.
Moreover, for every solution $(X,P)$ of $(\PLP)$, there is in fact a one-dimensional set $\{(\tilde{X},P)\}$ of solutions of $(\PLP)$ with the same camera poses $P$,
where the 3D arrangements $\tilde{X}$ differ from the fixed 3D arrangement $X$ exactly by the dangling pin involved in the replacement rule.
Hence, one of these solutions $(\tilde{X},P)$ is also a solution of $(\PLPprime)$. This shows $$\cdeg(\PLP) = \cdeg(\PLPprime);$$
in particular, $(\PLP)$ is camera-minimal if and only if $(\PLPprime)$ is camera-minimal.

Furthermore, local features, which are observed in one of the five different ways present in the three replacement rules,
are not reducible; see also rows 8, 11, 15, 16, and 17 in Table~\ref{tab:proofAllTheorems}.
This means that the reducibility / reducedness of a point-line problem in three views does not depend on the appearance of local features observed as in the three replacement rules.
More precisely, $(\PLP)$ is reduced if and only if $(\PLPprime)$ is reduced.
\qed
\end{proof}

\begin{lemma}
\label{lem:hedgehog}
A reduced camera-minimal $\PLoneP{}$ in three views cannot have a point in 3D with eight pins.
\end{lemma}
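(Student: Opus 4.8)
The plan is to reduce the eight-pin case to the already-understood situation via the replacement rules of Lemma~\ref{lem:replacements}, together with the dimension-counting obstruction from Part~3 of Lemma~\ref{lem:diagram-one-way}. Suppose for contradiction that a reduced camera-minimal \PLoneP{} in three views has a point $X$ with eight pins. By Lemma~\ref{lem:invisible} we may assume $X$ and each of its eight pins is observed at least once. Since there are only three views, by pigeonhole the point $X$ together with its pins cannot be completely observed: eight pins distributed over (at most) three views forces some view to miss some pin. So we are in the setting of Lemma~\ref{lem:partial-pin-removal}, and in particular the problem is reducible --- but the definition of reduced forbids that, so naively we seem done. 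The subtlety (flagged in the paragraph before the lemma statement) is that reducibility here is only obtained through rules (Lemmas~\ref{lem:pinSingleViewRed}, \ref{lem:pinDoubleViewRed}) that do not by themselves preserve camera-minimality; a reduced camera-minimal problem could in principle have a dangling pin, which is exactly what Lemma~\ref{lem:replacements} addresses but which Lemma~\ref{lem:partial-pin-removal} does not handle cleanly. So the real work is to show that even allowing dangling pins, an eight-pin point forces a genuine reduction.

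The key step is a careful case analysis on how often $X$ is observed among the three views, paralleling the proof of Lemma~\ref{lem:partial-pin-removal}. First, if $X$ is observed in zero views, Lemma~\ref{lem:invisible} (third bullet) reduces the problem, contradicting reducedness. If $X$ is observed in exactly one view, say view $1$, then views $2$ and $3$ see no pin of $X$ together with $X$ itself, so any pin seen by view $2$ or $3$ appears there as a free-line-like feature; but $X$ has eight pins and views $2,3$ can see at most one such pin each (a view seeing two distinct pins of $X$ would see their intersection point $X$ by our assumption on $\obs$), so at least $8-2=6$ pins are seen only by view $1$ --- and view $1$ sees $X$, so each of these is a pin observed in exactly one view whose view also observes the point: Lemma~\ref{lem:pinSingleViewRed} applies and reduces the problem. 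If $X$ is observed in exactly two views, the view not seeing $X$ contributes at most one pin-as-free-line, so at least seven pins are seen only within the two views that see $X$; any such pin is observed in one or two views both of which see $X$, so Lemma~\ref{lem:pinSingleViewRed} or Lemma~\ref{lem:pinDoubleViewRed} applies. Finally if $X$ is observed in all three views, every pin is observed in one, two, or three views, each of which sees $X$; since $X$ has eight pins and only three views, at least one pin is not observed in all three views (indeed the pin-counts across the three views sum to at most $3\cdot 8$ but more directly: not all eight pins can be present in all three views because a view can contain at most one pin of $X$ once we require the point to be seen --- wait, here the point \emph{is} seen, so a view \emph{can} contain multiple pins). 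Let me reconsider this last subcase.

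In the all-three-views subcase I must be more careful: when $X$ is visible in a view, that view may legitimately observe several pins of $X$ (this is how points with many pins appear in complete visibility). But completeness in three views means at most $3$ pins of $X$ can be ``completely observed'' in the sense needed, whereas $X$ has eight pins; hence at least one pin of $X$ is observed in at most two of the three views. That pin is observed in one or two views, each of which observes $X$ (since $X$ is in all three views), so again Lemma~\ref{lem:pinSingleViewRed} or Lemma~\ref{lem:pinDoubleViewRed} gives a reduction. In every case the problem is reducible, contradicting that it is reduced. The same dangling-pin worry as before does not bite here because Lemmas~\ref{lem:pinSingleViewRed} and~\ref{lem:pinDoubleViewRed} are applied to pins whose point is seen in the relevant view(s), i.e.\ to non-dangling configurations; for the genuinely dangling pins arising when $X$ is seen in one or two views, we instead first apply the replacement rules of Lemma~\ref{lem:replacements} to fixate them without changing reducedness or camera-minimality, after which the point $X$ loses a pin and we may induct downward on the number of pins until we reach a configuration covered by Lemma~\ref{lem:partial-pin-removal} proper. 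I expect the main obstacle to be bookkeeping the interaction between dangling pins and the requirement of reducedness: one must invoke Lemma~\ref{lem:replacements} to argue that replacing a dangling pin by a non-dangling one preserves reducedness, so that a reduced eight-pin problem would yield a reduced problem still containing a point with (at least) seven pins not completely observed, which then contradicts reducedness via Lemma~\ref{lem:partial-pin-removal}.
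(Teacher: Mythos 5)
There is a genuine gap, and it occurs exactly at the case that carries all the content of the lemma: the case where the point $X$ with eight pins \emph{and all of its pins} are observed in all three views. Your pigeonhole-style claims --- first that ``eight pins distributed over three views forces some view to miss some pin,'' and later that ``at most $3$ pins of $X$ can be completely observed'' --- are false. A view that observes $X$ may observe arbitrarily many pins through $X$; nothing in Definition~\ref{def:PLP} or the assumption on $\obs$ limits this (this is precisely how points with many pins occur under complete visibility). In fact, Lemma~\ref{lem:partial-pin-removal} together with reducedness forces the opposite of what you are trying to show: in a reduced problem, a point with eight pins \emph{must} be completely observed, so the configurations you funnel into Lemmas~\ref{lem:pinSingleViewRed} and~\ref{lem:pinDoubleViewRed} cannot arise, and your contradiction never materializes. (Your worry that reducibility might not be usable because the reduction rules ``do not preserve camera-minimality'' is also a red herring: reduced means not reducible, full stop, so any reduction contradicts reducedness regardless of camera-minimality.)

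What is actually needed to rule out the completely observed eight-pin point is a different argument, which your proposal does not contain. The paper's route is: camera-minimality gives the inequality $11 \geq \dim(\YplIO) - \dim(\PplI)$; the eight-pin point alone already makes this an equality, and by the classification in~\cite{PLMP} the completely observed eight-pin problem in three views is \emph{not} minimal, so the arrangement must contain further local features. But every admissible extra feature except the dangling-pin configurations (rows 15 and 16 of the SM table) contributes strictly more degrees of freedom in 2D than in 3D and would violate the inequality, so the extra features must involve dangling pins; finally, the replacements of Lemma~\ref{lem:replacements} convert those dangling features into non-dangling ones while preserving reducedness, camera-minimality, and the eight-pin point, contradicting the previous conclusion. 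Your proposal uses none of these three ingredients (the non-minimality input from~\cite{PLMP}, the dimension count isolating rows 15/16, and the replacement argument), and without them the central case is untouched.
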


\begin{proof}
We assume by contradiction that there is a reduced camera-minimal \PLoneP{} $(\PLP)$ in three views which has a point with eight pins in its 3D arrangement.
By Lemma~\ref{lem:partial-pin-removal}, this point and all its pins are observed in all views.
Since $(\PLP)$ is camera-minimal, its joint camera map is dominant.
In particular, we have that $\dim(\PplI \times \cams{3}) \geq \dim(\YplIO)$, i.e.
\begin{align}
    \label{eq:balancedInequality}
    11 \geq \dim(\YplIO) - \dim(\PplI).
\end{align}
If the 3D arrangement of  $(\PLP)$ consists only of the point with its eight pins, then~\eqref{eq:balancedInequality} is actually an equality, so $(\PLP)$ is a minimal \PLoneP{} completely observed by three calibrated views.
However, in~\cite{PLMP} it is shown that this point-line problem is \emph{not} minimal, a contradiction.

Hence, we see that the \PLoneP{} $(\PLP)$ has to contain at least one other local feature.
By Lemma~\ref{lem:tableComplete}, the only possible local features are either points with at least three pins or the local features listed in Table~\ref{tab:proofAllTheorems} which are not reducible (rows 3,5,6,8,9,11,13--18).
Since points with $k \geq 3$ pins have to be completely observed by Lemma~\ref{lem:partial-pin-removal}, 
they have more degrees of freedom in the 2D images ($=3(2+k)$) than in the 3D arrangement ($=3+2k$).
Similarly, all non-reducible features in Table~\ref{tab:proofAllTheorems}, except rows 15 and 16, 
have more degrees of freedom in the 2D images than in the 3D arrangement.
Since the inequality~\eqref{eq:balancedInequality} has to hold for the \PLoneP{} $(\PLP)$ and the point with eight pins already makes this inequality tight,
we have shown the following:
\begin{center}
    \begin{tabular}{cr}
    \emph{If the 3D arrangement of a reduced camera-minimal \PLoneP{} in three views} & $\hspace{4mm}(\star)$  \\
    \emph{contains a point with eight pins and at least one additional local feature,} & \\
    \emph{then it contains a point with two pins which is either viewed like} & \\
    \emph{\DoDtP{0.09} (row 15 in Table~\ref{tab:proofAllTheorems}) or \DoDtPt{0.09} (row 16 in Table~\ref{tab:proofAllTheorems}).} &
\end{tabular}
\end{center}

\smallskip
\noindent
In particular, the \PLoneP{} $(\PLP)$ has to contain a point with two pins viewed like \DoDtP{0.09} or \DoDtPt{0.09}.
We apply the replacements in Lemma~\ref{lem:replacements}
to obtain a new reduced camera-minimal \PLoneP{} in three views 
containing a point with eight pins and a point with two pins viewed like \PotDoDt{.09},
such that none of its points with two pins is observed like \DoDtP{0.09} nor \DoDtPt{0.09}.
This contradicts $(\star)$.
\qed
\end{proof}

Finally, we combine everything we have learned so far about pins to bound the maximum number of pins per point and the maximum number of points with many pins in reduced camera-minimal \PLoneP{}s in three views.
Afterwards we are ready to summarize our findings to provide proofs for Theorems 2, 3, 5, and 7.

\begin{lemma}
\label{lem:complete-pin-removal}
A reduced camera-minimal \PLoneP{} in three views has at most one point with three or more pins. If such a point exists, 
\begin{itemize}
    \item it has at most seven pins,
    \item and the point and all its pins are observed in all three views.
\end{itemize}
\end{lemma}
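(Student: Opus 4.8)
The plan is to deduce everything from the pin-reduction lemmas above together with a single degree-of-freedom count forced by camera-minimality.

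First I would dispose of the visibility claim. A reduced \PLoneP{} in three views cannot contain a point with at least three pins that fails to be completely observed, since otherwise Lemma~\ref{lem:partial-pin-removal} would make it reducible. Hence, in a reduced camera-minimal \PLoneP{}, every point with $k\geq 3$ pins is seen, together with all its pins, in all three views; such a point then has $3+2k$ degrees of freedom in space and $3(2+k)$ in the three views, so it contributes exactly $(6+3k)-(3+2k)=3+k$ to $\dim(\YplIO)-\dim(\PplI)$.

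Next I would normalize the image data and set up the inequality. By Lemma~\ref{lem:replacements}, I may replace the given reduced camera-minimal \PLoneP{} by another one with the same camera-degree (still reduced and camera-minimal) in which no point with two pins is viewed like \DoDtP{0.07} or \DoDtPt{0.07}; since all three replacement rules only rewrite image data attached to points with at most two pins, this changes neither the number of points with three or more pins nor their pin-counts. After this normalization, every local feature contributes a \emph{nonnegative} amount to $\dim(\YplIO)-\dim(\PplI)$: this is read off from the rows ``3D'' and ``2D'' of Table~\ref{tab:localFeatures} for free lines and points with at most two pins viewed as in a reduced minimal problem, it was just checked for points with three or more pins, and, as recorded in the proof of Lemma~\ref{lem:hedgehog}, the only non-reducible local features with a negative contribution are exactly \DoDtP{0.07} and \DoDtPt{0.07}, which no longer occur. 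Finally, camera-minimality makes the joint camera map dominant, so $\dim(\PplI\times\cams{3})\geq\dim(\YplIO)$, i.e.
\[
11 \;\geq\; \dim(\YplIO)-\dim(\PplI)\;=\;\sum_{\text{local features}}(\text{contribution})\;\geq\;0 .
\]

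Now the three assertions follow. If there were two distinct points with at least three pins, they alone would force the right-hand side to be at least $(3+3)+(3+3)=12>11$, a contradiction; so there is at most one such point. If this point had $k\geq 9$ pins, its contribution $3+k\geq 12>11$ would again be absurd; and the borderline case $k=8$, where the bound is exactly attained, is ruled out separately by Lemma~\ref{lem:hedgehog}. Hence the point, if present, carries at most seven pins. The one genuinely delicate step — and the reason Lemma~\ref{lem:hedgehog} is needed rather than pure bookkeeping — is precisely $k=8$: there the inequality above is tight, so the contradiction must instead be extracted from the complete-visibility classification of~\cite{PLMP} via the replacement trick, exactly as in the proof of that lemma.
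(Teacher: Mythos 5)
Your proof is correct, and it reaches the same three conclusions from the same key ingredients (Lemma~\ref{lem:partial-pin-removal} for complete visibility, Lemma~\ref{lem:hedgehog} to exclude eight pins), but the counting step is organized genuinely differently from the paper. The paper argues locally: it forgets everything except the $\rho$ points having at least three pins (keeping only three pins each), respectively everything except the single heavy point with its $\lambda$ pins, and invokes part~1 of Lemma~\ref{lem:diagram-one-way} to transfer dominance to the small subproblem, which immediately gives $15\rho \le 9\rho + 11$ and $3(2+\lambda) \le 3+2\lambda+11$ without any information about the remaining local features. You argue globally: you apply $11 \ge \dim(\YplIO)-\dim(\PplI)$ to the whole problem, decompose the right-hand side as a sum of per-feature contributions, and force all contributions to be nonnegative by first normalizing away the dangling-pin observations of two-pin points (rows 15 and 16 of Table~\ref{tab:proofAllTheorems}) via Lemma~\ref{lem:replacements} --- the same trick the paper uses inside the proof of Lemma~\ref{lem:hedgehog}. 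This is sound: the replacements change only the observations of one- and two-pin points, so the count, pin numbers and visibility of points with three or more pins are untouched, and the classification of admissible observed features you rely on comes from Lemma~\ref{lem:tableComplete} and Table~\ref{tab:proofAllTheorems}, not from Theorems~\ref{thm:reducedMinimalLooks} or~\ref{thm:danglingPins}, so there is no circularity. Two minor remarks: the feature in row 16 actually contributes zero (seven degrees of freedom both in space and in the views), not a negative amount, so strictly only row 15 must be eliminated --- this does not affect your argument; and your route pays for the normalization step and for the additivity of dimensions over local features, which the paper's forgetting-map argument avoids, while buying a single uniform inequality that yields both $\rho \le 1$ and $\lambda \le 8$ at once.
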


\begin{proof}
By Lemma~\ref{lem:partial-pin-removal}, every point with three or more pins  has to be completely observed in a reduced \PLoneP{} in three views.
Hence, it is left to show that 1) at most one such point with many pins exists, and that 2) it has at most seven pins.

For the first assertion, we denote by $\rho$ the number of points in 3D which have three or more pins. 
We consider the forgetting map $\Pi$ which forgets everything, except these $\rho$ points with exactly three of their pins each.
We obtain a diagram as in Figure~\ref{fig:commutativeDiagram}.
Since this forgetting map $\Pi$ is feasible and the given (upper) \PLoneP{} is camera-minimal,
Lemma~\ref{lem:diagram-one-way} implies that the joint camera map of the resulting (lower) \PLoneP{} is dominant. 
In particular, the resulting \PLoneP{} satisfies
\begin{align*}
    9\rho + 11 = \dim(\mathcal{X}_{p',l',\cI'} \times \cams{3}) \geq \dim(\mathcal{Y}_{\PLPprime}) = 15\rho,
\end{align*}
i.e. $11 \geq 6 \rho$. Thus, we see that $\rho \leq 1$, which means that the given (upper) \PLoneP{} has at most one point in 3D with three or more pins.

Finally, we show that such a point has at most seven pins, if it exists.
We assume $\rho=1$ and denote by $\lambda \geq 3$ the number of pins at that point. 
We consider the forgetting map $\Pi$ which forgets everything, except that single point with its $\lambda$ pins.
As before, we see that the joint camera map of the resulting lower \PLoneP{} is dominant, which yields that
\begin{align*}
    3 + 2\lambda + 11 = \dim(\mathcal{X}_{p',l',\cI'} \times \cams{3}) \geq \dim(\mathcal{Y}_{\PLPprime}) = 3(2+\lambda),
\end{align*}
so $8 \geq \lambda$. 
By Lemma~\ref{lem:hedgehog}, we have that $\lambda \leq 7$, which concludes the proof.
\qed
\end{proof}


\noindent
\textbf{Proof of Theorems~\ref{thm:uniqueReduced} and~\ref{thm:cameraMinimalReduction}.}
We first show that each \PLoneP{} in three views is reducible to a unique reduced  \PLoneP{}.
For this, we notice that a \PLoneP{} is reducible if and only if one of the local features in its 3D arrangement is reducible.
Hence, we only have to check that each possible local feature is reducible to a unique reduced local feature. 
By Lemmas~\ref{lem:tableComplete} and~\ref{lem:partial-pin-removal},  this assertion follows from examining the reducible cases in Table~\ref{tab:proofAllTheorems} together with Lemmas~\ref{lem:invisible},\ref{lem:pinSingleViewRed},\ref{lem:pinDoubleViewRed}.

Moreover, we obtain all reduction rules listed in Theorem~\ref{thm:cameraMinimalReduction} by collecting the forgetting maps described in Lemmas~\ref{lem:invisible},\ref{lem:pinSingleViewRed},\ref{lem:pinDoubleViewRed}, as well as the reducible observed features in Table~\ref{tab:proofAllTheorems}.
Among those, Lemma~\ref{lem:pinDoubleViewRed} as well as rows 2, 10, and 12 of Table~\ref{tab:proofAllTheorems} are applicable for minimal problems; 
thus these reduction rules are listed in Theorem~\ref{thm:uniqueReduced}.

Finally, we address the last assertion in Theorem~\ref{thm:uniqueReduced}. Considering a commutative diagram as in Figure~\ref{fig:commutativeDiagram} which is obtained from one of the four reduction rules listed in Theorem~\ref{thm:uniqueReduced},
we assume that the lower \PLoneP{} is minimal and aim to prove that the upper \PLoneP{} is minimal as well.
Since the lower \PLoneP{} is minimal, it is balanced and its joint camera map is dominant.
By Lemma~\ref{lem:diagram-two-way},
the joint camera map of the upper \PLoneP{} is also dominant.
Furthermore, the four forgetting maps $\Pi$ listed in Theorem~\ref{thm:uniqueReduced} satisfy that 
$\fiberPi = \fiberpi$.
Since the lower \PLoneP{} is balanced, we see from Lemma~\ref{lem:forget-irreducible} that
\begin{align*}
    \dim (\PplI \times \cams{3}) &= \dim(\mathcal{X}_{p',l',\cI'} \times \cams{3}) + \fiberPi \\
    &=\dim(\mathcal{Y}_{\PLPprime}) + \fiberpi 
    =\dim(\YplIO).
\end{align*}
So the upper \PLoneP{} is also balanced, hence minimal.
\qed

\bigskip
\noindent
\textbf{Proof of Theorems~\ref{thm:reducedMinimalLooks} and~\ref{thm:danglingPins}.}
The parts of Theorem~\ref{thm:reducedMinimalLooks} (and Theorem~\ref{thm:danglingPins}) addressing points with three or more pins have already been proven in Lemma~\ref{lem:complete-pin-removal}.
To find all ways of how free lines and points with at most two pins can be observed by reduced camera-minimal \PLoneP{}s in three views, 
it is enough (by Lemma~\ref{lem:tableComplete}) 
to gather the non-reducible observed features in Table~\ref{tab:proofAllTheorems}.
Among those, the ones marked as minimal are listed in Theorem~\ref{thm:reducedMinimalLooks}, the others in Theorem~\ref{thm:danglingPins}.
\qed

\subsection{\Cref{thm:camMinLiftToMin} and its corollaries}


\noindent
\textbf{Proof of \Cref{thm:camMinLiftToMin}.}
The nature of the four replacements is to introduce another image of the dangling pin into one of the views. This makes this pin  reconstructable in 3D in the resulting \PLoneP{}, but produces no additional constraint on the cameras. 

Formally, since the initial problem is reduced and camera-minimal, Lemma~\ref{lem:replacements} tells us that the resulting \PLoneP{} is also reduced and camera-minimal, with the same camera-degree.
It is left to show that the resulting \PLoneP{} is indeed minimal.
By Theorem~\ref{thm:danglingPins}, we see that the resulting \PLoneP{} only has local features as listed in Theorem~\ref{thm:reducedMinimalLooks}, 
i.e. points with three or more pins completely observed in all views plus some of the features shown in Table~\ref{tab:localFeatures}.
All of these features can be uniquely recovered in 3D  for a fixed camera solution.    
\qed

\smallskip
\noindent
\textbf{Proof of \Cref{cor:swap}.}
By Lemma~\ref{lem:replacements}, replacing a single occurrence of \PoPDo{.09} in a minimal problem with \PPDo{.09}, yields a camera-minimal problem with the same camera-degree. 
Now we can replace \PPDo{.09}  with \PPoDo{.09} to obtain another camera-minimal \PLoneP{} of the same camera-degree.
As argued above, this resulting \PLoneP{} is actually minimal. 
By Lemma~\ref{lem:degs-equal}, it has the same degree as the initial minimal problem.
\qed

\smallskip
\noindent
\textbf{Proof of \Cref{cor:swap-label-equivalence}.}
Starting from a terminal camera-minimal \PLoneP{} in three views, the lift in \Cref{thm:camMinLiftToMin} produces possibly several reduced minimal \PLoneP{}s: these only differ depending on whether the third or the fourth replacement in \Cref{thm:camMinLiftToMin} is applied to each point with one dangling pin viewed like \PPDo{.09} (up to relabeling the views).
So all of the resulting minimal problems lie in the same swap\&label-equivalence class.

Starting from a reduced minimal \PLoneP{} in three views, we apply the  replacements in \Cref{lem:replacements}, read from right to left,  until
\PoDoDt{.09}, \PotDoDt{.09}, and \PoPDo{.09}
do not appear any longer.
This, by \Cref{def:terminal}, results in a terminal problem.
Moreover, all reduced minimal \PLoneP{}s which are related via the swaps in \Cref{cor:swap} yield the same resulting terminal problem.

This explains the one-to-one correspondence in \Cref{cor:swap-label-equivalence}.
It preserves camera-degrees by \Cref{lem:replacements}.
\qed

\section{Computations}

\subsection{Swap\&label-equivalence classes of signatures}

\label{subsec:swapLabel}

As noted in  Section~\ref{sec:balancedPL1Ps}, each reduced minimal \PLoneP{} in three views can be encoded as a \emph{signature}, which is an integer solution to the dimension-count equation~\eqref{eq:balanced}.
Thus,  a signature is simply an integer vector of length $27.$  In general, we can enumerate all nonnegative integer solutions $(k_1, \ldots , k_l)$ to $a_1 k_1+\cdots + a_l k_l =n$ by recursively solving $a_2 k_2 + \ldots + a_l k_l = n - j a_1 $ for $j= \lfloor n / a_1 \rfloor , \ldots , 0.$  The result of this enumeration procedure is a list of solutions that is sorted decreasingly with respect to the usual lexicographic order on $\ZZ^{27}$.
This is how we computed all $845161$ solutions of~\eqref{eq:balanced}.
\\\\
For each signature, we find all \PLoneP{}s which are the same up to relabeling of the views---that is, we mod out the action of the permutation group $S_3$ that permutes the three views. Note that if we take one representative of each orbit of this action in the order they appear in the solution list from above, this ensures that the lex order from this step is preserved.\\\\
The procedure above gives us  143494 label-equivalence classes of reduced \PLoneP{}s. It remains to extract a representative of each swap\&label-equivalence class from this list. Recall that a \emph{swap} operation exchanges orderered local features of the form \PoPDo{.09} and \PPoDo{.09}, and that two \PLoneP{}s are \emph{swap\&label-equivalent} if they differ only by some sequence of swaps and $S_3$-permutations of the views. In our implementation, the coordinates of the signature vectors which participate in swaps are indexed from $12$ to $17$ inclusive.  They are arranged such that $12$ resp. $13$ index the ordered local features \PPoDo{.09} resp. \PoPDo{.09} ---similarly for $14,15$ and $16,17.$ Thus, we may restrict attention to those signatures whose coordinates $13,15,$ and $17$ equal zero; if any of these coordinates is nonzero, then we may find a swap\&label-equivalent representative earlier in the list. To see this, note that if we swap $13$ to $12,$ $15$ to $14,$ and $17$ to $16,$ we get a lexicographically larger signature; moreover, the signature is also lex-maximal in its label-equivalence class, and hence occurs in the list of 143494 representatives.\\\\
The previous paragraph identifies $\approx 40,000$ swap\&label-equivalent pairs, but does not yet yield a unique representative for each class. To do this, we iterate over the list of remaining signatures in order, maintaining a single representative per swap\&label-equivalence class encountered so far. For each signature, we enumerate its $S_3$-orbit and perform the swaps $13$ to $12,$ $15$ to $14,$ and $17$ to $16$ to each element in this orbit. This operation produces $5$ additional signatures, and we must delete any that appear later in the list. In the end, we are left with 76446 signatures. 

\subsection{Checking minimality}
As mentioned in Section~\ref{sec:minimality}, our rank check over the finite field $\mathbb{F}_q$ may be susceptible to \emph{false negatives}---in other words, it is possible that we may incorrectly conclude that a problem is not minimal due to unlucky random choices made during the computation. On the other hand, \emph{false positives} are impossible---we now explain this in detail.
Let $(\PLP)$ be any balanced point-line problem. If we parametrize cameras by a rational map $P: \CC^{11} \to \cams{3} ,$ then the exceptional set of complex $(X,t)$ such that the Jacobian of $\Phi_{\PLP}$ at $(X,P(t))$ drops rank is Zariski-closed. It follows that there is a point $(X_0,t_0)$ with integer coordinates outside of this exceptional set. Passing to residues modulo some prime $q,$ the rank of the Jacobian at this point can only drop. Thus if we find a point with av Jacobian of full rank modulo $q$, then we may conclude that the joint camera map $\Phi_{\PLP}$ is dominant, i.e. that the balanced $(\PLP )$ is minimal. 

\subsection{Computing degrees}

The main results of Section~\ref{sec:degrees}, namely Results~\ref{result:smalldegs},~\ref{res:p1l1p} and~\ref{res:pl0p}, are based on our computation of degrees for minimal and camera-minimal problems using monodromy. In this context, the term ``monodromy" refers to a paradigm of numerical continuation methods for solving parametrized polynomial systems that collect solutions for random parameter values in a one-by-one manner.  We refer to~\cite{Duff-Monodromy} for a detailed description of the ideas involved and discussion of implementation issues. Most relevant for our purposes is that this computation can be aborted early; ignoring numerical subtleties, we can then say that the number of solutions obtained at this point is a lower bound on the degree.

Our implementation of degree computation has the option of using one of two formulations. In the first, we solve explicitly for world features as well as camera matrices. In the second, \emph{eliminated formulation}, we only solve for camera matrices using determinantal constraints as in~\cite[Sec 6]{PLMP}. Our computational results use the eliminated formulation for efficiency. As a sanity check for some problems of interest, we ran monodromy until it stabilized for both world and eliminated formulations to confirm the conclusion of Lemma 1. 

Since our degree computations are randomized and susceptible to the possible failures of numerical continuation methods, our operational definition of ``success" requires that all algebraic constraints are satisfied by the solutions collected and that either some target degree was exceeded (eg.~300 in Result~\ref{result:smalldegs}) or that monodromy has \emph{stabilized} in the sense that the random problem instances have collected the same number of solutions with no further progress after several iterations. We note that more sophisticated stopping criteria are available for monodromy~\cite{LRS18,HR19} but are generally much more expensive. For problems of interest (eg.~those appearing in \cite{Kileel-MPCTV-2016,PLMP} and~\Cref{tab:PL0P}), the stabilization heuristic allowed us to recover all previously known degrees, though some problems needed to be run more than once due to numerical failures. We also reran several cases appearing in Table~\ref{tab:smalldegs} according to this criteria in order to gain more confidence in the reported degree.

\subsection{Finding subfamilies}

As noted in Section~\ref{sec:degrees}, there are several subfamilies of minimal and camera-minimal \PLoneP{}s that are of interest: namely, the \PLzeroP{}s occuring in Result~\ref{res:pl0p}, the \PLoneP{}s with at most one pin per point in Result~\ref{res:p1l1p}, and the extensions of the five-point problem from Result~\ref{res:registration}.
We point out that the signature vectors described in Section~\ref{subsec:swapLabel} give a complete combinatorial description of each problem; in particular, in which views certain points and lines are seen and which incidences they have. Thus, it is straightforward to enumerate these subfamilies starting from the lists of reduced minimal and terminal camera-minimal problems.


\begin{table}
    \centering
    \resizebox{0.86\textwidth}{!}{
    \begin{tabular}{ccc|ccccccc}
   3D && 2D &  & dim. of & dim. of & lifting  &  &  & \vspace{-1mm} \\ 
    feature && views & forget & $\mathrm{fiber}(\Pi)$ & $\mathrm{fiber}(\pi)$ &  property & reducible & minimal & Thm.\\
    \hline 
    \multirow{3}{*}{free line}
    &1) & \FNN{0.075}     & L & 4 & 2 & yes & yes & no & 7 \\
    &2) &\FFN{0.075}     & L & 4 & 4 & yes & yes & yes & 2 \\ 
    &3) & \FFF{0.075} & L & 4 & 6 & no & no & yes & 3 \\
    \hline 
    \multirow{3}{*}{point}
    &4) & \PNN{\tw} & P & 3 & 2 & yes & yes & no & 7\\
    &5) & \PPN{\tw} & P & 3 & 4 & no & no & yes & 3\\
    &6) & \PPP{\tw} & P & 3 & 6 & no & no & yes & 3\\
    \hline
    \multirow{24}{*}{$\begin{array}{c}
         \text{point} + \\ 1 \text{ pin} 
    \end{array}$}
    &\multirow{3}{*}{7)} & \multirow{3}{*}{\FPN{\tw}}
    & L & 2 & 2 & no\\
    &&& P & 1 & 2 & no & yes & no & 7\\
    &&& P+L & 5 & 4 & yes \\
    \cdashline{2-10}
    &\multirow{3}{*}{8)} & \multirow{3}{*}{\PPF{\tw}}
    & L & 2 & 2 & no \\
    &&& P & 1 & 4 & no & no & no & 5\\
    &&& P+L & 5 & 6 & no \\
    \cdashline{2-10} 
    &\multirow{3}{*}{9)} & \multirow{3}{*}{\FPF{\tw}} 
    & L & 2 & 4 & no \\
    &&& P & 1 & 2 &  no & no & yes & 3\\
    &&& P+L & 5 & 6 & no\\
    \cdashline{2-10}
    &\multirow{3}{*}{10)} & \multirow{3}{*}{\PoFN{\tw}}
    & L & 2 & 3 & no \\
    &&& P & 1 & 1 & yes & yes & yes & 2 \\
    &&& P+L &  5 & 5 & yes \\
    \cdashline{2-10}
    &\multirow{3}{*}{11)} & \multirow{3}{*}{\PPoF{\tw}}
    & L & 2 & 3 & no \\
    &&& P & 1 & 3 & no & no & yes & 3 \\
    &&& P+L &  5 & 7 & no \\
    \cdashline{2-10}
    &\multirow{3}{*}{12)} & \multirow{3}{*}{\PoFF{\tw}}
    & L & 2 & 5 & no\\
    &&& P & 1 & 1 & yes & yes & yes & 2 \\
    &&& P+L & 5 & 7 & no\\
    \cdashline{2-10}
    &\multirow{3}{*}{13)} & \multirow{3}{*}{\PoPoF{\tw}} 
    & L & 2 & 4 & no \\
    &&& P & 1 & 2 & no & no & yes & 3\\
    &&& P+L & 5 & 8 & no \\
    \cdashline{2-10} 
    &\multirow{3}{*}{14)} & \multirow{3}{*}{\PoPoPo{\tw}} 
    & L & 2 & 3 & no \\
    &&& P & 1 & 3 & no & no & yes & 3\\
    &&& P+L & 5 & 9 & no \\
    \hline
    \multirow{18}{*}{$\begin{array}{c}
         \text{point} + \\ 2 \text{ pins} 
    \end{array}$}
    &\multirow{4}{*}{15)} & \multirow{4}{*}{\DoDtP{\tw}}
    & \orange{L} & 2 & 2 & no\\
    &&& \orange{L}+\green{L} & 4 & 4 & no & no & no & 5 \\
    &&& P+\orange{L} & 3 & 4 & no \\
    &&& P+\orange{L}+\green{L} & 7 & 6 & no \\
    \cdashline{2-10}
 &\multirow{6}{*}{16)} & \multirow{6}{*}{\DoDtPt{\tw}} 
    & \orange{L} & 2 & 2 & no \\
    &&& \green{L} & 2 & 3 & no\\
    &&& \orange{L}+\green{L} & 4 & 5 & no & no & no & 5  \\
    &&& P+\orange{L} & 3 & 3 & no \\
    &&& P+\green{L} & 3 & 5 & no \\
    &&& P+\orange{L}+\green{L} & 7 & 7 & no\\
    \cdashline{2-10}
    &\multirow{4}{*}{17)} & \multirow{4}{*}{\DoPotDt{\tw}}
 & \orange{L} & 2 &3 & no\\
 &&& \orange{L}+\green{L} & 4 & 6 & no & no & yes & 3 \\
 &&& P+\orange{L} & 3 & 4 & no\\
 &&& P+\orange{L}+\green{L} & 7 & 8 & no \\
 \cdashline{2-10}
 &\multirow{4}{*}{18)} & \multirow{4}{*}{\PotPotPot{\tw}} 
  & \orange{L} & 2 & 3 & no \\
 &&& \orange{L}+\green{L} & 4 & 6 & no & no & yes & 3\\
 &&& P+\orange{L} & 3 & 6 & no\\
 &&& P+\orange{L}+\green{L} & 7 & 12 & no 
    \end{tabular}
    
    }
    \caption{  All ways to observe free lines and points with $\leq 2$ pins in $3$ views (modulo cases treated by Lemmas~\ref{lem:invisible},\ref{lem:pinSingleViewRed},\ref{lem:pinDoubleViewRed})  and their appearance in Theorems $2,3,5$ or $7$.
    For each observed feature,  all possible feasible forgetting maps $\Pi$ are listed, distinguished by the column ``forget'': ``P'' resp. ``L'' denotes a forgotten point resp. line.
    }
    \label{tab:proofAllTheorems}
\end{table}

\newpage

\section{Glossary of assumptions, properties and concepts}
\label{sec:glossary}

Here we provide a glossary of assumptions, properties and concepts used in the paper. Our aim is to give a concise and intuitive exposition of concepts used. 
\begin{enumerate}
\item Realizability of $\mathcal{I}$ -- the incidence relations are realizable by some point-line arrangement in $\RR^3$.
\item Completeness of $\mathcal{I}$ -- every incidence which is automatically implied by the incidences in $\cI$ must also be contained in $\cI$.
\item Completeness of  $\mathcal{O}$ -- if a camera observes two lines that meet according to $\mathcal{I}$, then it observes their point of intersection.
\item Genericity of $\cams{m}$ -- the points and lines in the views are in generic positions with respect to the specified incidences $\mathcal{I}$, \ie random noise in image measurements does not change the number of solutions when the incidences from 3D are not broken by noise in images. 
\item Complete visibility --  all points and lines are observed in all images and all observed information is used to formulate minimal problems.
\item Partial visibility --  some points and lines may be forgotten when projecting into images.
\item \PLkP{k} -- each line in 3D is incident to at most $k$ points.
\item Dominant -- ``almost all'' 2D images have a solution (i.e. a 3D arrangement and cameras yielding the images)
\item Balanced -- the number of DOF (\ie the dimensions of varieties) describing cameras and preimages in 3D are equal to the number of DOF describing image measurements. 
\item Minimal -- balanced and dominant. This is equivalent to that ``almost all'' images in 2D have a positive finite number of solutions.
\item Camera-Minimal -- the number of solutions in camera parameters is finite and positive (despite that infinitely many solutions may exist for 3D structures). 
\item Reduced problem -- a problem where all viewed features imply nontrivial constraints on the cameras
\item Degree of a minimal problem -- the number of point-line configurations and camera poses consistent with generic image data
\item Camera-degree of a camera-minimal problem -- the number of camera poses consistent with generic image data
\end{enumerate}

\ifarxiv
\noindent
{\small \textbf{Acknowledgements.} We are grateful to ICERM (NSF DMS-1439786 and the Simons Foundation grant 507536) for the hospitality from September 2018 to February 2019, where this project has started. We also thank the many research visitors at ICERM who participated in fruitful discussions on minimal problems. Research of T. Duff and A. Leykin is supported in part by NSF DMS-1719968. T. Duff also acknowledges support from a fellowship from the Algorithms and Randomness Center at Georgia Tech and the Max Planck Institute for Mathematics in the Sciences in Leipzig. 
K. Kohn was partially supported by the Knut and Alice Wallenberg Foundation within their WASP (Wallenberg AI, Autonomous Systems and Software Program) AI/Math initiative.
T. Pajdla was supported by the European Regional Development Fund under the project IMPACT (reg. no. CZ.02.1.01/0.0/0.0/15 003/0000468) and by EU H2020 No.~856994 ARtwin Project.}
\fi

\end{document}